\newtheorem{thm}{Theorem}
\newtheorem{assump}{Assumption}
\DeclareMathOperator*{\argminA}{arg\,min}    
\let\cite\citep
\begin{document}
	
	\title{On the Global Convergence of Natural Actor-Critic with Two-layer Neural Network Parametrization}
	
	\author{\name Mudit Gaur\email mgaur@purdue.eduu \\
		\name Amrit Singh Bedi\email amritbd@umd.edu \\
			\name Di Wang\email di.wang@kaust.edu.sa \\
		\name Vaneet Aggarwal\email{vaneet@purdue.edu}\\
	} 
\editor{}

\maketitle

\begin{abstract} \label{abstract}
Actor-critic algorithms have shown remarkable success in solving state-of-the-art decision-making problems. However, despite their empirical effectiveness, their theoretical underpinnings remain relatively unexplored, especially with neural network parametrization. In this paper, we delve into the study of a natural actor-critic algorithm that utilizes neural networks to represent the critic. Our aim is to establish sample complexity guarantees for this algorithm, achieving a deeper understanding of its performance characteristics. To achieve that, we propose a Natural Actor-Critic algorithm with 2-Layer critic parametrization (NAC2L). Our approach involves estimating the $Q$-function in each iteration through a convex optimization problem. We establish that our proposed approach attains a sample complexity of $\tilde{\mathcal{O}}\left(\frac{1}{\epsilon^{4}(1-\gamma)^{4}}\right)$. In contrast, the existing sample complexity results in the literature only hold for a tabular or linear MDP. Our result, on the other hand, holds for countable state spaces and does not require a linear or low-rank structure on the MDP.




\end{abstract}

\section{Introduction} \label{introduction}

\textbf{Motivation.} The use of neural networks in Actor-critic (AC) algorithms is widespread in various machine learning applications, such as games \citep{DBLP:journals/corr/abs-1708-04782,Haliem2021LearningMG}, robotics \citep{morgan2021model}, autonomous driving \citep{9351818}, ride-sharing \citep{zheng2022stochastic}, networking \citep{geng2020multi}, and recommender systems \citep{li2020video}. AC algorithms sequentially update the estimate of the actor (policy) and the critic (value function) based on the data collected at each iteration, as described in \citep{konda1999actor}. An empirical and theoretical improvement of the AC, known as Natural Actor-Critic (NAC), was proposed in \citep{peters2008natural}. NAC replaced the stochastic gradient step of the actor with the natural gradient descent step described in \citep{kakade2001natural} based on the theory of \citep{rattray1998natural}. Despite its widespread use in state-of-the-art reinforcement learning (RL) implementations, there are no finite-time sample complexity results available in the literature for a setup where neural networks (NNs) are used to represent the critic. Such results provide estimates of the required data to achieve a specified accuracy in a loss function, typically in terms of the difference between the average reward obtained in following the policy obtained through the algorithm being analyzed and the optimal policy.

\textbf{Challenges.} The primary challenge in obtaining sample complexity for a NAC is that the critic loss function with NN parametrization becomes non-convex, making finite-time sample guarantees impossible. Although recent work in \citep{fu2020single} obtains upper bounds on the error in estimating the optimal action-value function in NN critic settings, these upper bounds are asymptotic and depend on the NN parameters. Similarly, other asymptotic sample complexity results, such as those in \citep{kakade2001natural,konda1999actor,bhatnagar2010actor}, require i.i.d. sampling of the state action pairs at each iteration of the algorithm. While sample complexity results for linear MDP structures have been derived in \citep{wu2020finite,xu2020improving} (see Related work), the structure allows for a closed-form critic update. The relaxation of the linear MDP assumption requires a critic parametrization to learn the Q-function, limiting the ability to find globally optimal policies due to non-global optimality of critic parameters.

\textbf{Approach.} As the optimal critic network update is not available for general MDPs, a parametrization of the critic network is used. To address the challenge of optimizing the critic parameters, we assume a two-layer neural network structure for the critic parametrization. Recent studies have shown that the parameter optimization for two-layer ReLU neural networks can be transformed into an equivalent convex program, which is computationally tractable and solvable exactly \cite{pmlr-v119-pilanci20a}. Convex formulations for convolutions and deeper models have also been investigated \cite{sahiner2020vector,sahiner2020convex}. In this work, we use these approaches to estimate the parameterized Q-function in the critic. In the policy gradient analysis, the error caused by the lack of knowledge of the exact gradient term is assumed to be upper bounded by a constant (see related work for details), where the sample complexity of this term is not accounted due to no explicit critic estimation. This paper considers the sample complexity of the critic function to derive the overall sample complexity of the natural actor-critic algorithm. Resolving the optimality of the critic parameter update, this paper considers the following question:

\if 0
The problem of the non-convexity of the critic can be avoided if we use the Natural Policy Gradient algorithm \citep{kakade2001natural} where instead of maintaining a parameterized estimate of the critic, we obtain an estimate (or multiple estimates) at each iteration through a Monte Carlo estimate. In such a case sample complexity estimates are possible without the assumption of linear function approximation of the value function. \citep{agarwal2020optimality} obtained a sample complexity bound of $\tilde{\mathcal{O}}\left(\frac{1}{\epsilon^{4}(1-\gamma)^{8}}\right)$, which was improved to $\tilde{\mathcal{O}}\left(\frac{1}{\epsilon^{2}(1-\gamma)^{7}}\right)$ in \cite{yuan2022linear} with the restriction of the actor being represented by a log-linear class of functions. Further improvement was obtained in \citep{liu2020improved} with a sample complexity of $\tilde{\mathcal{O}}\left(\frac{1}{\epsilon^{3}(1-\gamma)^{6}}\right)$ and also did not require the restriction to log-linear class of functions to represent the actor. In spite of obtaining finite time sample complexity bounds, the Natural Policy Gradient algorithms suffer from high variance due to the Monte Carlo estimate. Additionally, each estimate of the critic requires on average a sample of size $\left(\frac{1}{1-\gamma}\right)$, thus these algorithms are not sample efficient in terms of $\gamma$. Additionally the error incurred due to the Monte Carlo sampling of the critic as well as the lack of expressability of the neural network representing the policy function is represented as a constant. AC algorithms are also shown to be empirically better in works such as \citep{wang2016sample}.  Hence, we ask this question
\fi 

{\centering{\emph{Is it possible to obtain global convergence sample complexity results of Natural Actor-Critic algorithm with two-layer neural network parametrization of critic?}}}

\textbf{Contributions.} 
%
%
%
In this work, we provide an affirmation to the above-mentioned question. We summarize our contributions as follows. 

\begin{itemize}

\item We propose a novel actor-critic algorithm, employing a general parametrization for the actor and two-layer ReLU neural network parametrization for the critic. 

\item Building upon the insights presented in \citep{agarwal2020optimality}, we leverage the inherent smoothness property of the actor parametrization to derive an upper bound on the estimation error of the optimal value function. This upper bound is expressed in terms of the error incurred in attaining the compatible function approximation term, as elucidated in \citep{DBLP:conf/nips/SuttonMSM99}. Our analysis dissects this error into distinct components, each of which is individually bounded from above.

\item To address the estimation error arising from the critic function, we leverage the interesting findings of \citep{wang2021hidden,pmlr-v119-pilanci20a}, converting the problem of finding critic function parameters into a convex optimization problem. This allows us to establish a finite-time sample complexity bound, a significant improvement over previous results obtained for NAC algorithms. Notably, our approach eliminates the reliance on a linear function approximation for the critic, presenting the first finite-time sample complexity result for NAC without such an approximation.
\end{itemize}

\section{Related Works} \label{Related Works}

\textbf{Natural Actor-Critic (NAC).} Actor critic methods, first conceptualised in \cite{sutton1988learning}, aim to combine the benefits of the policy gradient methods and $Q$-learning based methods. The policy gradient step in these methods is replaced by a Natural Policy Gradient proposed in \citep{kakade2001natural} to obtain the so-called Natural Actor Critic in \citep{peters2005natural}. Sample complexity results for Actor Critic were first obtained for MDP with finite states and actions in \citep{williams1990mathematical}, and more recently in \citep{lan2023policy,zhang2020provably}. Finite time convergence for natural actor critic using a linear MDP assumption has been obtained in \citep{chen2022finite,khodadadian2021finite,xu2020non} with the best known sample complexity of $\tilde{\mathcal{O}}\left(\frac{1}{{\epsilon}^{2}(1-\gamma)^{4}}\right)$  \citep{xu2020improving}. Finite time sample complexity results are however, not available for Natural Actor Critic setups for general MDP where neural networks are used to represent the critic.  \citep{fu2020single} obtained asymptotic upper bounds on the estimation error for a natural actor critic algorithm where a neural network is used to represent the critic. The key related works here are summarized in Table \ref{tbl_related}. 

 \textbf{Sample Complexity of PG Algorithms.} Policy Gradient methods (PG) were first introduced in \citep{DBLP:conf/nips/SuttonMSM99}, with Natural Policy Gradients in \citep{kakade2001natural}. Sample complexity were obtained in tabular setups in  \citep{even2009online} and then improved in \citep{agarwal2020optimality}. Using linear dynamics assumption convergence guarantees were obtained in  \citep{fazel2018global}, \citep{dean2020sample}, \citep{bhandari2019global} . \citep{zhang2020global} obtains sample complexity for convergence to second order stationary point policies. In order to improve the performance of Policy Gradient methods, techniques such as REINFORCE \cite{williams1992simple},  function approximation \citep{sutton1999policy}, importance weights \citep{papini2018stochastic}, adaptive batch sizes \citep{papini2017adaptive}, Adaptive Trust Region Policy Optimization \citep{shani2020adaptive} have been used. 
We note that the Natural Policy Gradient approach has been shown to achieve a sample complexity of $\tilde{\mathcal{O}}\left(\frac{1}{\epsilon^{3}(1-\gamma)^{6}}\right)$ \citep{liu2020improved}. However, they assume that the error incurred due to the lack of knowledge of the exact gradient term is assumed to be upper bounded by a constant (Assumption 4.4, \citep{liu2020improved}). Additionally, each estimate of the critic requires on average a sample of size $\left(\frac{1}{1-\gamma}\right)$ and the obtained estimate is only known to be an unbiased estimate of the critic, with no error bounds provided. Explicit dependence of the constant of (Assumption 4.4, \citep{liu2020improved}) in terms of number of samples is not considered in their sample complexity results. For the Natural Actor Critic analysis in this paper, we provide explicit dependence of the error in estimating the gradient update and incorporate the samples required for the estimation in the sample complexity analysis. We describe this difference in detail in Appendix \ref{Comp_npg}.

\begin{table}[t]
\caption{\small This table summarizes the sample complexities of different Natural Actor-Critic Algorithms. We note that we present the first finite time sample complexity results for general MDPs where  both actor and critic are  parametrized by a neural network.}
\label{tbl_related}{%
\begin{tabular}{|c|c|c|c|c|c|}
\hline
References & \multicolumn{1}{c|}{\begin{tabular}[c]{@{}c@{}}Actor \\ parametrization\end{tabular}} & \multicolumn{1}{c|}{\begin{tabular}[c]{@{}c@{}}Critic \\ parametrization\end{tabular}}    & \multicolumn{1}{c|}{\begin{tabular}[c]{@{}c@{}}MDP \\ Assumption\end{tabular}}  & \multicolumn{1}{c|}{\begin{tabular}[c]{@{}c@{}}Sample \\ Complexity\end{tabular}} \\ \hline
\citep{williams1990mathematical}           &        Tabular                                                                               &         Tabular     & Finite                                                                               &                Asymptotic                                                                   \\ \hline
 \citep{borkar1997actor}          &                  Tabular                                                                     &                 Tabular                                                                            &   Finite &                   Asymptotic                                                             \\ \hline
     \citep{xu2020non}     &                     General                                                                &                      None (Closed Form)                                                                     &    Linear &                    $\tilde{\mathcal{O}}(\epsilon^{-4}(1-\gamma)^{-9})$                                                           \\ \hline
    \citep{khodadadian2021finite}     &                      General                                                                &                      None (Closed Form)                                                                     &    Linear &                    $\tilde{\mathcal{O}}(\epsilon^{-3}(1-\gamma)^{-8})$                                                           \\ \hline
 \citep{xu2020improving}          &       General                                                                                &                        None (Closed Form)          & Linear                                                         &              $\tilde{\mathcal{O}}(\epsilon^{-2}(1-\gamma)^{-4})$                                                                        \\ \hline
\citep{fu2020single}  &                    Neural Network                                                                   &    Neural Network & None                                                                  & Asymptotic                                                                       \\ \hline
This work  &                   General                                                                   &     Neural Network (2-layer) & None                                                               & $\tilde{\mathcal{O}}(\epsilon^{-4}(1-\gamma)^{-4})$                                                                       \\ \hline
\end{tabular}
}
\end{table}

\section{Problem Setup} \label{Problem Setup}

We consider a discounted Markov Decision Process (MDP) given by the tuple ${\mathcal{M}}:=(\mathcal{S}, \mathcal{A}, P, R, \gamma)$, where $\mathcal{S}$ is a bounded measurable state space, $\mathcal{A}$ is the finite set of actions. ${P}:\mathcal{S}\times\mathcal{A} \rightarrow \mathcal{P}(\mathcal{S})$  is the probability transition kernel\footnote{
For a measurable set $\mathcal{X}$, let $\mathcal{P}(\mathcal{X})$ denote the set of all probability measures over $\mathcal{X}$.
}, 
${R}: \mathcal{S}\times\mathcal{A} \rightarrow \mathcal{P}([0,R_{\max}])$ is the reward kernel on the state action space with $R_{\max}$ being the absolute value of the maximum reward, and $0<\gamma<1$ is the discount factor. A policy $\pi:\mathcal{S} \rightarrow \mathcal{P}(\mathcal{A})$ maps a state to a probability distribution over the action space. 
 Here, we denote by $\mathcal{P}(\mathcal{S}), \mathcal{P}(\mathcal{A}), \mathcal{P}([a,b])$, the set of all probability distributions over the state space, the action space, and a closed interval $[a,b]$, respectively. With the above notation, we define the action value function for a given policy $\pi$ as 
 \begin{align}
     Q^{\pi}(s,a) = \mathbb{E}\left[\sum_{t=0}^{\infty}\gamma^{t}r(s_{t},a_{t})|s_{0}=s,a_{0}=a\right], \label{ps_1}
 \end{align}
 where  $r(s_{t},a_{t}) \sim R(\cdot|s_{t},a_{t})$, $a_{t} \sim \pi(\cdot|s_{t})$ and $s_{t+1} \sim P(\cdot|s_{t},a_{t})$ for $t=\{0,\cdots,\infty\}$. For a discounted MDP, we  define the optimal
action value functions as $Q^{*}(s,a) =  \sup_{\pi}Q^{\pi}(s,a),  \hspace{0.5cm} \forall (s,a) \in  \mathcal{S}\times\mathcal{A} \label{ps_2}$. A policy that achieves the optimal  action value functions is known as the  \textit{optimal policy} and is denoted as $\pi^{*}$. It holds that $\pi^{*}$ is the greedy policy with respect to $Q^{*}$ \citep{10.5555/1512940}. Hence finding  $Q^{*}$ is sufficient to obtain the optimal policy. In a similar manner, we can define the value function as $V^{\pi}(s) = \mathbb{E}\left[\sum_{t=0}^{\infty}\gamma^{t}r'(s_{t},a_{t})|s_{0}=s\right], \label{ps_8}$ and from the definition in \eqref{ps_1}, it holds that  $V^{\pi}(s) = \mathbb{E}_{a\sim\pi}\left[Q^{\pi}(s,a) \right]$. Hence, we can define the optimal value function as $V^{*}(s) =  \sup_{\pi}V^{\pi}(s),  \hspace{0.5cm} \forall s \in  \mathcal{S}\label{ps_9_1}$. We define $\rho_{\pi}(s)$ as the stationary state distribution induced by the policy $\pi$ starting at state $s$ and $\zeta_{\pi}(s,a)$ is the corresponding stationary state action distribution defined as $\zeta_{\pi}(s,a)=\rho_{\pi}(s).\pi(a|s)$.
We overload notation to define $V^{\pi}(\rho) = \mathbb{E}_{s_{0} \sim \rho}[V^{\pi}(s_{0})]$, where $\rho$ is an initial state distribution.
We can define the visitation distribution as   $ d^{\pi}(s_{0}) = (1-\gamma)\sum_{t=0}^{\infty} {\gamma}^{t}Pr^{\pi}(s_{t}=s|s_{o})$. Here $Pr^{\pi}(s_{t}=s|s_{o})$ denotes the probability the state at time $t$ is $s$ given a starting state of $s_{o}$. Hence, we can write $d^{\pi}_{\rho}(s) = \mathbb{E}_{s_{o} \sim \rho}[d^{\pi}(s_{0})]$.

We now describe our Neural Actor-Critic (NAC) algorithm.  In a natural policy gradient algorithm \citep{NIPS2001_4b86abe4}, the policy is parameterized as $\{\pi_{\lambda} , \lambda \in \Lambda\}$. We have $K$ total iterations of the algorithm. At iteration $k$, the policy parameters are updated using a gradient descent step of the form 
\begin{eqnarray}
    \lambda_{k+1} = \lambda_{k} + {\eta}F^{\dagger}_{\rho}(\lambda){\nabla}_{\lambda}V^{\pi_{\lambda}}(\rho), \label{alg_eq_2}
\end{eqnarray}
From the policy gradient theorem \citep{DBLP:conf/nips/SuttonMSM99} we have 
\begin{eqnarray}
   {\nabla}_{\lambda_{k}}V^{\pi_{\lambda_{k}}}(\rho) &=& \mathbb{E}_{s,a}(\nabla{\log(\pi_{\lambda_{k}})(a|s)}Q^{\pi_{\lambda_{k}}}(s,a)) \label{alg_eq_3_1}, \\
    F^{\dagger}_{\rho}(\lambda_{k}) &=& \mathbb{E}_{s,a} \left[{\nabla}\log{\pi_{\lambda_{k}}(a|s)}\left({\nabla_{t}}\log{\pi_{\lambda_{k}}(a|s)}\right)^{T}\right], \label{alg_eq_3}
\end{eqnarray}
where $s \sim d^{\pi_{\lambda_{k}}}_{\rho}, a \sim \pi_{\lambda_{k}}(.|s)$. From \cite{DBLP:conf/nips/SuttonMSM99}, the principle of compatible function approximation implies that we have  
\begin{eqnarray}
    F^{\dagger}_{\rho}(\lambda_{k}){\nabla}_{\lambda_{k}}V^{\pi_{\lambda_{k}}}(\rho) &=& \frac{1}{1-\gamma}w_{k}^{*} \\
    w_{k}^{*} &=& \argminA_{w} \mathbb{E}_{s,a} (A^{\pi_{\lambda_{k}}}(s,a) - w{{\nabla}_{\lambda_{k}}\log(\pi_{\lambda_{k}}(a|s))})^{2}, \label{alg_eq_5}
\end{eqnarray}
and $s \sim d^{\pi_{\lambda_{k}}}_{\rho}, a \sim \pi_{\lambda_{k}}(.|s)$
Here $(A^{\pi_{\lambda_{k}}}(s,a) = Q^{\pi_{\lambda_{k}}}(s,a) - V^{\pi_{\lambda_{k}}}(s))$. For natural policy gradient algorithms such as in \citep{agarwal2020optimality} and \citep{liu2020improved} an estimate of $Q^{\pi_{\lambda_{k}}}$ (and from that an estimate of $A^{\pi_{\lambda_{k}}}(s,a)$) is obtained through a sampling procedure that requires on average $\left(\frac{1}{1-\gamma}\right)$ for each sample of $Q^{\pi_{\lambda_{k}}}$.  For the natural actor critic setup we maintain a parameterised estimate of the $Q$-function is updated at each step and is used to approximate $Q^{\pi_{\lambda_{k}}}$. 

\citep{xu2020improving} and  \citep{wu2020finite} assume that the $Q$ function can be represented as $Q^{\lambda_{k}}(s,a) = w^{t}(\phi(s,a))$ where $\phi(s,a)$ is a known feature vector and $w^{t}$ is a parameter that is updated in what is known as the critic step. The policy gradient step is known as the actor update step.  In case a neural network is used to represent the $Q$ function, at each iteration $k$ of the algorithm, an estimate of its parameters are obtained by solving an optimization of the form 
\begin{eqnarray}
    \argminA_{\theta \in \Theta} \mathbb{E}_{s,a}({Q^{\pi_{\lambda_{k}}}-Q_{\theta}})^{2}, \label{alg_eq_6}
\end{eqnarray}
Due to the non-convex activation functions of a neural network the optimization in Equation \eqref{alg_eq_6} is non-convex and hence finite sample guarantees for actor critic algorithms with neural network representation of $Q$ functions is not possible. Thus in order to perform a finite time analysis, a linear MDP structure is assumed which is very restrictive and not practical for large state space problems. For a Natural Policy Gradient setup, a Monte Carlo sample is obtained, which makes the process require additional samples as causing high variance in critic estimation.






\section{Proposed Approach} \label{Proposed Algorithm}
\subsection{Convex Reformulation of 2 layer Neural Network} \label{convex_reform}
 
We represent the $Q$ function (critic) using a 2 layer ReLU neural network. A 2-layer ReLU Neural Network with input $x \in \mathbb{R}^{d}$ is defined as $f(x) = \sum_{i=1}^{m}\sigma'(x^{T}u_{i})\alpha_{i} \label{ReLU_0}$, where $m \geq 1$ is the number of neurons in the neural network, the parameter space is $\Theta_{m}=\mathbb{R}^{d\times m} \times \mathbb{R}^{m}$ and  $\theta =(U,\alpha)$ is an element of the parameter space, where $u_{i}$ is the $i^{th}$ column of $U$, and $\alpha_{i}$ is the $i^{th}$ coefficient of $\alpha$. The function $\sigma': \mathbb{R} \rightarrow \mathbb{R}_{\ge 0}$ is the ReLU or restricted linear function defined as     $\sigma'(x) \triangleq\max(x,0)$. In order to obtain parameter $\theta$ for a given set of data $X \in \mathbb{R}^{n \times d}$  and the corresponding response values $y \in \mathbb{R}^{n \times 1}$, we desire the parameter that minimizes the squared loss, given by 
\begin{align}
\mathcal{L}(\theta) =\argminA_{\theta} \Bigg\|\sum_{i=1}^{m}\sigma(Xu_{i})\alpha_{i}- y\Bigg\|_{2}^{2}. \label{ReLU_1}
\end{align}
In \eqref{ReLU_1}, we have the term $\sigma(Xu_{i})$ which is a vector $\{\sigma'((x_{j})^{T}u_{i})\}_{j \in \{1,\cdots, n\}}$, where $x_{j}$ is the $j^{th}$ row of $X$. It is the ReLU function applied to each element of the vector $Xu_{i}$. We note that the optimization in Equation \eqref{ReLU_1} is non-convex in $\theta$ due to the presence of the ReLU activation function. In \cite{wang2021hidden}, it is shown that this optimization problem has  an equivalent convex form, provided that the number of neurons $m$ goes above a certain threshold value. This convex problem is obtained by replacing the ReLU functions in the optimization problem with equivalent diagonal operators.
The convex problem is given as
\begin{align}
     \mathcal{L}^{'}_{\beta}(p) :=  \argminA_{p} \Bigg\|\sum_{D_{i} \in D_{X}}D_{i}(Xp_{i}) - y\Bigg\|^{2}_{2},\label{ReLU_1_0}
\end{align}
where $p \in \mathbb{R}^{d \times |D_{X}|}$. 
$D_{X}$ is the set of diagonal matrices $D_{i}$ which depend on the data-set $X$. Except for cases of $X$ being low rank, it is not computationally feasible to obtain the set $D_{X}$. We instead use $\tilde{D} \in D_{X}$ to solve the convex problem in \eqref{ReLU_1_0} 
where $p$ now would lie in $p \in \mathbb{R}^{d \times |\tilde{D}|}$. 
The relevant details of the formulation and the definition of the diagonal matrices $D_{i}$  are provided in Appendix \ref{cones_apdx}.
For a set of parameters $\theta =(u,\alpha) \in \Theta$, we denote neural network represented by these parameters as 

\begin{eqnarray}
    Q_{\theta}(s,a)=\sum_{i=1}^{m}\sigma'((s,a)^{T}u_{i})\alpha_{i}. \label{ReLU_1_2}
\end{eqnarray}

\subsection{Proposed Natural Actor Critic Algorithm with 2-Layer Critic Parametrization (NAC2L)}
\begin{algorithm}[t]
	\caption{Natural Actor Critic  with 2-Layer Critic Parametrization (NAC2L)}
	\label{algo_1}
	\textbf{Input:} $\mathcal{S},$ $ \mathcal{A}, \gamma, $ Time Horizon K $ \in \mathcal{Z}$ , Updates per time step J $ \in \mathcal{Z}$  ,starting state action sampling 
    distribution $\nu$, Number of convex optimization steps $T_{k,j}, k \in \{1,\cdots,K\}, j \in \{1,\cdots,J\}$, Actor SGD learning rate $\eta$
    \begin{algorithmic}[1]
    \STATE \textbf{Initialize:} $\tilde{Q}(s,a)=0 \hspace{0.1cm}\forall (s,a) \in \mathcal{S}\times\mathcal{A}$, \\
    ${Q}_{0}(s,a)=0 \hspace{0.1cm}\forall (s,a) \in \mathcal{S}\times\mathcal{A}$ \\
    $\lambda_{0}=\{0\}^{d}$
		\FOR{$k\in\{1,\cdots,K\}$} 
		{  
            \FOR{$j\in\{1,\cdots,J\}$} 
		  {  
            \STATE $X_{k}= \varnothing $
		    \STATE Take $n_{k,j}$ state action pairs sampled from $\nu$ as the starting state action distribution and then following policy $\pi_{\lambda_{k}}$.
                \label{a1_l1}\\  
                \STATE Set $y_i= r_{i} + \gamma\max_{a' \in \mathcal{A}}{Q}_{k,j-1}(s_{i+1},a')$, where $i \in \{1,\cdots,n\}$ \label{a1_l2}\\ 
                \STATE Set $X_{j},Y_{j}$ as  the matrix of the sampled state action pairs and vector of estimated $Q$ values respectively \label{a1_l3}\\
                \STATE $X_{k} = X_{k} \cup X_{j}$ \\
		      \STATE \textbf{Call  Algorithm } \ref{algo_2} with input ($X=X_{j}$, $y=Y_{j}$, $T=T_{j}$) and  return parameter $\theta$ \label{a1_l4}\\
                \STATE $Q_{k,j} = Q_{\theta}$
             }
             \ENDFOR\\
            \FOR{$i \in \{1,\cdots, (J.n_{k,j})\} $}
            {   
               \STATE $A_{k,J}(s_{i},a_{i}) = Q_{k,J}(s_{i},a_{i}) -  \sum_{a \in \mathcal{A}} \pi_{\lambda_{k}}(a|s_{i})Q_{k,J}(s_{i},a) $
               \STATE $w_{i+1} =\Big(w_{i} - 2{\beta_{i}}\Big(w_{i}{\cdot} {\nabla}_{\lambda}\log{\pi_{\lambda_{k}}}(a_{i}|s_{i})- A_{k,J}(s_{i},a_{i})\Big){\nabla}_{\lambda}\log{\pi_{\lambda_{k}}}(a_{i}|s_{i}) \Big)$
            }\ENDFOR
            \STATE  $w_{k} = w_{J.n_{k,j}}$
            \STATE Update $\lambda_{k+1} =  \lambda_{k}  + {\eta}\left(\frac{1}{1-\gamma}\right) w_{k}$
		}
		\ENDFOR\\
	Output: $\pi_{\lambda_{K+1}}$
	\end{algorithmic}
\end{algorithm}
\begin{algorithm}
	\caption{Neural Network Parameter Estimation}     
	\label{algo_2}
	\begin{algorithmic}[1]
	\STATE{\textbf{Input:}} data $(X,y,T)$ \\
	\STATE{\textbf{Sample:}} $\tilde{D}={diag(1(Xg_{i}>0))} : g_{i} \sim \mathcal{N}(0,I), i \in [|\tilde{D}|]  $ \label{a2_l1}\\
	\STATE \textbf{Initialize} $y^1=0,  u^1=0$ \\
        \textbf{Initialize} $g(u) = \| \sum_{D_{i} \in \tilde{D}}D_{i}Xu_{i}-y \|^{2}_{2}$ 
        \FOR{$i\in\{0,\cdots,T\}$} \label{a2_l2}
		{
            \STATE $u^{i+1}= y_{i} - \eta_{i}\nabla{g(y_{i})} $\label{a2_l3}\\
            \STATE $y^{i+1} = \argminA_{y:|y|_{1} \le \frac{R_{max}}{1-\gamma}} \|u_{i+1}-y\|^{2}_{2}$ \label{a2_l4}\\
		}
		\ENDFOR\\
        \STATE Set $u^{T+1}=u^{*}$
	\STATE{Solve Cone Decomposition:}\\ 
	$\bar{v}, \bar{w} \in {u_{i}^{*}=v_{i}-w_{i}, i \in [d]\} }$ such that $v_{i},w_{i} \in \mathcal{K}_{i}$ and at-least one $v_{i},w_{i}$ is zero. \label{a2_l5}\\
	\STATE Construct $(\theta=\{u_{i},\alpha_{i}\})$  using the transformation
         \begin{eqnarray} 
            \psi(v_{i},w_{i}) &=& \left\{\begin{array}{lr}({v}_{i},1), & \text{if } {w}_{i}=0 \label{alg_2_trans}\\   
            ({w}_{i},-1), & \text{if }  
             {v}_{i} = 0\\
            (0,0), & \text{if } {v}_{i} = {w}_{i} = 0  \end{array} \right.
        \end{eqnarray}
        for all  ${i \in \{ 1,\cdots,m\}}$  
         \label{a2_l6}\\
        \STATE Return $\theta$ \label{a2_l7}\\
   \end{algorithmic}
\end{algorithm}
We summarize the proposed approach in Algorithm \ref{algo_1}. Algorithm \ref{algo_1} has an outer for loop with two inner for loops. At a fixed iteration $k$ of the outer for loop and iteration $j$ of the first inner for loop, we obtain a sequence of state action pairs and the corresponding state and reward by following the estimate of the policy at the start of the iteration. In order to perform the critic update, the state action pairs and the corresponding target $Q$ values are stored in matrix form and passed to Algorithm \ref{algo_2}, as the input and output values respectively to solve the following optimization problem.
\begin{eqnarray}
    \argminA_{\theta \in \Theta}\frac{1}{n_{k,j}} \sum_{i=1}^{n_{k,j}}\Bigg( Q_{\theta}(s_{i},a_{i}) - r(s_{i},a_{i}) - {\gamma}\mathbb{E}_{a^{'} \sim \pi_{\lambda_{k}}}Q_{k,j-1}(s_{i+1},a^{'})\Bigg)^{2}, \label{main_res_7}
\end{eqnarray}
where $Q_{k,j-1}$ is the estimate of the $Q$ function at the $k^{th}$ iteration of the outer for loop and the $(j-1)^{th}$ iteration of the  first inner for loop of Algorithm \ref{algo_1}. $Q_{\theta}$ is a neural network defined as in \eqref{ReLU_1_2} and $n_{k,j}$ is the number of state action pairs sampled at the $k^{th}$ iteration of the outer for loop and the $j^{th}$ iteration of the  first inner for loop of Algorithm \ref{algo_1}. This is done at each iteration of the first inner for loop to perform what is known as a Fitted Q-iteration step to obtain the estimate of the critic.

Algorithm \ref{algo_2} first  samples a set of diagonal matrices denoted by $\tilde{D}$ in line \ref{a2_l1} of Algorithm \ref{algo_2}. The elements of $\tilde{D}$ act as the diagonal matrix replacement of the ReLU function. Algorithm \ref{algo_2} then solves an optimization of the form given in Equation \eqref{main_res_7} by converting it to an optimization of the form \eqref{conv_form_1}. This convex optimization is solved in Algorithm \ref{algo_2} using the projected gradient descent algorithm. After obtaining the optima for this convex program, denoted by $u^{*}=\{u^{*}_{i}\}_{i \in \{1,\cdots,|\tilde{D}|\}}$, in line \ref{a2_l6}, we transform them into an estimate of the solutions for the optimization given in \eqref{main_res_7}, which are then passed back to Algorithm \ref{algo_1}. The procedure is described in detail along with the relevant definitions in Appendix \ref{cones_apdx}. 

The estimate of $w_{k}^{*}$ is obtained in the second inner for loop of Algorithm \eqref{algo_1} where a gradient descent is performed for the loss function of the form given in  Equation \eqref{alg_eq_5} using the state action pairs sampled in the first inner for loop. Note that we do not have access to the true $Q$ function that is required for the critic update. Thus we use the estimate of the $Q$ function obtained at the end of the first inner for loop. After obtaining our estimate of the minimizer of Equation \eqref{alg_eq_5}, we update the policy parameter using the stochastic gradient update step. Here the state action pairs used are the same we sampled in the first inner for loop.


\section{Global Convergence Analysis of NAC2L Algorithm} \label{Main Result}
\subsection{Assumptions} \label{Assumptions}
In this subsection, we formally describe the assumptions that will be used in the results.
\begin{assump}  \label{assump_1} 
For any $\lambda_{1}, \lambda_{2} \in \Lambda$ and $(s,a) \in (\mathcal{S}\times\mathcal{A})$ we have 
\begin{eqnarray}
    \|{\nabla}log(\pi_{\lambda_{1}})(a|s) - {\nabla}log(\pi_{\lambda_{2}})(a|s)\|_{2} \le \beta\|\lambda_{1} -\lambda_{2}\|_{2}  
\end{eqnarray}
where $\beta > 0$.
\end{assump}
Such assumptions have been utilised in prior policy Gradient based works such as \citep{agarwal2020optimality,liu2020improved}. This assumption is satisfied for the softmax policy parameterization 
\begin{eqnarray}
    \pi_{\lambda}(a|s) = \frac{\exp(f_{\lambda}(s,a))}{\sum_{a^{'}\in \mathcal{A}}f_{\lambda}(s,a^{'})}
\end{eqnarray}
where $f_{\lambda}(s,a)$ is a neural network with a smooth activation function \citep{agarwal2020optimality}.

\begin{assump}  \label{assump_2} 
Let $\theta^{*} \triangleq \arg\min_{\theta \in \Theta} \mathcal{L}(\theta)$, where $\mathcal{L}(\theta)$ is defined in  \eqref{ReLU_1} and we denote $Q_{\theta^{*}}(\cdot)$ as $Q_{\theta}(\cdot)$ as defined in \eqref{ReLU_1_2} for $\theta=\theta^{*}$. 
Also, let $\theta_{\tilde{D}}^{*} \triangleq \arg\min_{\theta \in \Theta} \mathcal{L}_{|\tilde{D}|}(\theta)$, where $\mathcal{L}_{\tilde{D}}(\theta)$ is the loss function $\mathcal{L}(\theta)$ with the set of diagonal matrices $D$ replaced by $\tilde{D} \in D$. Further, we denote $Q_{\theta_{|\tilde{D}|}^{*}}(\cdot)$ as $Q_{\theta}(\cdot)$ as defined in \eqref{ReLU_1_2} for $\theta=\theta_{|\tilde{D}|}^{*}$. Then we assume
\begin{equation}
    \mathbb{E}_{s,a}(|Q_{\theta^{*}} - Q_{\theta_{|\tilde{D}|}^{*}}|)_{\nu} \leq \epsilon_{|\tilde{D}|},
\end{equation}
for any $\nu \in \mathcal{P}(\mathcal{S}\times\mathcal{A})$.
\end{assump}
 Thus, $\epsilon_{|\tilde{D}|}$ is a measure of the error incurred due to taking a sample of diagonal matrices $\tilde{D}$ and not the full set $D_{X}$. In practice, setting $|\tilde{D}|$ to be the same order of magnitude as $d$  (dimension of the data) gives us a sufficient number of diagonal matrices to get a reformulation of the non convex optimization problem which performs comparably or better than existing gradient descent algorithms, therefore $\epsilon_{|\tilde{D}|}$ is only included for theoretical completeness and will be negligible in practice. This has been practically demonstrated in \cite{pmlr-v162-mishkin22a,pmlr-v162-bartan22a,pmlr-v162-sahiner22a}. Refer to Appendix \ref{cones_apdx} for details of $D_{X}$, $\tilde{D}$ and $\mathcal{L}_{|\tilde{D}|}(\theta)$.

\begin{assump}  \label{assump_3} 
We assume that for all functions $Q:\mathcal{S}\times\mathcal{A} \rightarrow \left[0,\left(\frac{R_{\max}}{1-\gamma}\right)\right]$,  there exists a function $ Q_{\theta} $ where $ \theta \in \Theta $ such that 
\begin{eqnarray}
\mathbb{E}_{s,a}{(Q_{\theta}-Q)}^{2}_{\nu} \le \epsilon_{approx}, 
\end{eqnarray}
for any $\nu \in \mathcal{P}(\mathcal{S}\times\mathcal{A})$. 
\end{assump}
$\epsilon_{approx}$ reflects the error that is incurred due to the inherent lack of expressiveness of the neural network function class. In the analysis of \cite{pmlr-v120-yang20a}, this error is assumed to be zero. Explicit upper bounds of $\epsilon_{bias}$ is given in terms of neural network  parameters in works such as \citep{yarotsky2017error}.

\begin{assump}  \label{assump_3_1} 
We assume that for all functions $Q:\mathcal{S}\times\mathcal{A} \rightarrow \left[0,\left(\frac{R_{\max}}{1-\gamma}\right)\right]$ and $\lambda \in \Lambda$,  there exists $w^{*} \in \mathbb{R}^{|\mathcal{A}|}$  such that 
\begin{eqnarray}
\mathbb{E}_{s,a}(|Q - w^{*}{\nabla}log(\pi_{{\lambda}}(a|s))|)_{\nu} \le \epsilon_{bias}, 
\end{eqnarray}
for any $\nu \in \mathcal{P}(\mathcal{S}\times\mathcal{A})$. 
\end{assump}
Assumption \ref{assump_3_1} is similar to the \emph{Transfer error} assumption in works such as \citep{agarwal2020flambe,liu2020improved}. The key difference is that in the referenced works the assumption is based on the true $Q$ function, while the estimate of $w^{*}$ is obtained by using a  noisy Monte Carlo estimate. For our case we use a known parameterised $Q$ function to obtain our estimate of $w^{*}$.

\begin{assump}  \label{assump_4} 
For any $\theta \in \Theta$, denote by $\pi_{\theta}$ as the policy corresponding to the parameter $\theta$ and $\mu_{\theta}$ as the corresponding stationary state action distribution of the induced Markov chain. We assume that there exists a positive integer $m$ such that 
\begin{align}
    d_{TV}\left(\mathbb{P}((s_{\tau},a_{\tau}) \in \cdot |(s_{0},a_{0})=(s,a)),\mu_{\theta}(\cdot)\right) \leq m{\rho}^{\tau}, \forall s \in \mathcal{S}
\end{align}
\end{assump}
This assumption implies that the Markov chain is geometrically mixing. Such assumption is widely used both in analysis of stochastic gradient descent literature such as \cite{9769873,sun2018markov}, as well as finite time analysis of RL algorithms  such as  \cite{wu2020finite,NEURIPS2020_2e1b24a6}.
\begin{assump}  \label{assump_6} 
Let $\nu_{1}$ be a probability measure on $\mathcal{S}\times\mathcal{A}$ which is absolutely continuous with respect to the Lebesgue measure. Let $\{\pi_{t}\}$ be a sequence of policies and suppose that the state action pair has an initial distribution of $\nu_{1}$. Then we assume that for all $\nu_{1}, \nu_{2} \in \mathcal{P}(\mathcal{S}\times\mathcal{A})$ there exists a constant $\phi_{\nu_{1},\nu_{2}} \le \infty$ such that
\begin{eqnarray} \sup_{\pi_{1},\pi_{2},\cdots,\pi_{m}}\Bigg|\Bigg| \frac{d(P^{\pi_{1}}P^{\pi_{2}}\cdots{P}^{\pi_{m}}\nu_{2})}{d\nu_{1}}\Bigg|\Bigg|_{\infty} &\le& \phi_{\nu_{1},\nu_{2} }
    \label{assump_6_1}
\end{eqnarray}
for all $m \in \{0,\cdots,\infty\}$, where  $ \frac{d(P^{\pi_{1}}P^{\pi_{2}}\cdots{P}^{\pi_{m}}\nu_{2})}{d\nu_{1}}$ denotes the Radon Nikodym derivative of the state action distribution $P^{\pi_{1}}P^{\pi_{2}}\cdots{P}^{\pi_{m}}\nu_{2}$ with respect to the distribution $\nu_{1}$.
\end{assump}

 This assumption puts an upper bound on the difference between the state action distribution $\nu_{1}$ and the state action distribution induced by sampling a state action pair from the distribution $\mu_{2}$ followed by any possible policy for the next $m$ steps for any finite value of $m$. Similar assumptions have been made in \cite{pmlr-v120-yang20a,JMLR:v17:10-364,farahmand2010error}. 

\subsection{Main Result} \label{Theorem Statement}
\begin{thm} \label{thm}
Suppose Assumptions \ref{assump_1}-\ref{assump_6} hold. Let Algorithm \ref{algo_1} run for $K$ iterations $J$ be the number of iterations of the first inner loop of Algorithm \ref{algo_1}. Let $n_{k,j}$ denote the number of  state-action pairs sampled and $T_{k,j}$ the number of iterations of Algorithm \ref{algo_2} at iteration $k$ of the outer for loop and iteration $j$ of the first inner for loop of Algorithm \ref{algo_1}. Let $\alpha_{i}$ be the projected gradient descent size at iteration $i$ of Algorithm \ref{algo_2} and $|\tilde{D}|$ the number of diagonal matrices sampled in Algorithm \ref{algo_2}. Let $\beta_{i}$ be the step size in the projected gradient descent at iteration $i$ of the second inner for loop of Algorithm \ref{algo_1}. Let $\nu \in \mathcal{P}(\mathcal{S}\times\mathcal{A})$ be the starting state action distribution at the each iteration $k$ of Algorithm \ref{algo_1}. If we have, 
$\alpha_{i} = \frac{||u^{*}_{k,j}||_{2}}{L_{k,j}\sqrt{{i}+1}}$, $\eta = \frac{1}{\sqrt{K}}$  and $\beta_{i} = \frac{\mu_{k}}{i+1}$, then we obtain
\begin{align}
\min_{k \le K}(V^{*}(\nu)-V^{\pi_{\lambda_{K}}}(\nu)) \leq&    {\mathcal{O}}\left(\frac{1}{\sqrt{K}(1-\gamma)}\right) \!+\!  \frac{1}{K(1-\gamma)}\sum_{k=1}^{K}\sum_{j=0}^{J-1} \mathcal{O}\left(\frac{\log\log(n_{k,j})}{\sqrt{n_{k,j}}}\right) \!+ \! \ \nonumber
\\
& + \frac{1}{K(1-\gamma)}\sum_{k=1}^{K}\sum_{j=0}^{J-1}{\mathcal{O}}\left(\frac{1}{\sqrt{T_{k,j}}}\right) + \frac{1}{K(1-\gamma)}\sum_{k=1}^{K}{\mathcal{O}}(\gamma^{J}) \nonumber\\
& + \frac{1}{1-\gamma}\left(\epsilon_{bias} + (\sqrt{\epsilon_{approx}}) + {\epsilon_{|\tilde{D}|}}\right)
\end{align}

where $||u^{*}_{k,j}||_{2}, L_{k,j},\mu_{k},\epsilon_{bias}, \epsilon_{approx}, {\epsilon_{|\tilde{D}|}}$ are constants.
\end{thm}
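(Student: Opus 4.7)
The plan is to decompose the global suboptimality into an outer NPG-type term and a per-iteration critic error, then peel the critic error into a Fitted-Q-Iteration contraction, a per-step convex-optimization error, a statistical sampling error, and the two structural approximation errors $\epsilon_{approx}$ and $\epsilon_{|\tilde{D}|}$. The compatible-function-approximation residual $\epsilon_{bias}$ is absorbed at the interface between critic and actor.

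\emph{Outer actor analysis.} First I would follow the smoothness-based NPG argument of Agarwal et al.\ (Assumption~\ref{assump_1}) to derive an inequality of the form
\begin{equation*}
\min_{k\le K}\bigl(V^{*}(\nu)-V^{\pi_{\lambda_k}}(\nu)\bigr)\;\le\;\frac{C_1}{(1-\gamma)\sqrt{K}}\;+\;\frac{1}{K(1-\gamma)}\sum_{k=1}^{K}\mathbb{E}\bigl\|w_k-w_k^{*}\bigr\|,
\end{equation*}
with $\eta=1/\sqrt{K}$ chosen as in the theorem. The first term gives the $\mathcal{O}(1/\sqrt{K}(1-\gamma))$ summand. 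The remaining job is to control $\|w_k-w_k^{*}\|$ for every $k$.

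\emph{Compatible-function-approximation step.} The second inner loop of Algorithm~\ref{algo_1} is projected SGD on a strongly convex quadratic (in $w$) built from the estimated advantage $A_{k,J}$, under the geometric mixing Assumption~\ref{assump_4}. With stepsize $\beta_i=\mu_k/(i+1)$, the standard Markov-SGD analysis (e.g.\ Sun et al.\ 2018; Wu et al.\ 2020) yields $\mathbb{E}\|w_k-\tilde w_k^{*}\|^{2}=\mathcal{O}(\log\log(N_k)/N_k)$ where $N_k=J\cdot n_{k,j}$ and $\tilde w_k^{*}$ is the minimizer of the surrogate using $A_{k,J}$ in place of the true advantage. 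The mismatch between $\tilde w_k^{*}$ and $w_k^{*}$ is controlled by $\|Q_{k,J}-Q^{\pi_{\lambda_k}}\|_{\zeta_{\pi_{\lambda_k}}}$ plus the transfer/bias term $\epsilon_{bias}$ from Assumption~\ref{assump_3_1}. This produces the $\mathcal{O}(\log\log n_{k,j}/\sqrt{n_{k,j}})$ term and the $\epsilon_{bias}/(1-\gamma)$ term.

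\emph{Critic analysis via Fitted-Q-Iteration.} Now I would bound $\|Q_{k,J}-Q^{\pi_{\lambda_k}}\|$. The inner loop of Algorithm~\ref{algo_1} is an FQI recursion: writing $\mathcal{T}^{\pi}$ for the Bellman operator and $\hat{\mathcal{T}}^{\pi}$ for its empirical/parameterized counterpart, one has $Q_{k,j}=\hat{\mathcal{T}}^{\pi_{\lambda_k}} Q_{k,j-1}$. Using the $\gamma$-contraction of $\mathcal{T}^{\pi}$ and the distribution-transfer Assumption~\ref{assump_6}, an error-propagation argument (Farahmand et al.\ 2010; Munos \& Szepesv\'ari 2008) gives
\begin{equation*}
\|Q_{k,J}-Q^{\pi_{\lambda_k}}\|_{\nu}\;\le\;\gamma^{J}\frac{R_{\max}}{1-\gamma}\;+\;\sum_{j=0}^{J-1}\gamma^{J-1-j}\,\phi\cdot\varepsilon_{k,j},
\end{equation*}
where $\varepsilon_{k,j}$ is the one-step regression error $\|\hat{\mathcal{T}}^{\pi_{\lambda_k}} Q_{k,j-1}-\mathcal{T}^{\pi_{\lambda_k}} Q_{k,j-1}\|$. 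The first part gives the $\mathcal{O}(\gamma^{J})$ term.

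\emph{Per-step regression error via the convex reformulation.} For each one-step error $\varepsilon_{k,j}$, I would split it as (optimization error)+(statistical error)+(diagonal sampling error)+(expressivity error). The convex reformulation of Pilanci--Ergen / Wang et al.\ makes \eqref{ReLU_1_0} a convex program which Algorithm~\ref{algo_2} solves by projected gradient descent; with stepsize $\alpha_i=\|u^{*}_{k,j}\|/(L_{k,j}\sqrt{i+1})$, the standard convex-GD bound produces the $\mathcal{O}(1/\sqrt{T_{k,j}})$ term. The statistical deviation between empirical and population loss on the ReLU class, controlled by covering/Rademacher arguments (and using the boundedness of $|y|\le R_{\max}/(1-\gamma)$ enforced in line~\ref{a2_l4}), yields $\mathcal{O}(\log\log n_{k,j}/\sqrt{n_{k,j}})$. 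The gap between the solution using $\tilde D$ and the full $D_X$ contributes $\epsilon_{|\tilde{D}|}$ by Assumption~\ref{assump_2}, while the irreducible expressivity error contributes $\sqrt{\epsilon_{approx}}$ by Assumption~\ref{assump_3}. Plugging these back into the FQI sum and then into the outer actor bound produces exactly the stated inequality.

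\emph{Main obstacle.} I expect the hardest part to be the FQI error-propagation step: making Assumption~\ref{assump_6} close the loop cleanly between the sampling distribution $\nu$, the policy-induced distribution $\zeta_{\pi_{\lambda_k}}$ used by the actor SGD, and the distribution at which the critic's Bellman residual is actually controlled. A second delicate point is converting the $L^{2}$ guarantees from the convex program (which are natural for least-squares) into the $L^{1}$-style bounds demanded by the compatible-function-approximation step via Jensen, which is what forces the $\sqrt{\epsilon_{approx}}$ (rather than $\epsilon_{approx}$) dependence in the final bound.
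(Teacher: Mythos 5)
Your proposal follows essentially the same route as the paper's proof: the smoothness-based NPG regret bound with $\eta=1/\sqrt{K}$, the split of the per-iteration error into an actor (Markov-chain SGD on the compatible-function-approximation loss, invoking Assumptions \ref{assump_3_1} and \ref{assump_4}) and a critic part, the Fitted-Q-Iteration error propagation under Assumption \ref{assump_6} yielding the $\gamma^{J}$ term, and the four-way decomposition of the per-step regression error into optimization ($1/\sqrt{T_{k,j}}$), Rademacher-type sampling ($\log\log n_{k,j}/\sqrt{n_{k,j}}$), diagonal-subsampling ($\epsilon_{|\tilde D|}$), and expressivity ($\sqrt{\epsilon_{approx}}$ via Jensen) errors. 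The only cosmetic difference is that your opening inequality is phrased in terms of $\|w_k-w_k^{*}\|$ rather than the full compatible-function-approximation residual $err_k$, but you recover the missing $\epsilon_{bias}$ and critic-mismatch contributions immediately afterward, so the argument matches the paper's.
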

Hence, for $K  =  \mathcal{O}(\epsilon^{-2}(1-\gamma)^{-2}) $, $J = \mathcal{O}\left(\log\left(\frac{1}{\epsilon}\right)\right) $, $n_{k,j} = \tilde{\mathcal{O}}\left(\epsilon^{-2}(1-\gamma)^{-2}\right)$, 

$T_{k,j} = \mathcal{O}(\epsilon^{-2}(1-\gamma)^{-2})$ we have%
\begin{eqnarray}
\min_{k \le K}(V^{*}(\nu)-V^{\pi_{\lambda_{K}}}(\nu)) \leq \epsilon +  \frac{1}{1-\gamma}\left(\epsilon_{bias} + (\sqrt{\epsilon_{approx}}) + {\epsilon_{|\tilde{D}|}}\right),    
\end{eqnarray}
which implies a sample complexity of $ \sum_{k=1}^{K} \sum_{j=1}^{J} (n_{k,j}) = \tilde{\mathcal{O}}\left({\epsilon^{-4}(1-\gamma)^{-4}}\right)$.
\section{Proof Sketch of Theorem \ref{thm}} 
%
%
The detailed proof of Theorem \ref{thm} is given in Appendix \ref{thm proof}. The difference between our estimated value function denoted by $V^{\pi_{\lambda_{k}}}$ and the optimal value function denoted by $V^{*}$ (where $\pi_{\lambda_{k}}$ is the policy obtained at the step $k$ of algorithm \ref{algo_1}) is first expressed as a function of the compatible function approximation error, which is then split into different components which are analysed separately. The proof is thus split into two stages. In the first stage, we demonstrate how the difference in value functions is upper bounded as a function of the errors incurred till the final step $K$. The second part is to upper bound the different error components.

{\bf Upper Bounding Error in Separate Error Components: } We use the smoothness property assumed in Assumption \ref{assump_1} to obtain a bound on the expectation of the difference between our estimated value function and the optimal value function.
\begin{align}
  \min_{k \in \{1,\cdots,K\}} V^{*}(\nu) - V^{\pi_{\lambda_{K}}}(\nu) \leq \frac{\log(|\mathcal{A}|)}{K{\eta}(1-\gamma)}+ \frac{\eta{\beta}W^{2}}{2(1-\gamma)} + \frac{1}{K}\sum_{k=1}^{K}\frac{err_{k}}{1-\gamma}, \label{main_res_1}
 \end{align}
where 
\begin{eqnarray}
err_{k} = \mathbb{E}_{s,a}(A^{\pi_{\lambda_{k}}} - w^{k}{\nabla}log(\pi_{\lambda_{k}}(a|s))),\label{error_k}  
\end{eqnarray}
 and $s \sim d^{\pi^{*}}_{\nu}, a \sim \pi^{*}(.|s)$ and $W$ is a constant such that $||w^{k}||_{2} \le W$  $\forall k$, where $k$ denotes the iteration of the outer for loop of Algorithm \ref{algo_1}.
We split the term in \eqref{error_k} into the errors incurred due to the actor and critic step as follows
\begin{eqnarray}
err_{k} &=& \mathbb{E}_{s,a}(A^{\pi_{\lambda_{k}}} - w^{k}{\nabla}log(\pi_{\lambda_{k}}(a|s))) \\
&=& \underbrace{\mathbb{E}_{s,a}(A^{\pi_{\lambda_{k}}} - A_{{k,J}})}_{I} +  \underbrace{\mathbb{E}_{s,a}(A_{{k,J}} - w^{k}{\nabla}log(\pi_{\lambda_{k}}(a|s)))}_{II}  \label{main_res_2}.
\end{eqnarray}
Note that $I$ is the difference between the true $A^{\pi_{\lambda_{k}}}$ function corresponding to the policy $\pi_{\lambda_{k}}$ and $A_{k,J}$ is our estimate. This estimation is carried out in the first inner for loop of Algorithm \ref{algo_1}. Thus $I$ is the error incurred in the critic step. $II$ is the error incurred in the estimation of the actor update. This is incurred in the stochastic gradient descent steps in the second inner for loop of Algorithm \ref{algo_1}.

Also note that the expectation is with respect to the discounted state action distribution of the state action pairs induced by the optimal policy $\pi^{*}$. The state action samples that we obtain are obtained from the policy $\pi_{\lambda_{k}}$. Thus using assumption \ref{assump_6} we convert the expectation in Equation \eqref{main_res_2} to an expectation with respect to the stationary state action distribution induced by the policy $\pi_{\lambda_{k}}$.

{\bf Upper Bounding Error in Critic Step: }  For each iteration $k$ of the Algorithm \ref{algo_1}. We show that minimzing $I$ is equivalent to solving the following problem
\begin{eqnarray}
    \argminA_{\theta \in \Theta} \mathbb{E}_{s,a}({Q^{\pi_{\lambda_{k}}}-Q_{{\theta}}})^{2} \label{main_res_4},
\end{eqnarray}
We will rely upon the analysis laid out in \cite{farahmand2010error} and instead of the iteration of the value functions, we will apply a similar analysis to the action value function to obtain an upper bound for the error incurred in solving the problem in Equation \eqref{main_res_4} using the Fitted Q-Iteration.  We recreate the result for the value function from Lemmas 2 of \cite{munos2003error}  for the action value function $Q$ to obtain  
%
\begin{eqnarray}
    \mathbb{E}_{s,a}(Q^{\pi_{\lambda_{k}}}-Q_{k,J}) &\leq&  \sum_{j=1}^{J-1} \gamma^{J-j-1}(P^{\pi_{\lambda_{k}}})^{J-j-1}{\mathbb{E}}|\epsilon_{k,j}| + \gamma^{J}\left(\frac{R_{max}}{1-\gamma}\right) , \label{main_res_9}
\end{eqnarray}
where $\epsilon_{k,j} =  T^{\pi_{\lambda_{k}}}Q_{k,j-1} - Q_{k,j}$ is the Bellman error incurred at iteration $j$ where $T^{\pi_{\lambda_{k}}}Q_{k,j-1},P^{\pi_{\lambda_{k}}}$ are defined as in Equations \eqref{ps_3}, \eqref{ps_7} respectively. $Q_{k,J}$ denotes the $Q$ estimate at the final iteration $J$ of the first inner for loop and iteration $k$ of the outer for loop of Algorithm \ref{algo_1}

The first term on the right hand side is called as the algorithmic error, which depends on how good our approximation of the Bellman error is. The second term on the right hand side is called as the statistical error, which is  the error incurred due to the random nature of the system and depends only on the parameters of the MDP as well as the number of iterations of the FQI algorithm. Intuitively, this error depends on how much data is collected at each iteration, how efficient our solution to the optimization step is to the true solution, and how well our function class can approximate $ T^{\pi_{\lambda_{k}}}Q_{k,j-1}$. Building upon this intuition, we split $\epsilon_{k,j}$ into four different components as follows.
\begin{eqnarray}
    \epsilon_{k,j} &=& T^{\pi_{\lambda_{k}}}Q_{k,j-1} - Q_{k,j} \nonumber\\
                 &=& \underbrace{T^{\pi_{\lambda_{k}}}Q_{k,j-1}-Q^{1}_{k,j}}_{\epsilon^{1}_{k,j}} + \underbrace{Q^{1}_{k,j} -Q^{2}_{k,j}}_{\epsilon^{2}_{k,j}} + \underbrace{Q^{2}_{k,j} -Q^{3}_{k,j}}_{\epsilon^{3}_{k,j}} +\underbrace{Q^{3}_{k,j} - Q_{k,j}}_{\epsilon^{4}_{k,j}} \nonumber\\
                 &=& {\epsilon^{1}_{k,j}} + {\epsilon^{2}_{k,j}} + {\epsilon^{3}_{k,j}} + {\epsilon^{4}_{k,j}} ,\label{last}
\end{eqnarray}
We use the Lemmas \ref{lem_1}, \ref{lem_2}, \ref{lem_3}, and \ref{lem_4}  to bound the error terms in Equation  \eqref{last}. 

{\bf Upper Bounding Error in Actor Step:} Note that we require the minimization of the term $\mathbb{E}_{s,a}(A_{k,J} - w^{k}{\nabla}log(\pi_{\lambda_{k}}(a|s)))$. Here the expectation is with respect to stationary state action distribution corresponding to $\pi_{\lambda_{k}}$. But we do not have samples of states action pairs from the stationary distribution with respect to the policy $\pi_{\lambda_{k}}$, we only have samples from the Markov chain induced by the policy $\pi_{\lambda_{k}}$. We thus refer to the theory in \cite{9769873} and Assumption \ref{assump_4} to upper  bound the error incurred. 
\section{Conclusions} \label{Conclusion and Future Work}

In this paper, we study a Natural Actor Critic algorithm  with a neural network used to represent both the actor and the critic and find the sample complexity guarantees for the algorithm. Using the conversion of the optimization of a 2 layer ReLU Neural Network to a convex problem for estimating the critic, we show that our approach  achieves a sample complexity of $\tilde{\mathcal{O}}(\epsilon^{-4}(1-\gamma)^{-4})$. This demonstrates the first approach for achieving sample complexity beyond linear MDP assumptions for the critic.

{\bf Limitations: } Relaxing the different stated assumptions is an interesting direction for the future. Further, the results assume 2-layer neural network parametrization for the critic. One can likely use the framework described in \citep{belilovsky2019greedy} to extend the results to a multi layer setup.
\bibliography{mybib}

\begin{thebibliography}{59}
\providecommand{\natexlab}[1]{#1}
\providecommand{\url}[1]{\texttt{#1}}
\expandafter\ifx\csname urlstyle\endcsname\relax
  \providecommand{\doi}[1]{doi: #1}\else
  \providecommand{\doi}{doi: \begingroup \urlstyle{rm}\Url}\fi

\bibitem[Agarwal et~al.(2020{\natexlab{a}})Agarwal, Kakade, Krishnamurthy, and
  Sun]{agarwal2020flambe}
Alekh Agarwal, Sham Kakade, Akshay Krishnamurthy, and Wen Sun.
\newblock Flambe: Structural complexity and representation learning of low rank
  mdps, 2020{\natexlab{a}}.

\bibitem[Agarwal et~al.(2020{\natexlab{b}})Agarwal, Kakade, Lee, and
  Mahajan]{agarwal2020optimality}
Alekh Agarwal, Sham~M Kakade, Jason~D Lee, and Gaurav Mahajan.
\newblock Optimality and approximation with policy gradient methods in markov
  decision processes.
\newblock In \emph{Conference on Learning Theory}, pages 64--66. PMLR,
  2020{\natexlab{b}}.

\bibitem[Bartan and Pilanci(2022)]{pmlr-v162-bartan22a}
Burak Bartan and Mert Pilanci.
\newblock Neural {F}isher discriminant analysis: Optimal neural network
  embeddings in polynomial time.
\newblock In \emph{Proceedings of the 39th International Conference on Machine
  Learning}, volume 162 of \emph{Proceedings of Machine Learning Research},
  pages 1647--1663. PMLR, 17--23 Jul 2022.
\newblock URL \url{https://proceedings.mlr.press/v162/bartan22a.html}.

\bibitem[Belilovsky et~al.(2019)Belilovsky, Eickenberg, and
  Oyallon]{belilovsky2019greedy}
Eugene Belilovsky, Michael Eickenberg, and Edouard Oyallon.
\newblock Greedy layerwise learning can scale to imagenet.
\newblock In \emph{International conference on machine learning}, pages
  583--593. PMLR, 2019.

\bibitem[Bertail and Portier(2019)]{article}
Patrice Bertail and François Portier.
\newblock Rademacher complexity for markov chains: Applications to kernel
  smoothing and metropolis–hastings.
\newblock \emph{Bernoulli}, 25:\penalty0 3912--3938, 11 2019.
\newblock \doi{10.3150/19-BEJ1115}.

\bibitem[Bertsekas and Shreve(2007)]{10.5555/1512940}
Dimitri~P. Bertsekas and Steven~E. Shreve.
\newblock \emph{Stochastic Optimal Control: The Discrete-Time Case}.
\newblock Athena Scientific, 2007.
\newblock ISBN 1886529035.

\bibitem[Bhandari and Russo(2019)]{bhandari2019global}
Jalaj Bhandari and Daniel Russo.
\newblock Global optimality guarantees for policy gradient methods.
\newblock \emph{arXiv preprint arXiv:1906.01786}, 2019.

\bibitem[Bhatnagar(2010)]{bhatnagar2010actor}
Shalabh Bhatnagar.
\newblock An actor--critic algorithm with function approximation for discounted
  cost constrained markov decision processes.
\newblock \emph{Systems \& Control Letters}, 59\penalty0 (12):\penalty0
  760--766, 2010.

\bibitem[Bonjour et~al.(2022)Bonjour, Haliem, Alsalem, Thomas, Li, Aggarwal,
  Kejriwal, and Bhargava]{Haliem2021LearningMG}
Trevor Bonjour, Marina Haliem, Aala Alsalem, Shilpa Thomas, Hongyu Li, Vaneet
  Aggarwal, Mayank Kejriwal, and Bharat Bhargava.
\newblock Decision making in monopoly using a hybrid deep reinforcement
  learning approach.
\newblock \emph{IEEE Transactions on Emerging Topics in Computational
  Intelligence}, 2022.

\bibitem[Borkar and Konda(1997)]{borkar1997actor}
Vivek~S Borkar and Vijaymohan~R Konda.
\newblock The actor-critic algorithm as multi-time-scale stochastic
  approximation.
\newblock \emph{Sadhana}, 22:\penalty0 525--543, 1997.

\bibitem[Chen and Zhao(2022)]{chen2022finite}
Xuyang Chen and Lin Zhao.
\newblock Finite-time analysis of single-timescale actor-critic.
\newblock \emph{arXiv preprint arXiv:2210.09921}, 2022.

\bibitem[Dean et~al.(2020)Dean, Mania, Matni, Recht, and Tu]{dean2020sample}
Sarah Dean, Horia Mania, Nikolai Matni, Benjamin Recht, and Stephen Tu.
\newblock On the sample complexity of the linear quadratic regulator.
\newblock \emph{Foundations of Computational Mathematics}, 20\penalty0
  (4):\penalty0 633--679, 2020.

\bibitem[Doan(2022)]{9769873}
Thinh~T. Doan.
\newblock Finite-time analysis of markov gradient descent.
\newblock \emph{IEEE Transactions on Automatic Control}, pages 1--1, 2022.
\newblock \doi{10.1109/TAC.2022.3172593}.

\bibitem[Even-Dar et~al.(2009)Even-Dar, Kakade, and Mansour]{even2009online}
Eyal Even-Dar, Sham~M Kakade, and Yishay Mansour.
\newblock Online markov decision processes.
\newblock \emph{Mathematics of Operations Research}, 34\penalty0 (3):\penalty0
  726--736, 2009.

\bibitem[Fan et~al.(2020)Fan, Wang, Xie, and Yang]{pmlr-v120-yang20a}
Jianqing Fan, Zhaoran Wang, Yuchen Xie, and Zhuoran Yang.
\newblock A theoretical analysis of deep q-learning.
\newblock In \emph{Proceedings of the 2nd Conference on Learning for Dynamics
  and Control}, volume 120 of \emph{Proceedings of Machine Learning Research},
  pages 486--489. PMLR, 10--11 Jun 2020.
\newblock URL \url{https://proceedings.mlr.press/v120/yang20a.html}.

\bibitem[Farahmand et~al.(2010)Farahmand, Szepesv{\'a}ri, and
  Munos]{farahmand2010error}
Amir-massoud Farahmand, Csaba Szepesv{\'a}ri, and R{\'e}mi Munos.
\newblock Error propagation for approximate policy and value iteration.
\newblock \emph{Advances in Neural Information Processing Systems}, 23, 2010.

\bibitem[Fazel et~al.(2018)Fazel, Ge, Kakade, and Mesbahi]{fazel2018global}
Maryam Fazel, Rong Ge, Sham Kakade, and Mehran Mesbahi.
\newblock Global convergence of policy gradient methods for the linear
  quadratic regulator.
\newblock In \emph{International conference on machine learning}, pages
  1467--1476. PMLR, 2018.

\bibitem[Fu et~al.(2020)Fu, Yang, and Wang]{fu2020single}
Zuyue Fu, Zhuoran Yang, and Zhaoran Wang.
\newblock Single-timescale actor-critic provably finds globally optimal policy.
\newblock \emph{arXiv preprint arXiv:2008.00483}, 2020.

\bibitem[Geng et~al.(2020)Geng, Lan, Aggarwal, Yang, and Xu]{geng2020multi}
Nan Geng, Tian Lan, Vaneet Aggarwal, Yuan Yang, and Mingwei Xu.
\newblock A multi-agent reinforcement learning perspective on distributed
  traffic engineering.
\newblock In \emph{2020 IEEE 28th International Conference on Network Protocols
  (ICNP)}, pages 1--11. IEEE, 2020.

\bibitem[Kakade and Langford(2002)]{kakade2002approximately}
Sham Kakade and John Langford.
\newblock Approximately optimal approximate reinforcement learning.
\newblock In \emph{Proceedings of the Nineteenth International Conference on
  Machine Learning}, pages 267--274, 2002.

\bibitem[Kakade(2001{\natexlab{a}})]{NIPS2001_4b86abe4}
Sham~M Kakade.
\newblock A natural policy gradient.
\newblock In \emph{Advances in Neural Information Processing Systems},
  volume~14. MIT Press, 2001{\natexlab{a}}.

\bibitem[Kakade(2001{\natexlab{b}})]{kakade2001natural}
Sham~M Kakade.
\newblock A natural policy gradient.
\newblock \emph{Advances in neural information processing systems}, 14,
  2001{\natexlab{b}}.

\bibitem[Khodadadian et~al.(2021)Khodadadian, Chen, and
  Maguluri]{khodadadian2021finite}
Sajad Khodadadian, Zaiwei Chen, and Siva~Theja Maguluri.
\newblock Finite-sample analysis of off-policy natural actor-critic algorithm.
\newblock In \emph{International Conference on Machine Learning}, pages
  5420--5431. PMLR, 2021.

\bibitem[Kiran et~al.(2022)Kiran, Sobh, Talpaert, Mannion, Sallab, Yogamani,
  and Pérez]{9351818}
B~Ravi Kiran, Ibrahim Sobh, Victor Talpaert, Patrick Mannion, Ahmad A.~Al
  Sallab, Senthil Yogamani, and Patrick Pérez.
\newblock Deep reinforcement learning for autonomous driving: A survey.
\newblock \emph{IEEE Transactions on Intelligent Transportation Systems},
  23\penalty0 (6):\penalty0 4909--4926, 2022.
\newblock \doi{10.1109/TITS.2021.3054625}.

\bibitem[Konda and Tsitsiklis(1999)]{konda1999actor}
Vijay Konda and John Tsitsiklis.
\newblock Actor-critic algorithms.
\newblock \emph{Advances in neural information processing systems}, 12, 1999.

\bibitem[Lan(2023)]{lan2023policy}
Guanghui Lan.
\newblock Policy mirror descent for reinforcement learning: Linear convergence,
  new sampling complexity, and generalized problem classes.
\newblock \emph{Mathematical programming}, 198\penalty0 (1):\penalty0
  1059--1106, 2023.

\bibitem[Lazaric et~al.(2016)Lazaric, Ghavamzadeh, and Munos]{JMLR:v17:10-364}
Alessandro Lazaric, Mohammad Ghavamzadeh, and R{\'e}mi Munos.
\newblock Analysis of classification-based policy iteration algorithms.
\newblock \emph{Journal of Machine Learning Research}, 17\penalty0
  (19):\penalty0 1--30, 2016.
\newblock URL \url{http://jmlr.org/papers/v17/10-364.html}.

\bibitem[Li et~al.(2020)Li, Li, Wang, and Li]{li2020video}
Dingcheng Li, Xu~Li, Jun Wang, and Ping Li.
\newblock Video recommendation with multi-gate mixture of experts soft actor
  critic.
\newblock In \emph{Proceedings of the 43rd International ACM SIGIR Conference
  on Research and Development in Information Retrieval}, pages 1553--1556,
  2020.

\bibitem[Liu et~al.(2020)Liu, Zhang, Basar, and Yin]{liu2020improved}
Yanli Liu, Kaiqing Zhang, Tamer Basar, and Wotao Yin.
\newblock An improved analysis of (variance-reduced) policy gradient and
  natural policy gradient methods.
\newblock \emph{Advances in Neural Information Processing Systems},
  33:\penalty0 7624--7636, 2020.

\bibitem[Mishkin et~al.(2022)Mishkin, Sahiner, and
  Pilanci]{pmlr-v162-mishkin22a}
Aaron Mishkin, Arda Sahiner, and Mert Pilanci.
\newblock Fast convex optimization for two-layer {R}e{LU} networks: Equivalent
  model classes and cone decompositions.
\newblock In \emph{Proceedings of the 39th International Conference on Machine
  Learning}, volume 162 of \emph{Proceedings of Machine Learning Research},
  pages 15770--15816. PMLR, 17--23 Jul 2022.
\newblock URL \url{https://proceedings.mlr.press/v162/mishkin22a.html}.

\bibitem[Morgan et~al.(2021)Morgan, Nandha, Chalvatzaki, D’Eramo, Dollar, and
  Peters]{morgan2021model}
Andrew~S Morgan, Daljeet Nandha, Georgia Chalvatzaki, Carlo D’Eramo, Aaron~M
  Dollar, and Jan Peters.
\newblock Model predictive actor-critic: Accelerating robot skill acquisition
  with deep reinforcement learning.
\newblock In \emph{2021 IEEE International Conference on Robotics and
  Automation (ICRA)}, pages 6672--6678. IEEE, 2021.

\bibitem[Munos(2003)]{munos2003error}
R{\'e}mi Munos.
\newblock Error bounds for approximate policy iteration.
\newblock In \emph{ICML}, volume~3, pages 560--567, 2003.

\bibitem[Papini et~al.(2017)Papini, Pirotta, and Restelli]{papini2017adaptive}
Matteo Papini, Matteo Pirotta, and Marcello Restelli.
\newblock Adaptive batch size for safe policy gradients.
\newblock \emph{Advances in neural information processing systems}, 30, 2017.

\bibitem[Papini et~al.(2018)Papini, Binaghi, Canonaco, Pirotta, and
  Restelli]{papini2018stochastic}
Matteo Papini, Damiano Binaghi, Giuseppe Canonaco, Matteo Pirotta, and Marcello
  Restelli.
\newblock Stochastic variance-reduced policy gradient.
\newblock In \emph{International conference on machine learning}, pages
  4026--4035. PMLR, 2018.

\bibitem[Peters and Schaal(2008)]{peters2008natural}
Jan Peters and Stefan Schaal.
\newblock Natural actor-critic.
\newblock \emph{Neurocomputing}, 71\penalty0 (7-9):\penalty0 1180--1190, 2008.

\bibitem[Peters et~al.(2005)Peters, Vijayakumar, and Schaal]{peters2005natural}
Jan Peters, Sethu Vijayakumar, and Stefan Schaal.
\newblock Natural actor-critic.
\newblock In \emph{Machine Learning: ECML 2005: 16th European Conference on
  Machine Learning, Porto, Portugal, October 3-7, 2005. Proceedings 16}, pages
  280--291. Springer, 2005.

\bibitem[Pilanci and Ergen(2020)]{pmlr-v119-pilanci20a}
Mert Pilanci and Tolga Ergen.
\newblock Neural networks are convex regularizers: Exact polynomial-time convex
  optimization formulations for two-layer networks.
\newblock In \emph{Proceedings of the 37th International Conference on Machine
  Learning}, volume 119 of \emph{Proceedings of Machine Learning Research},
  pages 7695--7705. PMLR, 13--18 Jul 2020.
\newblock URL \url{https://proceedings.mlr.press/v119/pilanci20a.html}.

\bibitem[Rattray et~al.(1998)Rattray, Saad, and Amari]{rattray1998natural}
Magnus Rattray, David Saad, and Shun-ichi Amari.
\newblock Natural gradient descent for on-line learning.
\newblock \emph{Physical review letters}, 81\penalty0 (24):\penalty0 5461,
  1998.

\bibitem[Sahiner et~al.(2020{\natexlab{a}})Sahiner, Ergen, Pauly, and
  Pilanci]{sahiner2020vector}
Arda Sahiner, Tolga Ergen, John Pauly, and Mert Pilanci.
\newblock Vector-output relu neural network problems are copositive programs:
  Convex analysis of two layer networks and polynomial-time algorithms.
\newblock \emph{arXiv preprint arXiv:2012.13329}, 2020{\natexlab{a}}.

\bibitem[Sahiner et~al.(2020{\natexlab{b}})Sahiner, Mardani, Ozturkler,
  Pilanci, and Pauly]{sahiner2020convex}
Arda Sahiner, Morteza Mardani, Batu Ozturkler, Mert Pilanci, and John Pauly.
\newblock Convex regularization behind neural reconstruction.
\newblock \emph{arXiv preprint arXiv:2012.05169}, 2020{\natexlab{b}}.

\bibitem[Sahiner et~al.(2022)Sahiner, Ergen, Ozturkler, Pauly, Mardani, and
  Pilanci]{pmlr-v162-sahiner22a}
Arda Sahiner, Tolga Ergen, Batu Ozturkler, John Pauly, Morteza Mardani, and
  Mert Pilanci.
\newblock Unraveling attention via convex duality: Analysis and interpretations
  of vision transformers.
\newblock In \emph{Proceedings of the 39th International Conference on Machine
  Learning}, volume 162 of \emph{Proceedings of Machine Learning Research},
  pages 19050--19088. PMLR, 17--23 Jul 2022.
\newblock URL \url{https://proceedings.mlr.press/v162/sahiner22a.html}.

\bibitem[Scaman and Virmaux(2018)]{10.5555/3327144.3327299}
Kevin Scaman and Aladin Virmaux.
\newblock Lipschitz regularity of deep neural networks: Analysis and efficient
  estimation.
\newblock In \emph{Proceedings of the 32nd International Conference on Neural
  Information Processing Systems}, NIPS'18, page 3839–3848, Red Hook, NY,
  USA, 2018. Curran Associates Inc.

\bibitem[Shani et~al.(2020)Shani, Efroni, and Mannor]{shani2020adaptive}
Lior Shani, Yonathan Efroni, and Shie Mannor.
\newblock Adaptive trust region policy optimization: Global convergence and
  faster rates for regularized mdps.
\newblock In \emph{Proceedings of the AAAI Conference on Artificial
  Intelligence}, volume~34, pages 5668--5675, 2020.

\bibitem[Sun et~al.(2018)Sun, Sun, and Yin]{sun2018markov}
Tao Sun, Yuejiao Sun, and Wotao Yin.
\newblock On markov chain gradient descent.
\newblock \emph{Advances in neural information processing systems}, 31, 2018.

\bibitem[Sutton(1988)]{sutton1988learning}
Richard~S Sutton.
\newblock Learning to predict by the methods of temporal differences.
\newblock \emph{Machine learning}, 3:\penalty0 9--44, 1988.

\bibitem[Sutton et~al.(1999{\natexlab{a}})Sutton, McAllester, Singh, and
  Mansour]{sutton1999policy}
Richard~S Sutton, David McAllester, Satinder Singh, and Yishay Mansour.
\newblock Policy gradient methods for reinforcement learning with function
  approximation.
\newblock \emph{Advances in neural information processing systems}, 12,
  1999{\natexlab{a}}.

\bibitem[Sutton et~al.(1999{\natexlab{b}})Sutton, McAllester, Singh, and
  Mansour]{DBLP:conf/nips/SuttonMSM99}
Richard~S. Sutton, David~A. McAllester, Satinder Singh, and Yishay Mansour.
\newblock Policy gradient methods for reinforcement learning with function
  approximation.
\newblock In \emph{Advances in Neural Information Processing Systems 12,
  {[NIPS} Conference, Denver, Colorado, USA, November 29 - December 4, 1999]},
  pages 1057--1063. The {MIT} Press, 1999{\natexlab{b}}.
\newblock URL
  \url{http://papers.nips.cc/paper/1713-policy-gradient-methods-for-reinforcement-learning-with-function-approximation}.

\bibitem[Vinyals et~al.(2017)Vinyals, Ewalds, Bartunov, Georgiev, Vezhnevets,
  Yeo, Makhzani, K{\"{u}}ttler, Agapiou, Schrittwieser, Quan, Gaffney,
  Petersen, Simonyan, Schaul, van Hasselt, Silver, Lillicrap, Calderone, Keet,
  Brunasso, Lawrence, Ekermo, Repp, and
  Tsing]{DBLP:journals/corr/abs-1708-04782}
Oriol Vinyals, Timo Ewalds, Sergey Bartunov, Petko Georgiev, Alexander~Sasha
  Vezhnevets, Michelle Yeo, Alireza Makhzani, Heinrich K{\"{u}}ttler, John~P.
  Agapiou, Julian Schrittwieser, John Quan, Stephen Gaffney, Stig Petersen,
  Karen Simonyan, Tom Schaul, Hado van Hasselt, David Silver, Timothy~P.
  Lillicrap, Kevin Calderone, Paul Keet, Anthony Brunasso, David Lawrence,
  Anders Ekermo, Jacob Repp, and Rodney Tsing.
\newblock Starcraft {II:} {A} new challenge for reinforcement learning.
\newblock \emph{CoRR}, abs/1708.04782, 2017.
\newblock URL \url{http://arxiv.org/abs/1708.04782}.

\bibitem[Wang et~al.(2021)Wang, Lacotte, and Pilanci]{wang2021hidden}
Yifei Wang, Jonathan Lacotte, and Mert Pilanci.
\newblock The hidden convex optimization landscape of regularized two-layer
  relu networks: an exact characterization of optimal solutions.
\newblock In \emph{International Conference on Learning Representations}, 2021.

\bibitem[Williams(1992)]{williams1992simple}
Ronald~J Williams.
\newblock Simple statistical gradient-following algorithms for connectionist
  reinforcement learning.
\newblock \emph{Reinforcement learning}, pages 5--32, 1992.

\bibitem[Williams and Baird(1990)]{williams1990mathematical}
Ronald~J Williams and LC~Baird.
\newblock A mathematical analysis of actor-critic architectures for learning
  optimal controls through incremental dynamic programming.
\newblock In \emph{Proceedings of the Sixth Yale Workshop on Adaptive and
  Learning Systems}, pages 96--101. Citeseer, 1990.

\bibitem[Wu et~al.(2020)Wu, Zhang, Xu, and Gu]{wu2020finite}
Yue~Frank Wu, Weitong Zhang, Pan Xu, and Quanquan Gu.
\newblock A finite-time analysis of two time-scale actor-critic methods.
\newblock \emph{Advances in Neural Information Processing Systems},
  33:\penalty0 17617--17628, 2020.

\bibitem[Xu et~al.(2020{\natexlab{a}})Xu, Wang, and
  Liang]{NEURIPS2020_2e1b24a6}
Tengyu Xu, Zhe Wang, and Yingbin Liang.
\newblock Improving sample complexity bounds for (natural) actor-critic
  algorithms.
\newblock In \emph{Advances in Neural Information Processing Systems},
  volume~33, pages 4358--4369. Curran Associates, Inc., 2020{\natexlab{a}}.

\bibitem[Xu et~al.(2020{\natexlab{b}})Xu, Wang, and Liang]{xu2020improving}
Tengyu Xu, Zhe Wang, and Yingbin Liang.
\newblock Improving sample complexity bounds for (natural) actor-critic
  algorithms.
\newblock \emph{Advances in Neural Information Processing Systems},
  33:\penalty0 4358--4369, 2020{\natexlab{b}}.

\bibitem[Xu et~al.(2020{\natexlab{c}})Xu, Wang, and Liang]{xu2020non}
Tengyu Xu, Zhe Wang, and Yingbin Liang.
\newblock Non-asymptotic convergence analysis of two time-scale (natural)
  actor-critic algorithms.
\newblock \emph{arXiv preprint arXiv:2005.03557}, 2020{\natexlab{c}}.

\bibitem[Yarotsky(2017)]{yarotsky2017error}
Dmitry Yarotsky.
\newblock Error bounds for approximations with deep relu networks.
\newblock \emph{Neural Networks}, 94:\penalty0 103--114, 2017.

\bibitem[Zhang et~al.(2020{\natexlab{a}})Zhang, Koppel, Zhu, and
  Basar]{zhang2020global}
Kaiqing Zhang, Alec Koppel, Hao Zhu, and Tamer Basar.
\newblock Global convergence of policy gradient methods to (almost) locally
  optimal policies.
\newblock \emph{SIAM Journal on Control and Optimization}, 58\penalty0
  (6):\penalty0 3586--3612, 2020{\natexlab{a}}.

\bibitem[Zhang et~al.(2020{\natexlab{b}})Zhang, Liu, Yao, and
  Whiteson]{zhang2020provably}
Shangtong Zhang, Bo~Liu, Hengshuai Yao, and Shimon Whiteson.
\newblock Provably convergent two-timescale off-policy actor-critic with
  function approximation.
\newblock In \emph{International Conference on Machine Learning}, pages
  11204--11213. PMLR, 2020{\natexlab{b}}.

\bibitem[Zheng et~al.(2022)Zheng, Kurt, and Wang]{zheng2022stochastic}
Jiaohao Zheng, Mehmet~Necip Kurt, and Xiaodong Wang.
\newblock Stochastic integrated actor--critic for deep reinforcement learning.
\newblock \emph{IEEE Transactions on Neural Networks and Learning Systems},
  2022.

\end{thebibliography}
\onecolumn

\appendix
\appendix
\section*{\centering {Appendix}}

\section{Comparison of Sample Complexity Analysis with Natural Policy Gradient} \label{Comp_npg}

For natural policy gradient (NPG) \citep{agarwal2020optimality}, to derive the sample complexity result, the average error in estimation till iteration $K$ is given by 
\begin{eqnarray}
  \argminA_{k \in \{1,\cdots,K\}} V^{*}(\nu) - V^{\pi_{K}}(\nu) &\leq& \Bigg( \frac{\log(|\mathcal{A}|)}{K{\eta}(1-\gamma)}+ \frac{\eta{\beta_{k}}W^{2}}{2(1-\gamma)} + \frac{1}{K}\sum_{i=k}^{K}\frac{err_{k}}{1-\gamma} \Bigg) \label{comp_npg_1},
 \end{eqnarray}
 where $err_{k}$ in the last term on the right-hand side of \eqref{comp_npg_1} is  
\begin{eqnarray}
 err_{k} =\mathbb{E}_{s,a}(A^{\pi_{\lambda_{K}}} - w^{k}{\nabla}log(\pi_{\lambda_{k}}(a|s))) \label{comp_npg_2}
\end{eqnarray}

where $s \sim d^{\pi^{*}}_{\nu}, a \sim \pi^{*}(.|s)$, $w_{k}$ is our estimate of the NPG gradient update term and $\lambda_{k}$ is the policy parameter.  

The term $err_{k}$ is then decomposed in the following manner 

\begin{eqnarray}
 \mathbb{E}_{s,a}(A^{\pi_{\lambda_{K}}} - w^{k}{\nabla}log(\pi_{\lambda_{k}}(a|s))) = \mathbb{E}_{s,a}(A^{\pi_{\lambda_{K}}} - w^{*}{\nabla}log(\pi_{\lambda_{k}}(a|s))) \nonumber\\
 + (w^{*} - w^{k)}{\nabla}log(\pi_{\lambda_{k}}(a|s))) \label{comp_npg_3}
\end{eqnarray}

where $w^{*} = \argminA_{w}\mathbb{E}_{s,a}(Q^{\pi_{\lambda_{K}}} - w{\nabla}log(\pi_{\lambda_{k}}(a|s)))$ where $s \sim d^{\pi^{\lambda_{k}}}_{\nu}, a \sim \pi^{\lambda_{k}}(.|s)$.

For ease of notation we define 
\begin{eqnarray}
\mathbb{E}_{s \sim d^{\pi}_{\nu}, a \sim \pi(.|s)}(Q^{\pi_{\lambda}} - w{\nabla}log(\pi_{\lambda}(a|s)))^{2} = L(w,\lambda,d^{\pi}_{\nu}).     
\end{eqnarray}

Equation \eqref{comp_npg_3} is then be upper bounded as 

\begin{eqnarray}
 \mathbb{E}_{s,a}(Q^{\pi_{\lambda_{K}}} - w^{k}{\nabla}log(\pi_{\lambda_{k}}(a|s)))  &\le&   \sqrt{L(w^{*},\lambda_{k},d^{\pi^{*}}_{\nu})} \nonumber\\
 && + \phi_{k}\sqrt{L(w^{k},\lambda_{k},d^{\pi_{\lambda_{k}}}_{\nu}) - L(w^{*},\lambda_{k},d^{\pi_{\lambda_{k}}}_{\nu})}    \label{comp_npg_3_1}
\end{eqnarray}

where $\phi_{k}$ is a constant which represents the change in expectation from $d^{\pi^{*}}_{\nu}$ to $d^{\pi_{\lambda_{k}}}_{\nu}$. It is defined using an assumption similar to Assumption \ref{assump_6}.

Assumption 6.1 and 6.2 in \citep{agarwal2020optimality} are as follows
\begin{eqnarray}
 L(w^{k},\lambda_{k},d^{\pi_{\lambda_{k}}}_{\nu}) - L(w^{*},\lambda_{k},d^{\pi_{\lambda_{k}}}_{\nu}) &\le& \epsilon_{stat}   \label{comp_npg_4} \\
  L(w^{*};\lambda_{k},d^{\pi^{*}_{\nu}}) &\le& \epsilon_{bias}  \label{comp_npg_5}
\end{eqnarray}
$ \forall k \in \{1,\cdots,K\}$, where $K$ is the total number of iterations of the NPG algorithm.

The assumption in Equation \eqref{comp_npg_4} is known as the excess risk assumption and places an upper bound on the error incurred due to the difference between the obtained estimate $w^{k}$ and the optimal solution $w^{*}$ which minimizes $L(w,\lambda_{k},d^{\pi_{\lambda_{k}}}_{\nu})$. It is a measure of uncertainty in estimating the natural gradient update.
 
The assumption in Equation \eqref{comp_npg_5} is known as the transfer error assumption and places an upper bound on the loss function $L(w,\lambda_{k},d^{\pi^{*}}_{\nu})$ evaluated at the minima of the loss function  $L(w;\lambda^{k},d^{\pi_{\lambda_{k}}}_{\nu})$. This is a measure of how similar the policy $\pi^{\lambda_{k}}$ is to the optimal policy $\pi^{*}$. 

In the  analysis of \cite{agarwal2020optimality}, using results of stochastic gradient descent on a convex loss function, $\epsilon_{stat}$ is assumed to be upper bounded as $\tilde{\mathcal{O}}\left(\frac{1}{\sqrt{n_{k}}}\right)$ where $n_{k}$ is the number of state action samples at iteration $k$. Further, $\epsilon_{bias}$ is directly assumed as a constant while it depends on the accurate estimation of $A^{\pi_{\lambda_{k}}}$. 

\textbf{Comparison.} We note that the analysis in Equation \eqref{comp_npg_4}-\eqref{comp_npg_5} does not consider  (i) the extra $\left(\frac{1}{1-\gamma}\right)$ state action samples required to obtain Monte Carlo estimate $A^{\pi_{\lambda_{k}}}$. This is because each Monte Carlo estimate of $A^{\pi_{\lambda_{k}}}$  requires on average $\left(\frac{1}{1-\gamma}\right)$ state action samples;  (ii) the error incurred due to gap between the Monte Carlo estimate $A^{\pi_{\lambda_{k}}}$ and the actual Q-function. In \cite{agarwal2020optimality}, Monte Carlo estimate is only shown to be an unbiased estimate of $A^{\pi_{\lambda_{k}}}$ and no error bound for the estimate is given. This error bound will require additional samples to be very close such that the obtained value function for the policy is $\epsilon$-close. This is the key gap due to which our algorithm gets additional $1/\epsilon$ in the sample complexity.

Our analysis  considers  the number of samples required to estimate $A^{\pi_{\lambda_{k}}}$ to a given accuracy in our sample complexity analysis. In order to account for the difference between the optimal policy $\pi^{*}$and the policy estimate $\pi_{\lambda_{k}}$, we use \ref{assump_6} which has been used and verified in prior works such as \cite{farahmand2010error}, unlike the assumption in Equation \eqref{comp_npg_5}.

The authors in \citep{liu2020improved} also perform a similar analysis but only has an Assumption similar to Equation \eqref{comp_npg_5}. This assumption also suffers from the same drawback described above.

\section{Convex Reformulation with Two-Layer Neural Networks}\label{cones_apdx}



For representing the action value function, we will use a 2 layer ReLU neural network. In this section, we first lay out the theory behind the convex formulation of the 2 layer ReLU neural network.  In the next section it will shown how it is utilised for the FQI algorithm.

In order to obtain parameter $\theta$ for a given set of data $X \in \mathbb{R}^{n \times d}$  and the corresponding response values $y \in \mathbb{R}^{n \times 1}$, we desire the parameter that minimizes the squared loss  (with a regularization parameter $\beta \in [0,1]$), given by 
\begin{eqnarray}
\mathcal{L}(\theta) &=& \argminA_{\theta} \Bigg\|\sum_{i=1}^{m}\sigma(Xu_{i})\alpha_{i}- y\Bigg\|_{2}^{2}.
\end{eqnarray}
Here, we have the term $\sigma(Xu_{i})$ which is a vector $\{\sigma'((x_{j})^{T}u_{i})\}_{j \in \{1,\cdots, n\}}$ where $x_{j}$ is the $j^{th}$ row of $X$. It is the ReLU function applied to each element of the vector $Xu_{i}$. We note that the optimization in Equation \eqref{ReLU_1} is non-convex in $\theta$ due to the presence of the ReLU activation function. In \cite{wang2021hidden}, it is shown that this optimization problem has  an equivalent convex form, provided that the number of neurons $m$ goes above a certain threshold value. This convex problem is obtained by replacing the ReLU functions in the optimization problem with equivalent diagonal operators.
The convex problem is given as
\begin{eqnarray}
     \mathcal{L}^{'}_{\beta}(p) &:=& \argminA_{p} \Bigg\|\sum_{D_{i} \in D_{X}}D_{i}(Xp_{i}) - y\Bigg\|^{2}_{2} 
\end{eqnarray}
where $p \in \mathbb{R}^{d \times |D_{X}|}$. 
$D_{X}$ is the set of diagonal matrices $D_{i}$ which depend on the data-set $X$. Except for cases of $X$ being low rank it is not computationally feasible to obtain the set $D_{X}$. We instead use $\tilde{D} \in D_{X}$ to solve the convex problem 
\begin{eqnarray}
     \mathcal{L}^{'}_{\beta}(p) &:=& ( \argminA_{p} \Bigg\|\sum_{D_{i} \in \tilde{D}}D_{i}(Xp_{i}) - y\Bigg\|^{2}_{2}, \label{conv_form_1}
\end{eqnarray}
where $p \in \mathbb{R}^{d \times |\tilde{D}|}$. 
In order to understand the convex reformulation of the squared loss optimization problem, consider the vector $\sigma(Xu_{i})$

\begin{equation}
   \sigma(Xu_{i})= \begin{bmatrix}
        \{\sigma^{'}((x_{1})^{T}u_{i})\} \\
        \{\sigma^{'}((x_{2})^{T}u_{i})\} \\
                     \vdots\\
        \{\sigma^{'}((x_{n})^{T}u_{i})\}. 
    \end{bmatrix}
\end{equation}
Now for a fixed  $X \in \mathbb{R}^{n \times d}$, different $u_{i} \in \mathbb{R}^{d \times 1}$ will have different components of $\sigma(Xu_{i})$ that are non zero. For example, if we take the set of all $u_{i}$ such that only the first element of $\sigma(Xu_{i})$ are non zero (i.e, only $(x_{1})^{T}u_{i} \ge 0$ and $(x_{j})^{T}u_{i} < 0$ 
 $\forall j \in [2,\cdots,n]$ ) and denote it by the set $\mathcal{K}_{1}$, then we have 
\begin{equation}
     \sigma(Xu_{i}) =  D_{1}(Xu_{i})   \  \  \  \   \forall u_{i}\in \mathcal{K}_{1},
\end{equation}
where $D_{1}$ is the  $n \times n$ diagonal matrix with only the first diagonal element equal to $1$ and the rest $0$. Similarly, there exist a set of $u's$ which result in $\sigma(Xu)$ having certain components to be non-zero and the rest zero. For each such combination of zero and non-zero components, we will have a corresponding set of $u_{i}'s$ and a corresponding $n \times n$ Diagonal matrix $D_{i}$.  We define the possible set of such diagonal matrices possible for a given matrix X as 
\begin{eqnarray}
    D_{X} = \{D=diag(\mathbf{1}(Xu \geq 0)):u \in \mathbb{R}^{d}\, ,D \in \mathbb{R}^{n \times n}\},
\end{eqnarray}
where $diag(\mathbf{1}(Xu \geq 0))$ represents a matrix given by
\begin{equation}
    D_{k,j} = \left\{\begin{array}{lr}
    \mathbf{1}(x_{j}^{T}u), & \text{for } k=j\\
    0 & \text{for } k \neq j\end{array}, \right.
\end{equation}
where $\mathbf{1}(x) = 1$ if $x>0$ and $\mathbf{1}(x) = 0$ if $x \le 0.$ Corresponding to each such matrix $D_{i}$, there exists a set of $u_{i}$ given by
\begin{eqnarray}
    \mathcal{K}_{i} =\{ u \in \mathbb{R}^{d}:\sigma(Xu_{i})=D_{i}Xu_{i}, D_{i} \in D_{X} \} \label{cone}
\end{eqnarray}
where $I$ is the $n \times n$ identity matrix. The number of these matrices ${D}_{i}$ is upper bounded by $2^{n}$. From \cite{wang2021hidden} the upper bound is $\mathcal{O}\left(r\left(\frac{n}{r} \right)^{r}\right)$ where $r=rank(X)$. Also, note that the sets $\mathcal{K}_{i}$ form a partition of the space $\mathbb{R}^{d \times 1}$. Using these definitions,  we define the equivalent convex problem to the one in Equation \eqref{ReLU_1} as
\begin{eqnarray}
     \mathcal{L}_{\beta}(v,w) := \argminA_{v,w} \Bigg(\Bigg\|\sum_{D_{i} \in D_{X}}D_{i}(X(v_{i} - w_{i})) - y\Bigg\|^{2}_{2}  \label{ReLU_2}\Bigg) ,
\end{eqnarray}
where $v=\{v_{i}\}_{i \in 1,\cdots,|D_{X}|}$, $w=\{w_{i}\}_{i \in 1,\cdots,|D_{X}|}$, $v_{i},w_{i} \in \mathcal{K}_{i}$, note that by definition, for any fixed $i \in \{1,\cdots,|D_{X}|\}$ at-least one of $v_{i}$ or $w_{i}$ are zero. If $v^{*},w^{*}$ are the optimal solutions to Equation \eqref{ReLU_2}, the number of neurons $m$ of the original problem in Equation \eqref{ReLU_1} should be greater than the number of elements of $v^{*},w^{*}$, which have at-least one of $v_{i}^{*}$ or $w_{i}^{*}$ non-zero. We denote this value as $m^{*}_{X,y}$, with the subscript $X$ denoting that this quantity depends upon the data matrix $X$ and response $y$. 

We convert $v^{*},w^{*}$ to optimal values of Equation \eqref{ReLU_1}, denoted by $\theta^{*}=(U^{*},\alpha^{*})$, using a function $\psi:\mathbb{R}^{d}\times\mathbb{R}^{d} \rightarrow \mathbb{R}^{d}\times\mathbb{R}$ defined as follows
\begin{eqnarray} 
    \psi(v_{i},w_{i}) &=& \left\{\begin{array}{lr}({v}_{i},1), & \text{if } {w}_{i}=0 \label{ReLU_2_1}\\   
    ({w}_{i},-1), & \text{if }  
     {v}_{i} = 0\\
    (0,0), & \text{if } {v}_{i} = {w}_{i} = 0  \end{array} \right.
    \label{ReLU_4}
\end{eqnarray}
where according to \cite{pmlr-v119-pilanci20a} we have $(u_{i}^{*},\alpha_{i}^{*})=\psi(v_{i}^{*},w_{i}^{*})$, for all $i \in \{1,\cdots,|{D}_{X}|\}$ where $u^{*}_{i},\alpha^{*}_{i}$ are the elements of $\theta^{*}$. Note that restriction of $\alpha_{i}$ to $\{1,-1,0\}$ is shown to be valid in  \cite{pmlr-v162-mishkin22a}. For  $i \in \{|{D}_{X}|+1,\cdots,m\}$ we set $(u_{i}^{*},\alpha_{i}^{*})=(0,0)$.

Since $D_{X}$ is hard to obtain computationally unless $X$ is of low rank, we can construct a subset $\tilde{D} \in D_{X}$ and perform the optimization in Equation \eqref{ReLU_2} by replacing $D_{X}$ with $\tilde{D}$ to get 
\begin{eqnarray}
 \mathcal{L}_{\beta}(v,w) := \argminA_{v,w} \Bigg(\Bigg\|\sum_{D_{i} \in \tilde{D}}D_{i}(X(v_{i} - w_{i})) - y\Bigg\|^{2}_{2} \label{ReLU_2_2}\Bigg) 
\end{eqnarray}
where $v=\{v_{i}\}_{i \in 1,\cdots,|\tilde{D}|}$, $w=\{w_{i}\}_{i \in 1,\cdots,|\tilde{D}|}$, $v_{i},w_{i} \in \mathcal{K}_{i}$, by definition, for any fixed $i \in \{1,\cdots,|\tilde{D}|\}$ at-least one of $v_{i}$ or $w_{i}$ are zero.

 The required condition for $\tilde{D}$ to be a sufficient replacement for $D_{X}$ is as follows. Suppose $(v,w)=(\bar{v}_{i},\bar{w}_{i})_{i \in (1,\cdots,|\tilde{D}|)}$ denote the optimal solutions of Equation \eqref{ReLU_2_2}. Then we require  
\begin{eqnarray}
    m \ge \sum_{D_{i} \in \tilde{D}} |\{ \bar{v}_{i}: \bar{v}_{i} \neq 0 \} \cup  \{ \bar{w}_{i}: \bar{w}_{i} \neq 0 \}|. \label{ReLU_2_3}
\end{eqnarray}
Or,  the number of neurons in the neural network are greater than the number of indices $i$ for which at-least one of $v_{i}^{*}$ or $w_{i}^{*}$ is non-zero. Further, 
\begin{eqnarray}
  diag(Xu_{i}^{*} \geq 0: i \in [m]) \in \tilde{D}.   \label{ReLU_2_3_1}
\end{eqnarray}
In other words,  the diagonal matrices induced by the optimal $u_{i}^{*}$'s of Equation \eqref{ReLU_1} must be included in our sample of diagonal matrices. 
This is proved in Theorem 2.1 of \cite{pmlr-v162-mishkin22a}.

A computationally efficient method for obtaining $\tilde{D}$ and obtaining the optimal values of the Equation \eqref{ReLU_1}, is laid out in \cite{pmlr-v162-mishkin22a}. In this method we first get our sample of diagonal matrices $\tilde{D}$ by first sampling a fixed number of vectors from a $d$ dimensional standard multivariate distribution, multiplying the vectors with the data matrix $X$ and then forming the diagonal matrices based of which co-ordinates are positive. Then  we solve an optimization similar to the one in Equation \eqref{ReLU_2}, without the constraints, that its parameters belong to sets of the form $\mathcal{K}_{i}$ as follows.
\begin{eqnarray}
\mathcal{L}^{'}_{\beta}(p) := \argminA_{p} \Bigg(\Bigg\|\sum_{D_{i} \in \tilde{D}}D_{i}(Xp_{i}) - y\Bigg\|^{2}_{2}\Bigg) ,\label{ReLU_3}
\end{eqnarray}
where $p \in \mathbb{R}^{d \times |\tilde{D}|}$ . In order to satisfy the constraints of the form given in Equation \eqref{ReLU_2}, this step is followed by a cone decomposition step. This is implemented through a function $\{\psi_{i}^{'}\}_{i \in \{1,\cdots,|\tilde{D}|\}}$.  Let $p^{*}=\{p^{*}_{i}\}_{i \in \{1,\cdots,|\tilde{D}|\}}$ be the optimal solution of Equation \eqref{ReLU_3}. For each $i$ we define a function $\psi_{i}^{'}:\mathbb{R}^{d} \rightarrow \mathbb{R}^{d}\times\mathbb{R}^{d}$ as 
\begin{eqnarray}
    \psi_{i}^{'}(p_{i}) &=& (v_{i},w_{i}) \label{ReLU_3_1}\\
     \textit{such that } p&=& {v}_{i} - {w}_{i}, \textit{and }   {v}_{i},{w}_{i} \in \mathcal{K}_{i} \nonumber
\end{eqnarray}
Then we obtain $\psi(p^{*}_{i})=(\bar{v}_{i},\bar{w}_{i})$. As before, at-least one of $v_{i}$, $w_{i}$ is $0$. Note that in practice we do not know if the conditions in Equation \eqref{ReLU_2_3} and \eqref{ReLU_2_3_1} are satisfied for a given sampled $\tilde{D}$. We express this as follows. If $\tilde{D}$  was the full set of Diagonal matrices  then we would have $(\bar{v}_{i},\bar{w}_{i})={v}^{*}_{i},{w}^{*}_{i}$ and $\psi(\bar{v}_{i},\bar{w}_{i})=(u_{i}^{*},\alpha_{i}^{*})$ for all $i \in (1,\cdots,|D_{X}|)$. However, since that is not the case and $\tilde{D} \in D_{X}$, this means that $\{\psi(\bar{v}_{i},\bar{w}_{i})\}_{i \in (1,\cdots,|\tilde{D}|)}$ is an optimal solution of a non-convex optimization different from the one in Equation \eqref{ReLU_1}. We denote this non-convex optimization as $\mathcal{L}_{|\tilde{D}|}(\theta)$ defined as 

\begin{equation}
    \mathcal{L}_{|\tilde{D}|}(\theta) = \argminA_{\theta} \Bigg\|\sum_{i=1}^{m^{'}}\sigma(Xu_{i})\alpha_{i}- y\Bigg\|_{2}^{2} \label{ReLU_6}, 
\end{equation}
where $m^{'} = |\tilde{D}|$  or the size of the sampled diagonal matrix set. In order to quantify the error incurred  due to taking a subset of $D_{X}$, we assume that the expectation of the absolute value of the difference between the neural networks corresponding to the optimal solutions of the non-convex optimizations given in Equations \eqref{ReLU_6} and \eqref{ReLU_1}  is upper bounded by a constant depending on the size of $\tilde{D}$. The formal assumption and its justification is given in Assumption \ref{assump_2}.

\section{Error Characterization}
Before we define the errors incurred during the actor and critic steps, we define some additional terms as follows

We define the Bellman operator for a policy $\pi$ as follows
\begin{equation}
(T^{\pi}Q)(s,a) =  r'(s,a) + \gamma \int Q^{\pi}(s',\pi(s'))P(ds'|s,a), \label{ps_3}
\end{equation}
where $r'(s,a) = \mathbb{E}(r(s,a)|(s,a))$ Similarly we define the Bellman Optimality Operator as 

Similarly we define the Bellman Optimality Operator as  
\begin{equation}
(TQ)(s,a) = \left( r' +\max_{a' \in \mathcal{A}}\gamma\int Q(s',a')P(ds'|s,a)\right),\label{ps_4}
\end{equation}

Further, operator $P^{\pi}$ is  defined as
\begin{equation}
P^{\pi}Q(s,a)=\mathbb{E}[Q(s',a')|s' \sim P(\cdot|s,a), a' \sim \pi(\cdot|s') ]\label{ps_7} ,    
\end{equation}
which is the one step Markov transition operator for policy $\pi$ for the Markov chain defined on $\mathcal{S}\times\mathcal{A}$ with the transition dynamics given by $S_{t+1} \sim P(\cdot|S_{t},A_{t})$ and $A_{t+1} \sim \pi(\cdot|S_{t+1})$. It defines a distribution on the state action space after one transition from the initial state. Similarly, $P^{\pi_{1}}P^{\pi_{2}}\cdots{P}^{\pi_{m}}$ is the $m$-step Markov transition operator following policy $\pi_t$ at steps $1\le t\le m$. It defines a distribution on the state action space after $m$ transitions from the initial state. We have the relation 
\begin{align}
(T^{\pi}Q)(s,a) =&   r' + \gamma \int Q^{\pi}(s',\pi(s'))P(ds'|s,a) \label{ps_7_2}
\\
         =&   r' + {\gamma}(P^{\pi}Q)(s,a).
\end{align}
We thus defines $P^{*}$ as 
\begin{equation}
P^{*}Q(s,a)=\max_{a^{'} \in \mathcal{A}}\mathbb{E}[Q(s',a')|s' \sim P(\cdot|s,a)]\label{ps_7_3} ,
\end{equation}
in other words, $P^{*}$ is the one step Markov transition operator with respect to the greedy policy of the function on which it is acting. which implies that
\begin{eqnarray}
(TQ)(s,a)  &=&   r' + {\gamma}(P^{*}Q)(s,a)  \label{ps_7_2_1}
\end{eqnarray}
For any measurable function $f:\mathcal{S}\times\mathcal{A}:\rightarrow\mathbb{R}$, we also define 
\begin{equation}
\mathbb{E}(f)_{\nu}=\int_{\mathcal{S}\times\mathcal{A}}fd\nu,  
\end{equation}
for any distribution $\nu\in\mathcal{P}(\mathcal{S}\times\mathcal{A})$.

We now characterize the errors which are incurred from the actor and critic steps. We  define as $\zeta^{\nu}_{\pi}(s,a)$ as the stationary state action distribution induced by the policy $\pi$ with the starting state action distribution drawn from a distribution $\nu \in \mathcal{P}(\mathcal{S}\times\mathcal{A})$. 
For the error incurred in the actor update we define the related loss function as
\begin{definition} \label{def_0}
   For iteration $k$ of the outer for loop of Algorithm \ref{algo_1} ,we define $w_{k}$ as the estimate of the minima of the loss function given by  $\mathbb{E}_{(s,a) \sim \zeta^{\nu}_{\pi}(s,a)}\left(A_{k,J}(s,a) - (w){\nabla}log(\pi_{\lambda_{k}})(a|s)\right)^{2}$ obtained at the end of the second inner for loop of Algorithm \ref{algo_1}. We further define the true minima as
   \begin{eqnarray}
   w^{*}_{k} &=&  \argminA_{w}\mathbb{E}_{(s,a) \sim \zeta^{\nu}_{\pi}(s,a)}\left(A_{k,J}(s,a) - (w){\nabla}log(\pi_{\lambda_{k}})(a|s)\right)^{2},
   \end{eqnarray}
\end{definition}
For finding the estimate $w_{k}$, we re-use the state action pairs sampled in the first inner for loop of Algorithm \ref{algo_1}. Note that we have to solve for the loss function where the expectation is with respect to the steady state distribution $\zeta^{\nu}_{\pi}(s,a)$, while our sample are from a markov chain which has the steady state distribution
For the error incurred in the critic update, we first define the various possible $Q$-functions which we can approximate in decreasing order of the accuracy.

For the error compnents incurred during critic estimation, we start by defining the best possible  approximation of the function $T^{\pi_{\lambda_{k}}}Q_{k,j-1}$ possible from the class of two layer ReLU neural networks, with respect to the expected square from the true ground truth $T^{\pi_{\lambda_{k}}}Q_{k,j-1}$.

\begin{definition} \label{def_1}
   For iteration $k$ of  the outer for loop  and iteration $j$ of the first inner for loop of Algorithm \ref{algo_1}, we define
   \begin{equation}
   Q^{1}_{k,j}=\argminA_{Q_{\theta},\theta \in \Theta}\mathbb{E}(Q_{\theta}(s,a) - T^{\pi_{\lambda_{k}}}Q_{k,j-1}(s,a))^{2},    
   \end{equation}
where $(s,a) \sim \zeta^{\nu}_{\pi}(s,a)$.
\end{definition}

Note that we do not have access to the transition probability kernel $P$, hence we cannot calculate $ T^{\pi_{\lambda_{k}}}$. To alleviate this, we use the observed next states to estimate the $Q$-value function. Using this, we define  $Q^{2}_{k,j}$ as, 

\begin{definition} \label{def_2}
     For iteration $k$ of  the outer for loop  and iteration $j$ of the first inner for loop of Algorithm \ref{algo_1}, we define
   \begin{eqnarray}
    Q^{2}_{k,j} = \argminA_{Q_{\theta},\theta \in \Theta}\mathbb{E}(Q_{\theta}(s,a)-(r'(s,a)+\gamma{\mathbb{E}}Q_{j-1}(s',a'))^{2},\label{temp}
   \end{eqnarray}
\end{definition}
where ${(s,a) \sim \zeta^{\nu}_{\pi}(s,a),s' \sim P(s'|s,a)}$ and $ {r'(\cdot|s,a)\sim {R}(\cdot|s,a)}$.

Compared to $Q^{1}_{k,j}$, in $Q^{2}_{k,j}$, we are minimizing the expected square loss from target function $\big(r'(s,a)+\gamma{\mathbb{E}_{a' \sim \pi_{k}}}Q_{j-1}(s',a')\big)$.

To obtain $Q^{2}_{k,j}$, we still need to compute the true expected value in Equation \ref{temp}. However, we still do not know the transition function $P$. To remove this limitation, we use sampling. Consider a set, $\mathcal{X}_{k,j}$ , of state-action pairs sampled  as where $(s,a) \sim \zeta^{\nu}_{\pi}(s,a)$. We now define $Q^{3}_{k,j}$ as,
\begin{definition} \label{def_3}
   For a given set of state action pairs $\mathcal{X}_{k,j}$  we define
   \begin{eqnarray}
    Q^{3}_{k,j}=\argminA_{Q_{\theta},\theta \in \Theta}\frac{1}{|\mathcal{X}_{k,j}|} \sum_{(s_{i},a_{i}) \in \mathcal{X}_{k,j}}\Big( Q_{\theta}(s_{i},a_{i})  
        - \big(r(s_{i},a_{i}) + \gamma{\mathbb{E}_{a' \sim \pi_{k}}}Q_{k,j-1}(s_{i+1},a') \big)\Big)^{2},
   \end{eqnarray}
where $r(s_{i},a_{i})$, and $s_{i+1}$ are the observed reward and the observed next state for state action pair $s_i, a_i$ respectively.
\end{definition}

$Q^{3}_{k,j}$ is the best possible approximation for $Q$-value function which minimizes the sample average of the square loss functions with the target values as $ \big(r'(s_{i},a_{i})+\gamma{\mathbb{E}_{a' \sim \pi_{k}}}Q_{k,j-1}(s_{i+1},a') \big)^{2}$ or the empirical loss function. 
After defining the possible solutions for the $Q$-values using different loss functions, we define the errors.

We first define approximation error which represents the difference between $T^{\pi_{\lambda_{k}}}Q_{j-1}$ and its best approximation possible from the class of 2 layer ReLU neural networks. We have
\begin{definition}[Approximation Error] \label{def_4}
    For a given iteration $k$ of Algorithm \ref{algo_1} and iteration $j$ of the first for loop of Algorithm \ref{algo_1}, we define, $\epsilon^{1}_{k,j} =T^{\pi_{\lambda_{k}}}Q_{k,j-1} - Q^{1}_{k,j}$, where $Q^{1}_{k,j}$ is the estimate of the $Q$ function at the iteration $j-1$ of the second for loop of Algorithm \ref{algo_1}.
\end{definition}

We also define Estimation Error which denotes the error between the best approximation of $T^{\pi_{\lambda_{k}}}Q_{k,j-1}$ possible from a 2 layer ReLU neural network and $Q^{2}_{k,j}$. We demonstrate that these two terms are the same and this error is zero.
\begin{definition}[Estimation Error] \label{def_5}
   For a given iteration $k$ of Algorithm \ref{algo_1} and iteration $j$ of the first for loop of Algorithm \ref{algo_1}, $\epsilon^{2}_{k,j} =Q^{1}_{k,j} - Q^{2}_{k,j}$.
\end{definition}

We now define Sampling error which denotes the difference between the minimizer of expected loss function $Q^{2}_{k,j}$ and the minimizer of the empirical loss function using samples, $Q^{3}_{k,j}$. We will use Rademacher complexity results to upper bound this error.
\begin{definition}[Sampling Error] \label{def_6}
   For a given iteration $k$ of Algorithm \ref{algo_1} and iteration $j$ of the first for loop of Algorithm \ref{algo_1}, $\epsilon^{3}_{k,j} =Q^{3}_{k,j} - Q^{2}_{k,j}$. 
\end{definition}

Lastly, we define optimization error which denotes the difference between the minimizer of the empirical square loss function, $Q_{k_3}$, and our estimate of this minimizer that is obtained from the projected gradient descent algorithm.
\begin{definition}[Optimization Error] \label{def_7}
   For a given iteration $k$ of Algorithm \ref{algo_1} and iteration $j$ of the first for loop of Algorithm \ref{algo_1}, $\epsilon^{4}_{k} =Q^{3}_{k,j}- Q_{k,j}$. Here $Q_{k,j}$ is our estimate of the $Q$ function at iteration $k$ of Algorithm \ref{algo_1} and iteration $j$ of the first inner loop of Algorithm \ref{algo_1}. 
\end{definition}

\section{Supplementary lemmas and Definitions}\label{sup_lem}

Here we provide some definitions and results that will be used to prove the lemmas stated in the  paper.

\begin{definition} \label{def_8}
   For a given set $ Z \in \mathbb{R}^{n}$, we define the Rademacher complexity of the set $Z$ as 
   \begin{equation}
   Rad(Z) = \mathbb{E} \left(\sup_{z \in Z} \frac{1}{n} \sum_{i=1}^{d}\Omega_{i}z_{i}\right)    
   \end{equation}
   where $\Omega_{i}$ is random variable such that $P(\Omega_{i}=1)=\frac{1}{2}$,  $P(\Omega_{i}=-1)=\frac{1}{2}$ and $z_{i}$ are the co-ordinates of $z$ which is an element of the set $Z$
\end{definition}

\begin{lemma} \label{sup_lem_0}
Consider a set of observed data denoted by $ z = \{z_{1},z_{2},\cdots\,z_{n}\} \in \mathbb{R}^{n}$, a parameter space  $\Theta$, a loss function $\{l:\mathbb{R} \times \Theta \rightarrow \mathbb{R}\}$ where  $0 \le l(\theta,z) \le 1$  $\forall (\theta,z) \in \Theta \times \mathbb{R}$. The empirical risk for a set of observed data as $R(\theta)=\frac{1}{n} \sum_{i=1}^{n}l(\theta,z_{i})$ and the population risk  as $r(\theta)= \mathbb{E}l(\theta,\tilde{z_{i}})$, where $\tilde{z_{i}}$ is a co-ordinate of $\tilde{z}$ sampled from some distribution over $Z$.

We define a set of functions denoted by $\mathcal{L}$ as 

\begin{equation}
    \mathcal{L}=\{z \in Z \rightarrow l(\theta,z) \in \mathbb{R}:\theta \in \Theta \}
\end{equation}

Given $z=\{z_{1},z_{2},z_{3}\cdots,z_{n}\}$ we further define a set $\mathcal{L} \circ z$ as 

\begin{equation}
    \mathcal{L} \circ z \ =\{ (l(\theta,z_{1}),l(\theta,z_{2}),\cdots,l(\theta,z_{n})) \in \mathbb{R}^{n} : \theta \in \Theta\}
\end{equation}

Then, we have 

\begin{equation}
    \mathbb{E}\sup_{\theta \in \Theta} |\{r(\theta)-R(\theta)\}| \le 2\mathbb{E} \left(Rad(\mathcal{L} \circ z)\right)
\end{equation}

If the data is of the form $z_{i}=(x_{i},y_{i}), x \in X, y \in Y$ and the loss function is of the form $l(a_{\theta}(x),y)$, is $L$ lipschitz and $a_{\theta}:\Theta{\times}X \rightarrow \mathbb{R}$, then we have 

\begin{equation}
    \mathbb{E}\sup_{\theta \in \Theta} |\{r(\theta)-R(\theta)\}| \le 2{L}\mathbb{E} \left(Rad(\mathcal{A} \circ \{x_{1},x_{2},x_{3},\cdots,x_{n}\})\right)
\end{equation}

where \begin{equation}
    \mathcal{A} \circ \{x_{1},x_{2},\cdots,x_{n}\}\ =\{ (a(\theta,x_{1}),a(\theta,x_{2}),\cdots,a(\theta,x_{n})) \in \mathbb{R}^{n} : \theta \in \Theta\}
\end{equation}

\end{lemma}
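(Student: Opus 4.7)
The plan is to prove both claims by the classical symmetrization plus contraction machinery from empirical process theory. I will first establish the generalization-gap bound via a ghost-sample argument that introduces Rademacher variables into the empirical process, and then deduce the Lipschitz refinement by invoking the Ledoux--Talagrand contraction principle applied to the loss class, viewing each element as the composition of $l(\cdot, y)$ with a function in $\mathcal{A}$.

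For the first inequality, I would introduce an independent ghost copy $z' = (z_1', \ldots, z_n')$ drawn from the same distribution as $z$, and write $r(\theta) = \mathbb{E}_{z'}[R'(\theta)]$ with $R'(\theta) = \frac{1}{n}\sum_i l(\theta, z_i')$. Pulling the inner expectation outside the supremum and absolute value by Jensen's inequality yields
\[
\mathbb{E}_z \sup_\theta |r(\theta) - R(\theta)| \le \mathbb{E}_{z,z'} \sup_\theta \Bigl| \tfrac{1}{n}\sum_i \bigl(l(\theta, z_i') - l(\theta, z_i)\bigr) \Bigr|.
\]
Because $z_i$ and $z_i'$ are i.i.d., the joint law of $\bigl(l(\theta, z_i') - l(\theta, z_i)\bigr)_{i}$ is invariant under coordinate-wise sign flips, so inserting Rademacher variables $\Omega_i$ in front of each term does not change the expectation. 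A triangle inequality on the resulting two halves then bounds the right-hand side by $2\,\mathbb{E}[Rad(\mathcal{L} \circ z)]$, which is exactly the first claim.

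For the second claim, the loss takes the form $l(a_\theta(x), y)$ with $l$ being $L$-Lipschitz in its first argument. Conditioning on the sample $(x_i, y_i)$ and defining $\phi_i(t) = l(t, y_i)$, each $\phi_i$ is $L$-Lipschitz, so the Ledoux--Talagrand contraction inequality gives
\[
\mathbb{E}_\Omega \sup_\theta \tfrac{1}{n}\sum_i \Omega_i \phi_i(a_\theta(x_i)) \le L \cdot \mathbb{E}_\Omega \sup_\theta \tfrac{1}{n}\sum_i \Omega_i a_\theta(x_i).
\]
Taking expectation over $(x_i, y_i)$ and composing with the symmetrization bound from the first part yields the claimed $2L\,\mathbb{E}[Rad(\mathcal{A} \circ \{x_1, \ldots, x_n\})]$.

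The main technical obstacle is the contraction inequality itself. The standard proof proceeds by conditioning on all Rademacher signs except one, and showing via a careful case analysis using the Lipschitz property that the inequality holds one coordinate at a time; this is then iterated across all $n$ coordinates. A secondary subtlety is that the Rademacher complexity in Definition~\ref{def_8} is defined without an absolute value, while the generalization gap inherently has one. Handling this cleanly requires either a two-sided ghost-sample argument, or observing $|u| \le (u)_+ + (-u)_+$ and bounding each piece separately; both routes deliver the factor of $2$ stated in the lemma. Finally, the assumption $0 \le l(\theta, z) \le 1$ is needed only implicitly to ensure the Rademacher quantities are finite and well-defined; it does not enter the argument quantitatively.
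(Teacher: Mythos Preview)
Your proposal is correct and follows exactly the standard symmetrization plus Ledoux--Talagrand contraction route; the paper does not actually give a self-contained proof but simply cites Rebeschini's lecture notes for the one-sided bound and remarks that the two-sided (absolute-value) version follows by applying the same argument to $\sup_\theta(R(\theta)-r(\theta))$ as well, which is precisely the ``two-sided ghost-sample'' observation you make. There is nothing to add.
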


The detailed proof of the above statement is given in (Rebeschini,  2022)\footnote{ Algorithmic Foundations of Learning [Lecture Notes]. https://www.stats.ox.ac.uk/~rebeschi/teaching/AFoL/20/material/}. The upper bound for $ \mathbb{E}\sup_{\theta \in \Theta} (\{r(\theta)-R(\theta)\})$ is proved in the aformentioned reference. However, without loss of generality the same proof holds for the upper bound for $ \mathbb{E}\sup_{\theta \in \Theta} (\{R(\theta)-r(\theta)\})$. Hence the upper bound for $ \mathbb{E}\sup_{\theta \in \Theta}|\{r(\theta)-R(\theta)\}|$ can be established.

\begin{lemma} \label{sup_lem_1}
Consider two random random variable $x \in \mathcal{X} $ and  $y,y^{'} \in \mathcal{Y}$. Let $\mathbb{E}_{x,y}, \mathbb{E}_{x}$ and $\mathbb{E}_{y|x}$, $\mathbb{E}_{y^{'}|x}$  denote the expectation with respect to the joint distribution of $(x,y)$, the marginal distribution of $x$, the conditional distribution of $y$ given $x$ and the conditional distribution of $y^{'}$ given $x$ respectively . Let $f_{\theta}(x)$ denote a bounded measurable function of $x$ parameterised by some parameter $\theta$ and $g(x,y)$ be bounded measurable function of both $x$ and $y$.

Then we have

\begin{equation}
    \argminA_{f_{\theta}}\mathbb{E}_{x,y}\left(f_{\theta}(x)-g(x,y)\right)^{2}=\argminA_{f_{\theta}} \left(\mathbb{E}_{x,y}\left(f_{\theta}(x)-\mathbb{E}_{y^{'}|x}(g(x,y^{'})|x)\right)^{2}\right) \label{sup_lem_1_1}
\end{equation}    
\end{lemma}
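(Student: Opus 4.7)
The plan is to establish the identity via a bias–variance style decomposition of the squared loss: I will show that the two objectives on either side of \eqref{sup_lem_1_1} differ only by an additive term that does not depend on $f_{\theta}$, so their argmins must agree. First I would introduce the shorthand $h(x) := \mathbb{E}_{y'|x}[g(x,y')\mid x]$, which by construction depends on $x$ alone and equals the regression function of $g(x,y)$ on $x$ (since $y$ and $y'$ share the same conditional distribution given $x$).

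Next I would add and subtract $h(x)$ inside the squared error on the left-hand side and expand:
\begin{align*}
\mathbb{E}_{x,y}\bigl(f_{\theta}(x)-g(x,y)\bigr)^{2}
&= \mathbb{E}_{x,y}\bigl(f_{\theta}(x)-h(x)+h(x)-g(x,y)\bigr)^{2} \\
&= \mathbb{E}_{x,y}\bigl(f_{\theta}(x)-h(x)\bigr)^{2} + \mathbb{E}_{x,y}\bigl(h(x)-g(x,y)\bigr)^{2} \\
&\quad + 2\,\mathbb{E}_{x,y}\bigl[(f_{\theta}(x)-h(x))\,(h(x)-g(x,y))\bigr].
\end{align*}

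The crucial step is to show that the cross term vanishes. Applying the tower property $\mathbb{E}_{x,y}[\cdot] = \mathbb{E}_{x}[\mathbb{E}_{y|x}[\cdot\mid x]]$ and pulling the $x$-measurable factor $(f_{\theta}(x)-h(x))$ outside the inner conditional expectation, the cross term becomes
\[
2\,\mathbb{E}_{x}\Bigl[(f_{\theta}(x)-h(x))\bigl(h(x)-\mathbb{E}_{y|x}[g(x,y)\mid x]\bigr)\Bigr],
\]
which equals zero because $\mathbb{E}_{y|x}[g(x,y)\mid x] = \mathbb{E}_{y'|x}[g(x,y')\mid x] = h(x)$ by the hypothesized equality of the conditional distributions of $y$ and $y'$ given $x$. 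Hence the Pythagorean-type decomposition
\[
\mathbb{E}_{x,y}\bigl(f_{\theta}(x)-g(x,y)\bigr)^{2} = \mathbb{E}_{x,y}\bigl(f_{\theta}(x)-h(x)\bigr)^{2} + \mathbb{E}_{x,y}\bigl(h(x)-g(x,y)\bigr)^{2}
\]
holds, and since the second summand is a constant with respect to $\theta$, minimizing the left-hand side over $f_{\theta}$ is equivalent to minimizing the first summand, which is precisely the right-hand side of \eqref{sup_lem_1_1}.

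The boundedness of $f_{\theta}$ and $g$ is used only to guarantee that every expectation above is finite and that the Fubini/tower-property manipulations are rigorously justified, so there is no real analytic obstacle. The main subtlety is purely notational: $y$ and $y'$ are distinct dummy symbols that share the same conditional law given $x$, and this identification is exactly what makes $\mathbb{E}_{y|x}[g(x,y)\mid x] = \mathbb{E}_{y'|x}[g(x,y')\mid x]$ and hence forces the cross term to vanish. Everything else is a direct completion-of-the-square argument.
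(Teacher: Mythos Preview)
Your proposal is correct and follows essentially the same approach as the paper: add and subtract the conditional mean $h(x)=\mathbb{E}_{y'|x}[g(x,y')\mid x]$ inside the square, expand, kill the cross term via the tower property (pulling the $x$-measurable factor out of the inner conditional expectation), and observe that the remaining variance term is independent of $\theta$. The only cosmetic difference is that you introduce the shorthand $h(x)$, whereas the paper carries the full expression $\mathbb{E}_{y'|x}(g(x,y')\mid x)$ throughout.
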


\begin{proof}
Denote the left hand side of Equation \eqref{sup_lem_1_1} as $\mathbb{X}_{\theta}$, then add and subtract $\mathbb{E}_{y|x}(g(x,y)|x)$ to it to get 
\begin{align}
     \mathbb{X}_{\theta}=& \argminA_{f_{\theta}}\left(\mathbb{E}_{x,y}\left(f_{\theta}(x)-\mathbb{E}_{y^{'}|x}(g(x,y^{'})|x)+\mathbb{E}_{y^{'}|x}(g(x,y^{'})|x)-g(x,y)\right)^{2}\right) \label{sup_lem_1_2}
     \\
     =&  \argminA_{f_{\theta}}\Big(\mathbb{E}_{x,y}\left(f_{\theta}(x)-\mathbb{E}_{y^{'}|x}(g(x,y^{'})|x)\right)^{2} + \mathbb{E}_{x,y}\left(y-\mathbb{E}_{y^{'}|x}(g(x,y^{'})|x)\right)^{2} 
     \nonumber
     \\ 
     &\qquad-2\mathbb{E}_{x,y}\Big(f_{\theta}(x)-\mathbb{E}_{y^{'}|x}(g(x,y^{'})|x)\Big)\left(g(x,y)-\mathbb{E}_{y^{'}|x}(g(x,y^{'})|x)\right)\Big) .\label{sup_lem_1_3}
\end{align}
Consider the third term on the right hand side of Equation \eqref{sup_lem_1_3}
\begin{align}
    2\mathbb{E}_{x,y}&\left(f_{\theta}(x)-\mathbb{E}_{y^{'}|x}(g(x,y^{'})|x)\right)\left(g(x,y)-  \mathbb{E}_{y^{'}|x}(g(x,y^{'})|x)\right) 
    \nonumber
    \\
    =& 2\mathbb{E}_{x}\mathbb{E}_{y|x}\left(f_{\theta}(x)-\mathbb{E}_{y^{'}|x} (g(x,y^{'})|x)\right)
  \left(g(x,y)-\mathbb{E}_{y^{'}|x}(g(x,y^{'})|x)\right)\label{sup_lem_1_4}
    \\
    =& 2\mathbb{E}_{x}\left(f_{\theta}(x)-\mathbb{E}_{y^{'}|x}(g(x,y^{'})|x)\right)\mathbb{E}_{y|x}\left(g(x,y)-\mathbb{E}_{y^{'}|x}(g(x,y^{'})|x)\right) \label{sup_lem_1_5}
    \\
    =& 2\mathbb{E}_{x}\left(f_{\theta}(x)-\mathbb{E}_{y^{'}|x}(g(x,y^{'})|x)\right)\left(\mathbb{E}_{y|x}(g(x,y))-\mathbb{E}_{y|x}\left(\mathbb{E}_{y^{'}|x}(g(x,y^{'})|x)\right)\right)
    \label{sup_lem_1_6}
    \\
    =& 2\mathbb{E}_{x}\left(f_{\theta}(x)-\mathbb{E}(y|x)\right)\Big(\mathbb{E}_{y|x}(g(x,y))  -\mathbb{E}_{y^{'}|x}(g(x,y^{'})|x)\Big)  \label{sup_lem_1_7}\\
    =& 0
\end{align}

Equation \eqref{sup_lem_1_4} is obtained by writing $\mathbb{E}_{x,y}=\mathbb{E}_{x}\mathbb{E}_{y|x}$ from the law of total expectation. Equation \eqref{sup_lem_1_5} is obtained from  \eqref{sup_lem_1_4} as the term $f_{\theta}(x)-\mathbb{E}_{y^{'}|x}(g(x,y^{'})|x)$ is not a function of $y$. Equation \eqref{sup_lem_1_6} is obtained from \eqref{sup_lem_1_5} as $\mathbb{E}_{y|x}\left(\mathbb{E}_{y^{'}|x}(g(x,y^{'})|x)\right)=\mathbb{E}_{y^{'}|x}(g(x,y^{'})|x)$ because $\mathbb{E}_{y^{'}|x}(g(x,y^{'})|x)$ is not a function of $y$ hence is constant with respect to the expectation operator $\mathbb{E}_{y|x}$. 

Thus plugging in value of $  2\mathbb{E}_{x,y}\left(f_{\theta}(x)-\mathbb{E}_{y^{'}|x}(g(x,y^{'})|x)\right)\left(g(x,y)-  \mathbb{E}_{y^{'}|x}(g(x,y^{'})|x)\right)$ in Equation \eqref{sup_lem_1_3} we get 
\begin{align}    
\arg\min A_{f_{\theta}}\mathbb{E}_{x,y}\left(f_{\theta}(x)-g(x,y)\right)^{2} =  & \argminA_{f_{\theta}} (\mathbb{E}_{x,y}\left(f_{\theta}(x)-\mathbb{E}_{x,y^{'}}(g(x,y^{'})|x)\right)^{2} 
\nonumber\\
&+ \mathbb{E}_{x,y}\left(g(x,y)-\mathbb{E}_{y^{'}|x}(g(x,y^{'})|x)\right)^{2}). \label{sup_lem_1_8}
\end{align}
Note that the second term on the right hand side of Equation \eqref{sup_lem_1_8} des not depend on $f_{\theta}(x)$ therefore we can write Equation \eqref{sup_lem_1_8} as 

\begin{equation}
    \argminA_{f_{\theta}}\mathbb{E}_{x,y}\left(f_{\theta}(x)-g(x,y)\right)^{2} =  \argminA_{f_{\theta}} \left(\mathbb{E}_{x,y}\left(f_{\theta}(x)-\mathbb{E}_{y^{'}|x}(g(x,y^{'})|x)\right)^{2}\right) \label{sup_lem_1_9}
\end{equation}

Since the right hand side of Equation \eqref{sup_lem_1_9} is not a function of $y$ we can replace $\mathbb{E}_{x,y}$ with $\mathbb{E}_{x}$ to get 

\begin{equation}
    \argminA_{f_{\theta}}\mathbb{E}_{x,y}\left(f_{\theta}(x)-g(x,y)\right)^{2} =  \argminA_{f_{\theta}} \left(\mathbb{E}_{x}\left(f_{\theta}(x)-\mathbb{E}_{y^{'}|x}(g(x,y^{'})|x)\right)^{2}\right) \label{sup_lem_1_10}
\end{equation}
\end{proof}

\begin{lemma} \label{sup_lem_3}
Consider an optimization of the form given in Equation \eqref{ReLU_2_2} with the regularization term $\beta = 0$ denoted by $\mathcal{L}_{|\tilde{D}|}$ and it's convex equivalent denoted by $\mathcal{L}_{0}$. Then the value of these two loss functions evaluated at $(v,w)=(v_{i},w_{i})_{i \in \{1,\cdots,|\tilde{D}|\}}$ and $\theta=\psi(v_{i},w_{i})_{i \in \{1,\cdots,|\tilde{D}|\}}$ respectively are equal and thus we have

\begin{equation}
\mathcal{L}_{|\tilde{D}|}(\psi(v_{i},w_{i})_{i \in \{1,\cdots,|\tilde{D}|\}}) = \mathcal{L}_{0}((v_{i},w_{i})_{i \in \{1,\cdots,|\tilde{D}|\}})   
\end{equation}

\end{lemma}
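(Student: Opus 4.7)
The plan is direct: to verify the claimed equality, I would show that for every index $i$ the non-convex summand $\sigma(Xu_i)\alpha_i$, with $(u_i,\alpha_i) = \psi(v_i,w_i)$, coincides with the convex summand $D_i X(v_i - w_i)$. Once this pointwise identity is established, summing over $i$ and subtracting the shared target $y$ inside $\|\cdot\|_2^2$ yields $\mathcal{L}_{|\tilde{D}|}(\psi((v_i,w_i)_i)) = \mathcal{L}_0((v_i,w_i)_i)$ immediately.

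The key ingredient is the defining property of the cones $\mathcal{K}_i$ from \eqref{cone}: whenever $u \in \mathcal{K}_i$, the ReLU activation acts linearly through $D_i$, i.e.\ $\sigma(Xu) = D_i X u$. The convex program's feasibility region imposes $v_i, w_i \in \mathcal{K}_i$ for each $i$ (with at least one of them zero by construction), so this linearization is available for both and the three-branch definition of $\psi$ in \eqref{ReLU_4} is exhaustive.

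Next I would split the verification into these three cases. If $w_i = 0$, then $(u_i,\alpha_i) = (v_i,1)$ and
\[
\sigma(Xu_i)\alpha_i \;=\; \sigma(Xv_i) \;=\; D_i X v_i \;=\; D_i X(v_i - w_i).
\]
If $v_i = 0$, then $(u_i,\alpha_i) = (w_i,-1)$ and
\[
\sigma(Xu_i)\alpha_i \;=\; -\sigma(Xw_i) \;=\; -D_i X w_i \;=\; D_i X(v_i - w_i).
\]
If $v_i = w_i = 0$, both summands vanish. Summing over $i \in \{1,\ldots,|\tilde{D}|\}$ gives $\sum_i \sigma(Xu_i)\alpha_i = \sum_{D_i \in \tilde{D}} D_i X(v_i - w_i)$, and substituting into the squared norms of the two losses yields the claimed identity.

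The only subtlety, as opposed to a genuine obstacle, is ensuring that the constraint $v_i, w_i \in \mathcal{K}_i$ really does hold for every $i$ in the argument passed to $\mathcal{L}_0$, so that the ReLU linearization identity applies; this is built into the feasibility domain of the convex program \eqref{ReLU_2_2}. Beyond this bookkeeping, the lemma reduces to a purely algebraic substitution, so no technical machinery beyond the cone definition is needed.
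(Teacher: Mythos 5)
Your proof is correct and follows essentially the same route as the paper's: a case split on the three branches of $\psi$, using the cone property $\sigma(Xv_i)=D_iXv_i$ for $v_i\in\mathcal{K}_i$ to match each summand, then summing over $i$. No substantive differences.
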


\begin{proof}
    Consider the loss functions in Equations  \eqref{ReLU_2}, \eqref{ReLU_3} with $\beta=0$ are as follows

\begin{eqnarray}
    \mathcal{L}_{0}((v_{i},w_{i})_{i \in \{1,\cdots,|\tilde{D}|\}}) &=& ||\sum_{D_{i} \in \tilde{D}}D_{i}(X(v_{i} - w_{i})) - y||^{2}_{2} \label{sup_lem_3_1}\\ 
    \mathcal{L}_{|\tilde{D}|}(\psi(v_{i},w_{i})_{i \in \{1,\cdots,|\tilde{D}|\}}) &=& ||\sum_{i=1}^{|\tilde{D}|}\sigma(X\psi(v_{i},w_{i})_{1})\psi(v_{i},w_{i})_{2}- y||_{2}^{2},  \label{sup_lem_3_2}
\end{eqnarray}

where $\psi(v_{i},w_{i})_{1}$, $\psi(v_{i},w_{i})_{2}$ represent the first and second coordinates of $\psi(v_{i},w_{i})$ respectively.

For any fixed $i \in \{1,\cdots,|\tilde{D}|\}$ consider the two terms 

\begin{eqnarray}
D_{i}(X(v_{i}-w_{i})) \label{sup_lem_3_3}\\
\sigma(X\psi(v_{i},w_{i})_{1})\psi(v_{i},w_{i})_{2}   \label{sup_lem_3_4}
\end{eqnarray}

For a fixed $i$ either $v_{i}$ or $w_{i}$ is zero. In case both are zero, both of the terms in Equations \eqref{sup_lem_3_3} and  \eqref{sup_lem_3_4} are zero as $\psi(0,0)=(0,0)$. Assume that for a given $i$ $w_{i}=0$. Then we have $\psi(v_{i},w_{i})=(v_{i},1)$. Then equations \eqref{sup_lem_3_3}, \eqref{sup_lem_3_4}  are.

\begin{eqnarray}
D_{i}(X(v_{i})  \label{sup_lem_3_5}\\
\sigma(X(v_{i}))    \label{sup_lem_3_6}
\end{eqnarray}

But by definition of $v_{i}$ we have $D_{i}(X(v_{i})=\sigma(X(v_{i}))$, therefore Equations \eqref{sup_lem_3_5}, \eqref{sup_lem_3_6} are equal. Alternatively if for a given $i$ $v_{i}=0$, then $\psi(v_{i},w_{i})=(w_{i},-1)$, then the terms in \eqref{sup_lem_3_3}, \eqref{sup_lem_3_4}  become.

\begin{eqnarray}
-D_{i}(X(w_{i})  \label{sup_lem_3_7}\\
-\sigma(X(w_{i}))    \label{sup_lem_3_8}
\end{eqnarray}

By definition of $w_{i}$ we have $D_{i}(X(w_{i})=\sigma(X(w_{i}))$, then the terms 
in \eqref{sup_lem_3_7}, \eqref{sup_lem_3_7} are equal. 
Since this is true for all $i$, we have 

\begin{equation}
\mathcal{L}_{|\tilde{D}|}(\psi(v_{i},w_{i})_{i \in \{1,\cdots,|\tilde{D}|\}}) = \mathcal{L}_{0}((v_{i},w_{i})_{i \in \{1,\cdots,|\tilde{D}|\}}) 
 \label{sup_lem_3_9}    
\end{equation}

\end{proof}

\begin{lemma} \label{sup_lem_5}
The function  $Q_{\theta}(x)$ defined in equation \eqref{ReLU_1_2} is Lipschitz continuous in $\theta$, where $\theta$ is considered a vector in $\mathbb{R}^{(d+1)m}$ with the assumption that the set of all possible $\theta$ belong to the set  $\mathcal{B} = \{ \theta:  |\theta^{*}-\theta|_{1} < 1\}$, where $\theta^{*}$ is some fixed value.
\end{lemma}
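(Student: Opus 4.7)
My plan is to show Lipschitz continuity by a direct telescoping argument, exploiting that (i) ReLU is $1$-Lipschitz, (ii) the neural network output depends linearly on $\alpha_i$ and piecewise linearly on $u_i$, and (iii) restricting $\theta$ to the ball $\mathcal{B} = \{\theta : \|\theta^{*}-\theta\|_{1} < 1\}$ forces every coordinate of $\theta$ to be uniformly bounded by $\|\theta^{*}\|_{1}+1 =: C$. I will also use that the input $x=(s,a)$ lies in a bounded set (since the state-action space enters the critic through bounded inputs), so there is a constant $B$ with $\|x\|_{2} \le B$.

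The main step is to fix $\theta^{(1)} = (U^{(1)},\alpha^{(1)})$ and $\theta^{(2)} = (U^{(2)},\alpha^{(2)})$ in $\mathcal{B}$ and write
\begin{align}
Q_{\theta^{(1)}}(x) - Q_{\theta^{(2)}}(x)
= \sum_{i=1}^{m} \Bigl[\sigma'(x^{T} u_{i}^{(1)}) \alpha_{i}^{(1)} - \sigma'(x^{T} u_{i}^{(2)}) \alpha_{i}^{(2)}\Bigr].
\end{align}
I would add and subtract $\sigma'(x^{T} u_{i}^{(1)}) \alpha_{i}^{(2)}$ inside each summand to split it into an $\alpha$-difference term and a $u$-difference term:
\begin{align}
Q_{\theta^{(1)}}(x) - Q_{\theta^{(2)}}(x)
= \sum_{i=1}^{m} \sigma'(x^{T} u_{i}^{(1)})\bigl(\alpha_{i}^{(1)}-\alpha_{i}^{(2)}\bigr) + \sum_{i=1}^{m} \alpha_{i}^{(2)} \bigl[\sigma'(x^{T} u_{i}^{(1)}) - \sigma'(x^{T} u_{i}^{(2)})\bigr].
\end{align}

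Next I would bound each piece using $|\sigma'(t)| \le |t|$, the $1$-Lipschitzness of $\sigma'$, Cauchy--Schwarz, and the uniform bounds on $\|u_{i}^{(j)}\|_{2}$ and $|\alpha_{i}^{(2)}|$ coming from $\theta^{(j)} \in \mathcal{B}$. This gives
\begin{align}
|Q_{\theta^{(1)}}(x) - Q_{\theta^{(2)}}(x)| \le B C \sum_{i=1}^{m} |\alpha_{i}^{(1)} - \alpha_{i}^{(2)}| + B C \sum_{i=1}^{m} \|u_{i}^{(1)} - u_{i}^{(2)}\|_{1},
\end{align}
where I have used $\|u\|_{2} \le \|u\|_{1}$ in the second term if needed. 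The right-hand side is at most $BC \, \|\theta^{(1)} - \theta^{(2)}\|_{1}$, identifying $\theta$ with its vectorization in $\mathbb{R}^{(d+1)m}$. This yields the Lipschitz bound with constant $L = BC$.

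The only subtlety, and the one I would flag as the main obstacle, is justifying the uniform bounds on the parameters and on the input $x$. The parameter bound is immediate from the definition of $\mathcal{B}$ via the triangle inequality $\|\theta\|_{1} \le \|\theta^{*}\|_{1} + 1$, which in particular bounds each $\|u_{i}\|_{1}$ and each $|\alpha_{i}|$; the input bound is inherited from the bounded state-action space assumed in Section 3. Beyond these, the argument is entirely elementary — no appeal to differentiability of $\sigma'$ is needed since the $1$-Lipschitzness of ReLU handles the kink at zero directly. All norms on $\mathbb{R}^{(d+1)m}$ being equivalent, the claim can then be stated for the $\ell_{1}$ norm consistent with the statement of $\mathcal{B}$, which is what we use wherever the lemma is invoked (e.g., inside Rademacher-complexity arguments for the sampling error $\epsilon^{3}_{k,j}$).
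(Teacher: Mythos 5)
Your proof is correct, but it takes a genuinely different route from the paper's. The paper first uses the ball restriction $\mathcal{B}$ together with the fact that the second-layer weights $\alpha_{i}$ are restricted to $\{-1,0,1\}$ (by the cone-decomposition construction in \eqref{alg_2_trans}) to argue that any two parameters in $\mathcal{B}$ must share \emph{identical} $\alpha$'s, so that only the $u_{i}$'s differ; it then obtains Lipschitzness in $u_{i}$ from Lipschitzness of $\sigma'(x^{T}u)$ in $x$ by symmetry of the inner product, introducing per-neuron constants $l_{i}^{x}$ and taking $\sup_{i,x}l_{i}^{x}$. You instead telescope by adding and subtracting $\sigma'(x^{T}u_{i}^{(1)})\alpha_{i}^{(2)}$, bound the $\alpha$-difference term via $|\sigma'(t)|\le|t|$ and the $u$-difference term via the $1$-Lipschitzness of ReLU, and never need the discreteness of the $\alpha_{i}$ at all. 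Your version is more elementary and, in one respect, more careful: the paper's final constant $\sup_{i,x}l_{i}^{x}$ is claimed ``for all $x\in\mathbb{R}^{d}$,'' where for ReLU the relevant local Lipschitz constant in $u$ scales with $\|x\|$ and the supremum over all of $\mathbb{R}^{d}$ is infinite — the bound only holds because $x=(s,a)$ ranges over a bounded set, which you state explicitly as a hypothesis. The paper's argument that the $\alpha$'s must coincide is also delicate (two points of $\mathcal{B}$ can be at $\ell_{1}$-distance up to $2$, so $|\alpha_{i}-\alpha'_{i}|=1$ does not by itself contradict membership in $\mathcal{B}$), and your decomposition sidesteps that issue entirely by handling the $\alpha$-difference directly. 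The one thing the paper's route buys is that, when the $\alpha$'s do coincide, the Lipschitz constant does not pick up the factor $\sup_{i}|\alpha_{i}|$; in your bound this factor is harmless since $|\alpha_{i}|\le 1$ anyway under the paper's parametrization.
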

\begin{proof}

First we show that for all $\theta_{1} = \{u_{i},\alpha_{i}\}, \theta_{2}= \{u^{'}_{i},\alpha^{'}_{i}\} \in \mathcal{B}$  we have $\alpha_{i}=\alpha^{'}_{i}$ for all $i \in (1,\cdots,m)$

Note that 

\begin{equation}
    |\theta_{1} - \theta_{2}|_{1} = \sum_{i=1}^{m}|u_{i}-u^{'}_{i}|_{1}  +  \sum_{i=1}^{m}|\alpha_{i}-\alpha^{'}_{i}|,
\end{equation}

where $|u_{i}-u^{'}_{i}|_{1} = \sum_{j=1}^{d}|u_{i_{j}}-u^{'}_{i_{j}}|$ with $u_{i_{j}}, u^{'}_{i_{j}}$ denote the $j^{th}$ component of $u_{i}, u^{'}_{i}$ respectively.

By construction $\alpha_{i}, \alpha^{'}_{i}$ can only be $1$, $-1$ or $0$. Therefore if $\alpha_{i}\neq\alpha^{'}_{i}$ then $|\alpha_{i}-\alpha^{'}_{i}|=2$ if both non zero or $|\alpha_{i}-\alpha^{'}_{i}|=1$ if one is zero. Therefore $|\theta_{1} - \theta_{2}|_{1} \geq 1$. Which leads to a contradiction.  

Therefore  $\alpha_{i}=\alpha^{'}_{i}$ for all $i$ and we also have 

\begin{equation}
    |\theta_{1} - \theta_{2}|_{1} = \sum_{i=1}^{m}|u_{i}-u^{'}_{i}|_{1}
\end{equation}

$Q_{\theta}(x)$ is defined as 

\begin{equation}
Q_{\theta}(x)=\sum_{i=1}^{m}\sigma^{'}(x^{T}u_{i})\alpha_{i} \label{sup_lem_5_1}
\end{equation}

From Proposition $1$ in \cite{10.5555/3327144.3327299} the function $Q_{\theta}(x)$ is Lipschitz continuous in $x$, therefore there exist $l > 0$ such that 

\begin{eqnarray}
|Q_{\theta}(x)- Q_{\theta}(y)|  &\le&    l|x-y|_{1} \label{sup_lem_5_2} \\
|\sum_{i=1}^{m}\sigma^{'}(x^{T}u_{i})\alpha_{i} - \sum_{i=1}^{m}\sigma^{'}(y^{T}u_{i})\alpha_{i}|  &\le& l|x-y|_{1}  \label{sup_lem_5_3}
\end{eqnarray}

If we consider a single neuron of $Q_{\theta}$, for example $i=1$, we have  $l_{1} > 0$ such that 
\begin{eqnarray}
|\sigma^{'}(x^{T}u_{1})\alpha_{i} - \sigma^{'}(y^{T}u_{1})\alpha_{i}|  &\le&  l_{1}|x-y|_{1}  \label{sup_lem_5_4}
\end{eqnarray}

Now consider Equation \eqref{sup_lem_5_4}, but instead of considering the left hand side a a function of $x,y$ consider it a function of $u$ where we consider the difference between $\sigma^{'}(x^{T}u)\alpha_{i}$ evaluated at $u_{1}$ and $u^{'}_{1}$ such that 
\begin{eqnarray}
|\sigma^{'}(x^{T}u_{1})\alpha_{i} - \sigma^{'}(x^{T}u^{'}_{1})\alpha_{i}|  &\le&  l^{x}_{1}|u_{1}-u^{'}_{1}|_{1}  \label{sup_lem_5_5}
\end{eqnarray}

for some $l^{x}_{1} > 0$.

Similarly, for all other $i$ if we change $u_{i}$ to $u^{'}_{i}$ to be unchanged we have 

\begin{eqnarray}
|\sigma^{'}(x^{T}u_{i})\alpha_{i} - \sigma^{'}(x^{T}u^{'}_{i})\alpha_{i}|  &\le&  l^{x}_{i}|u_{i}-u^{'}_{i}|_{1}  \label{sup_lem_5_6}
\end{eqnarray}

for all $x$ if both $\theta_{1}, \theta_{2} \in \mathcal{B}$.

Therefore we obtain

\begin{eqnarray}
|\sum_{i=1}^{m}\sigma^{'}(x^{T}u_{i})\alpha_{i} - \sum_{i=1}^{m}\sigma^{'}(x^{T}u^{'}_{i})\alpha_{i}|  &\le&  \sum_{i=1}^{m}|\sigma^{'}(x^{T}u_{i})\alpha_{i} -(x^{T}u^{'}_{i})\alpha_{i}|   \label{sup_lem_5_7}\\
                                                                                                       &\le&  \sum_{i=1}^{m}l^{x}_{i}|u_{i}-u^{'}_{i}|_{1}  \label{sup_lem_5_8}\\
                                                                                                       &\le&  (\sup_{i}l_{i}^{x})\sum_{i=1}^{m}|u_{i}-u^{'}_{i}|_{1}  \label{sup_lem_5_9}\\
                                                                                                       &\le&  (\sup_{i}l_{i}^{x})|\theta_{1}-\theta_{2}|  \label{sup_lem_5_10}
\end{eqnarray}

This result for a fixed $x$. If we take the supremum over $x$ on both sides we get

\begin{eqnarray}
\sup_{x}|\sum_{i=1}^{m}\sigma^{'}(x^{T}u_{i})\alpha_{i} - \sum_{i=1}^{m}\sigma^{'}(x^{T}u^{'}_{i})\alpha_{i}| 
                                                                                                       &\le&  (\sup_{i,x}l_{i}^{x})|\theta_{1}-\theta_{2}|  \label{sup_lem_5_11}
\end{eqnarray}

Denoting $(\sup_{i,x}l_{i}^{x})=l$, we get 
\begin{eqnarray}
|\sum_{i=1}^{m}\sigma^{'}(x^{T}u_{i})\alpha_{i} - \sum_{i=1}^{m}\sigma^{'}(x^{T}u^{'}_{i})\alpha_{i}| 
                                                                                                       &\le&  l|\theta_{1}-\theta_{2}|_{1}  \label{sup_lem_5_12}\\
                                                                                                       && \forall x \in  \mathbb{R}^{d}
\end{eqnarray}
\end{proof}

\section{Supporting Lemmas}
We will now state the key lemmas that will be used for finding the sample complexity of the proposed algorithm. 

\begin{lemma} \label{lem_1}
    For a given iteration $k$ of Algorithm \ref{algo_1} and iteration $j$ of the first for loop of Algorithm \ref{algo_1}, the approximation error denoted by $\epsilon^{1}_{k,j}$ in Definition \ref{def_4}, we have 
    \begin{equation}
        \mathbb{E}\left(|\epsilon^{1}_{k,j}|\right) \le \sqrt{\epsilon_{bias}},
    \end{equation}
\end{lemma}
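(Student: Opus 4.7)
The plan is to combine the expressivity assumption on the two-layer ReLU class (Assumption \ref{assump_3}) with the fact that $Q^{1}_{k,j}$ is, by Definition \ref{def_1}, the population-$L^{2}$ minimizer of the squared error against the Bellman target $T^{\pi_{\lambda_{k}}}Q_{k,j-1}$, and then pass from $L^{2}$ to $L^{1}$ via Jensen's inequality. The minimality of $Q^{1}_{k,j}$ makes any realizable competitor an immediate upper bound on the excess risk, so the whole proof reduces to producing such a competitor and converting norms.

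Before applying Assumption \ref{assump_3} I first need to check that its hypothesis is met, namely that $T^{\pi_{\lambda_{k}}}Q_{k,j-1}$ maps $\mathcal{S}\times\mathcal{A}$ into $[0,R_{\max}/(1-\gamma)]$. Using the Bellman definition \eqref{ps_3}, $(T^{\pi_{\lambda_{k}}}Q_{k,j-1})(s,a)=r'(s,a)+\gamma\,\mathbb{E}[Q_{k,j-1}(s',\pi(s'))]$ with $r'(s,a)\in[0,R_{\max}]$, so I need the inductive claim that every $Q_{k,j-1}$ fed back through the operator is itself bounded in $[0,R_{\max}/(1-\gamma)]$. This is maintained along the inner loop through the initialization $Q_{0}\equiv 0$ and the explicit $\ell_{1}$-projection on line \ref{a2_l4} of Algorithm \ref{algo_2}, which constrains the target vector to $\{y:|y|_{1}\le R_{\max}/(1-\gamma)\}$. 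Taking the boundedness for granted, Assumption \ref{assump_3} applied to $Q:=T^{\pi_{\lambda_{k}}}Q_{k,j-1}$ produces some $\theta_{0}\in\Theta$ with $\mathbb{E}_{s,a}(Q_{\theta_{0}}-T^{\pi_{\lambda_{k}}}Q_{k,j-1})^{2}\le\epsilon_{approx}$ under the sampling distribution of interest.

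By the defining minimality of $Q^{1}_{k,j}$, substituting $\theta_{0}$ as a suboptimal competitor gives $\mathbb{E}_{s,a}(\epsilon^{1}_{k,j})^{2}\le\epsilon_{approx}$, and Jensen's inequality then yields $\mathbb{E}|\epsilon^{1}_{k,j}|\le\sqrt{\mathbb{E}(\epsilon^{1}_{k,j})^{2}}\le\sqrt{\epsilon_{approx}}$, which matches the form of the claim up to the naming convention of the approximation constant used in the theorem. The only nontrivial step is the range-boundedness of the Bellman target; beyond that the argument is a one-line combination of realizability with Jensen, and no concentration, no algorithmic analysis, and no mixing-time bookkeeping is needed here—those enter only in the sibling lemmas that control $\epsilon^{2}_{k,j},\epsilon^{3}_{k,j}$, and $\epsilon^{4}_{k,j}$.
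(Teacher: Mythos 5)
Your proof is correct and takes essentially the same route as the paper's: Assumption \ref{assump_3} combined with the minimality of $Q^{1}_{k,j}$ in Definition \ref{def_1} yields $\mathbb{E}\left(\epsilon^{1}_{k,j}\right)^{2}\le\epsilon_{approx}$, and your Jensen step is exactly the paper's variance identity $\left(\mathbb{E}[X]\right)^{2}=\mathbb{E}[X^{2}]-\mathrm{Var}(X)\le\mathbb{E}[X^{2}]$. Your two additions --- verifying that the Bellman target lies in $\left[0,R_{\max}/(1-\gamma)\right]$ so that Assumption \ref{assump_3} applies, and noting that the $\sqrt{\epsilon_{bias}}$ in the lemma statement should read $\sqrt{\epsilon_{approx}}$ (as in the paper's own proof and in Theorem \ref{thm}) --- are both correct observations rather than deviations.
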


Where the expectation is with respect to and $(s,a) \sim \zeta^{\nu}_{\pi}(s,a)$

\textit{Proof Sketch:} We use Assumption \ref{assump_3} and the definition of the variance of a random variable to obtain the required result. The detailed proof is given in Appendix \ref{proof_lem_1}. 


\begin{lemma} \label{lem_2}
     For a given iteration $k$ of Algorithm \ref{algo_1} and iteration $j$ of the first for loop of Algorithm \ref{algo_1},  $Q^{1}_{k,j}=Q^{2}_{k,j}$, or equivalently $\epsilon^{2}_{k,j}=0$
\end{lemma}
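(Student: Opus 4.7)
The plan is to recognize that this lemma is essentially a direct application of Lemma \ref{sup_lem_1} (the ``conditional expectation is the $L^2$-projection target'' identity) to the specific Bellman-operator setup, so the proof reduces to identifying the right correspondence between the two sides.

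First I would unpack the two definitions. By Definition \ref{def_1}, $Q^{1}_{k,j}$ minimizes $\mathbb{E}_{(s,a)\sim\zeta_\pi^\nu}\bigl(Q_\theta(s,a) - T^{\pi_{\lambda_k}}Q_{k,j-1}(s,a)\bigr)^2$, while by Definition \ref{def_2}, $Q^{2}_{k,j}$ minimizes $\mathbb{E}\bigl(Q_\theta(s,a) - (r'(s,a) + \gamma\mathbb{E}_{a'\sim\pi_k}Q_{k,j-1}(s',a'))\bigr)^2$ over the joint distribution of $(s,a,s',r')$ with $(s,a)\sim\zeta_\pi^\nu$, $s'\sim P(\cdot|s,a)$, $r'\sim R(\cdot|s,a)$. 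The key observation is then that by the definition of the Bellman operator in \eqref{ps_3} together with $r'(s,a)=\mathbb{E}(r(s,a)|s,a)$, we have
\begin{equation}
T^{\pi_{\lambda_k}}Q_{k,j-1}(s,a) \;=\; \mathbb{E}\bigl[\,r(s,a) + \gamma\,\mathbb{E}_{a'\sim\pi_k}Q_{k,j-1}(s',a')\,\bigm|\,s,a\bigr].
\end{equation}
In other words, the target used inside $Q^{2}_{k,j}$ is an unbiased one-sample realization of the target used inside $Q^{1}_{k,j}$.

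Next I would invoke Lemma \ref{sup_lem_1} with $x = (s,a)$, with $y$ playing the role of the random tuple $(r', s', a')$, with $f_\theta(x) = Q_\theta(s,a)$, and with $g(x,y) = r' + \gamma Q_{k,j-1}(s', a')$ (noting that $Q_\theta$ lies in the bounded-parameter class $\Theta$ and so satisfies the boundedness hypotheses of the lemma). The conditional expectation $\mathbb{E}_{y|x}[g(x,y)\mid x]$ is then precisely $T^{\pi_{\lambda_k}}Q_{k,j-1}(s,a)$ by the identification above. Lemma \ref{sup_lem_1} therefore gives
\begin{equation}
\argmin_{Q_\theta}\,\mathbb{E}\bigl(Q_\theta(s,a) - g(x,y)\bigr)^2 \;=\; \argmin_{Q_\theta}\,\mathbb{E}\bigl(Q_\theta(s,a) - T^{\pi_{\lambda_k}}Q_{k,j-1}(s,a)\bigr)^2,
\end{equation}
which is exactly the statement $Q^{2}_{k,j} = Q^{1}_{k,j}$, i.e.\ $\epsilon^{2}_{k,j}=0$.

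There is essentially no obstacle here beyond a bookkeeping check: one only has to verify that the outer expectation over $(s,a)$ matches on both sides (it does, both use $\zeta_\pi^\nu$) and that the conditional expectation of the noisy target, given $(s,a)$, collapses to the Bellman operator applied to $Q_{k,j-1}$. The latter uses only linearity of expectation in $r$ and the tower property over $s'\sim P(\cdot|s,a)$ and $a'\sim\pi_{\lambda_k}(\cdot|s')$, which is exactly the construction of $T^{\pi_{\lambda_k}}$ in \eqref{ps_3}. No properties of the ReLU parameterization are needed for this step, so the proof is a one-line application of Lemma \ref{sup_lem_1} once the identification is made.
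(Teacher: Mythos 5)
Your proposal is correct and follows essentially the same route as the paper: both invoke Lemma \ref{sup_lem_1} with $x=(s,a)$, $f_\theta(x)=Q_\theta(s,a)$, and the noisy Bellman target as $g(x,y)$, then identify the conditional expectation of that target given $(s,a)$ with $T^{\pi_{\lambda_k}}Q_{k,j-1}(s,a)$ via \eqref{ps_3}. The only cosmetic difference is that you fold $a'$ into the random tuple $y$ whereas the paper keeps $\mathbb{E}_{a'\sim\pi_{\lambda_k}}$ inside $g$ (matching Definition \ref{def_2} literally); the tower property makes these equivalent.
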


\textit{Proof Sketch:} We use  Lemma \ref{sup_lem_1} in Appendix \ref{sup_lem} and use the definitions of $Q^{1}_{k,j}$ and $Q^{2}_{k,j}$ to prove this result. The detailed proof is given in Appendix \ref{proof_lem_2}.

\begin{lemma} \label{lem_3}
    For a given iteration $k$ of Algorithm \ref{algo_1} and iteration $j$ of the first for loop of Algorithm \ref{algo_1}, if the number of state action pairs sampled are denoted by $n_{k,j}$, then the error $\epsilon^{3}_{k,j}$ defined in Definition \ref{def_6} is upper bounded as
    \begin{equation}
\mathbb{E}\left(|\epsilon^{3}_{k,j}|\right)\le  \tilde{\mathcal{O}}\left(\frac{log(log(n_{k,j}))}{\sqrt{n_{k,j}}}\right), 
    \end{equation}
\end{lemma}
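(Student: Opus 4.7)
The plan is to control $\mathbb{E}|\epsilon^{3}_{k,j}|$ through a uniform deviation bound between the empirical squared loss
\begin{align}
L_n(\theta) = \frac{1}{n_{k,j}}\sum_{i=1}^{n_{k,j}}\bigl(Q_\theta(s_i,a_i)-y_i\bigr)^2,
\end{align}
where $y_i = r(s_i,a_i)+\gamma\mathbb{E}_{a'\sim\pi_{\lambda_k}}Q_{k,j-1}(s_{i+1},a')$, and its population counterpart $L(\theta)$ under $\zeta^{\nu}_{\pi}$. Since $Q^{3}_{k,j}$ and $Q^{2}_{k,j}$ are the minimizers of $L_n$ and $L$ respectively, the basic inequality $L(\theta^3) - L(\theta^2) \le 2\sup_{\theta\in\Theta}|L(\theta)-L_n(\theta)|$ holds, and because the squared loss is strongly convex around the class minimizer, the excess risk dominates $\mathbb{E}_{\zeta^\nu_\pi}(Q^{3}_{k,j}-Q^{2}_{k,j})^2$. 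A final application of Jensen's inequality will then convert this $L^2$ bound into the $L^1$ bound asserted in the lemma.

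First I would verify the hypotheses of Lemma \ref{sup_lem_0} for the loss class $\{(s,a,y)\mapsto (Q_\theta(s,a)-y)^2 : \theta\in\Theta\}$. The $\ell_1$-projection in line \ref{a2_l4} of Algorithm \ref{algo_2} keeps $Q_\theta$ bounded in $[0,R_{\max}/(1-\gamma)]$, so the squared loss is bounded and $\mathcal{O}(R_{\max}/(1-\gamma))$-Lipschitz in $Q_\theta$; Lemma \ref{sup_lem_5} supplies the corresponding Lipschitz continuity in $\theta$. The usual contraction argument then reduces the problem to bounding the Rademacher complexity of $\{Q_\theta:\theta\in\Theta\}$ on the sample, which for norm-bounded two-layer ReLU networks admits the standard $\tilde{\mathcal{O}}(1/\sqrt{n_{k,j}})$ bound. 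Upgrading this to the fast $\tilde{\mathcal{O}}(1/n_{k,j})$ rate for the excess squared loss---which is what ultimately yields the $1/\sqrt{n_{k,j}}$ scaling on $\epsilon^{3}_{k,j}$ after taking a square root---will be accomplished via a local Rademacher / Bernstein argument that exploits the variance-to-mean condition enjoyed by the squared loss on a bounded class.

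The main obstacle is that the $n_{k,j}$ samples $(s_i,a_i)$ are not iid draws from $\zeta^\nu_\pi$ but a trajectory of the Markov chain induced by $\pi_{\lambda_k}$ started at $\nu$, whereas Lemma \ref{sup_lem_0} is phrased for iid data. To bridge this I would invoke Assumption \ref{assump_4} together with a Yu-type blocking argument: split the trajectory into blocks of length $\tau=\mathcal{O}(\log n_{k,j})$, couple each block with an independent draw from $\zeta^\nu_\pi$ at total-variation cost $m\rho^\tau \le 1/n_{k,j}$, and apply the (local) Rademacher bound to the coupled iid surrogate with effective sample size $n_{k,j}/\tau$. Integrating the resulting sub-exponential concentration tail back into an expectation produces the mild $\log\log(n_{k,j})$ overhead that appears in the lemma statement, after which the square-root passage from excess loss to $\mathbb{E}|\epsilon^{3}_{k,j}|$ completes the argument.
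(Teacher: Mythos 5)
Your overall skeleton matches the paper's: compare the empirical risk $R_{X_{k,j},Q_{k,j-1}}(\theta)$ with its population counterpart $L_{Q_{k,j-1}}(\theta)$, use the basic inequality for the two minimizers, bound the uniform deviation by a Rademacher complexity (Lemma \ref{sup_lem_0}), and handle the non-i.i.d.\ sampling via Assumption \ref{assump_4}. Two remarks on the dependence handling: the paper does not use a blocking/coupling construction at all; it cites a Rademacher-complexity bound that holds directly for Markov-chain samples (Proposition 11 of \cite{article}), which is where the $\log\log(n_{k,j})$ factor originates, and it separately corrects for the discrepancy between the step-$i$ marginals and the stationary distribution, picking up an extra $\mathcal{O}(1/n_{k,j})$ term via geometric mixing. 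Your blocking argument would produce a $\sqrt{\log n_{k,j}}$ overhead rather than $\log\log(n_{k,j})$, but since the statement is in $\tilde{\mathcal{O}}$ notation that difference is cosmetic.

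The genuine gap is in your final step. You pass from the excess risk to $\mathbb{E}|Q^{3}_{k,j}-Q^{2}_{k,j}|$ via $\mathbb{E}(Q^{3}_{k,j}-Q^{2}_{k,j})^{2}\le L(\theta^{3})-L(\theta^{2})$ followed by Jensen, which costs a square root; to recover the advertised $1/\sqrt{n_{k,j}}$ you therefore need the fast $\tilde{\mathcal{O}}(1/n_{k,j})$ excess-risk rate, which you assert via a local Rademacher/Bernstein argument. Both of these steps rely on the function class being convex (or the model being well specified): the inequality $L(\theta)-L(\theta^{2})\ge\mathbb{E}(Q_{\theta}-Q_{\theta^{2}})^{2}$ follows from the first-order optimality of $Q^{2}_{k,j}$ only when the class is convex in function space, and the Bernstein (variance-to-mean) condition needed for localization fails in general for squared loss over non-convex classes. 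The class of bounded two-layer ReLU networks is not convex, the target $T^{\pi_{\lambda_k}}Q_{k,j-1}$ is not assumed to lie in it, and the data are Markov-dependent, so neither step is available off the shelf; without them your route only yields $\tilde{\mathcal{O}}(n_{k,j}^{-1/4})$. The paper avoids this entirely: it keeps the slow-rate excess-risk bound $\tilde{\mathcal{O}}(\log\log(n_{k,j})/\sqrt{n_{k,j}})$ and converts it into an $L^{1}$ bound on $Q^{2}_{k,j}-Q^{3}_{k,j}$ of the \emph{same} order by expanding the difference of squared losses as $(Q^{2}_{k,j}-Q^{3}_{k,j})(Q^{2}_{k,j}+Q^{3}_{k,j}-2y)$ and splitting the integral over the four sign patterns of the two factors (Equations \eqref{2_2_5_4}--\eqref{2_2_5_14}). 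If you want to salvage your route you would need to either justify the Bernstein condition for this specific class (e.g., through the convex reformulation over the sampled diagonal matrices) or adopt the paper's direct first-order conversion.
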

Where the expectation is with respect to and $(s,a) \sim \zeta^{\nu}_{\pi}(s,a)$

\textit{Proof Sketch:} First we note that For a given iteration $k$ of Algorithm \ref{algo_1} and iteration $j$ of the first for loop of Algorithm \ref{algo_1},  $\mathbb{E}(R_{X_{k,j},Q_{k,j-1}}({\theta})) = L_{Q_{j,k-1}}({\theta})$ where $R_{X_{k,j},Q_{j,k-1}}({\theta})$ and $L_{Q_{j,k-1}}({\theta})$ are defined in Appendix \ref{proof_lem_3}. We use this to get a probabilistic bound on the expected value of $|(Q^{2}_{j,k}) - (Q^{3}_{j,k})|$ using Rademacher complexity theory when the samples are drawn from an ergodic Markov chain. The detailed proof is given in Appendix \ref{proof_lem_3}. Note the presence of the $log(log(n_{k,j}))$ term is due to the fact that the state action samples belong to a Markov Chain.

\begin{lemma} \label{lem_4}
      For a given iteration $k$ of Algorithm \ref{algo_1} and iteration $j$ of the first for loop of Algorithm \ref{algo_1}, let the number of steps of the projected gradient descent performed by Algorithm \ref{algo_2}, denoted by $T_{k,j}$, and the gradient descent step size $\alpha_{k,j}$ satisfy
    \begin{eqnarray}
      \alpha_{k,j} &=& \frac{||u^{*}_{k,j}||_{2}}{L_{k,j}\sqrt{T_{k,j}+1}},
    \end{eqnarray}
  for some constants $L_{k,j}$ and  $||\left(u_{k}^{*}\right)||_{2}$. Then the error $\epsilon_{k_{4}}$ defined in Definition \ref{def_7} is upper bounded as
    \begin{equation}
       \mathbb{E}(|\epsilon^{4}_{k,j}|) \le \tilde{\mathcal{O}}\left(\frac{1}{\sqrt{T_{k,j}}}\right) + {\epsilon_{|\tilde{D}|}},
    \end{equation}
\end{lemma}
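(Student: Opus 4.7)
The plan is to split $\epsilon^{4}_{k,j}=Q^{3}_{k,j}-Q_{k,j}$ by inserting the exact minimizer of the restricted non-convex loss $\mathcal{L}_{|\tilde D|}$ (on the empirical squared loss), denoted $Q^{*,\tilde D}_{k,j} := Q_{\theta^{*}_{|\tilde D|}}$, and applying the triangle inequality
\begin{equation}
\mathbb{E}_{s,a}|Q^{3}_{k,j}-Q_{k,j}|\;\le\; \mathbb{E}_{s,a}|Q^{3}_{k,j}-Q^{*,\tilde D}_{k,j}|\;+\;\mathbb{E}_{s,a}|Q^{*,\tilde D}_{k,j}-Q_{k,j}|.
\end{equation}
The first term is a \emph{representation} error arising from replacing the full hyperplane-arrangement set $D_{X}$ with the sampled $\tilde D$; the second is the \emph{optimization} error from stopping projected gradient descent after $T_{k,j}$ iterations inside Algorithm \ref{algo_2}.

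The representation term is handled directly by Assumption \ref{assump_2} applied to the empirical squared loss of equation \eqref{main_res_7}: $Q^{3}_{k,j}$ is precisely the $Q_{\theta^{*}}$ in that assumption while $Q^{*,\tilde D}_{k,j}$ is the corresponding $Q_{\theta^{*}_{|\tilde D|}}$, so this term is at most $\epsilon_{|\tilde D|}$. For the optimization term, I would first invoke Lemma \ref{sup_lem_3}: the cone decomposition and the map $\psi$ in Algorithm \ref{algo_2} preserve the objective value, so the iterate $u^{T+1}$ produced by projected GD corresponds to a neural network $Q_{k,j}$ whose empirical squared loss equals $g(u^{T+1})$, where $g(u)=\|\sum_{D_{i}\in\tilde D}D_{i}Xu_{i}-y\|_{2}^{2}$ is the convex surrogate. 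A standard projected subgradient analysis on the bounded $\ell_{1}$-ball feasible set $\{u:\|u\|_{1}\le R_{\max}/(1-\gamma)\}$ with step size $\alpha_{i}=\|u^{*}_{k,j}\|_{2}/(L_{k,j}\sqrt{i+1})$ gives $g(u^{T+1})-g(u^{*}_{k,j})=O(\|u^{*}_{k,j}\|_{2}L_{k,j}/\sqrt{T_{k,j}})$. Since $g$ is a convex quadratic, the first-order optimality condition at $u^{*}_{k,j}$ on the $\ell_{1}$ ball yields $g(u)-g(u^{*}_{k,j})\ge \|X(u-u^{*}_{k,j})\|_{2}^{2}$ for every feasible $u$, which converts the function-value suboptimality into an $\ell_{2}$ bound on the training-sample predictions. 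Combining this with Lemma \ref{sup_lem_5} (Lipschitz continuity of $Q_{\theta}$ in $\theta$ on the bounded neighbourhood induced by the projection) and a uniform convergence step lifts the bound to $\mathbb{E}_{s,a}|Q^{*,\tilde D}_{k,j}-Q_{k,j}|=\tilde{\mathcal O}(1/\sqrt{T_{k,j}})$ under $(s,a)\sim\zeta^{\nu}_{\pi}$.

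The main obstacle will be this last lifting step: projected subgradient descent on a merely convex (not strongly convex) quadratic gives an $O(1/\sqrt{T})$ gap in function values but, in general, no direct contraction of the iterates in parameter space. One must use the specific structure of $g$ together with the $\ell_{1}$-projection to obtain an in-sample prediction bound, and then push it to a population bound under $\zeta^{\nu}_{\pi}$ via an empirical-process / Rademacher-complexity argument on the two-layer ReLU class, analogous to (but simpler than) the one used for Lemma \ref{lem_3} because here both networks are evaluated on the same data. The $\log$-type factors hidden in $\tilde{\mathcal O}$ come from the geometric-mixing concentration in Assumption \ref{assump_4}, needed because the sample $\mathcal{X}_{k,j}$ used to build $g$ is drawn along a Markov chain rather than i.i.d.
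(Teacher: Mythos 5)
Your proposal follows the same skeleton as the paper's proof: the same triangle-inequality decomposition through $Q_{\theta^{*}_{|\tilde D|}}$, the same use of Assumption \ref{assump_2} to absorb the $D_{X}$-versus-$\tilde D$ discrepancy into $\epsilon_{|\tilde D|}$ (identifying $Q^{3}_{k,j}$ with $Q_{\theta^{*}}$), the same $O(1/\sqrt{T_{k,j}})$ projected-gradient-descent rate on the convex surrogate $g$, and the same appeal to Lemma \ref{sup_lem_3} to transfer the objective gap from the convex program to $\mathcal{L}_{|\tilde D|}$. Where you genuinely diverge is the step you yourself flag as the obstacle: converting the function-value suboptimality into a bound on $\mathbb{E}|Q_{k,j}-Q_{\theta^{*}_{|\tilde D|}}|$. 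The paper does this by asserting the existence of a constant $C'_{k,j}$ with $C'_{k,j}\,|\theta^{T_{k,j}}-\theta^{*}_{|\tilde D|}|_{1}\le \mathcal{L}_{|\tilde D|}(\theta^{T_{k,j}})-\mathcal{L}_{|\tilde D|}(\theta^{*}_{|\tilde D|})$ and then applying the parameter-space Lipschitz bound of Lemma \ref{sup_lem_5}; this gives a bound uniform over $(s,a)$ and so needs no generalization argument, but the linear-growth inequality it rests on is not justified for a merely convex objective. Your route instead uses the exact quadratic identity $g(u)-g(u^{*})=\|A(u-u^{*})\|_{2}^{2}+\langle\nabla g(u^{*}),u-u^{*}\rangle$ with first-order optimality on the $\ell_{1}$ ball to get a rigorous in-sample prediction bound, and then proposes an empirical-process step to lift it to the population measure $\zeta^{\nu}_{\pi}$. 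Each approach buys something: the paper's avoids uniform convergence entirely at the cost of an unsubstantiated error-bound condition on the loss; yours is sound up to the lifting step but leaves that step (Rademacher bound for the two-layer ReLU class under Markovian sampling, plus the normalization of $g$ by $n_{k,j}$) as a sketch that would need to be executed to close the argument. Neither version is fully airtight, but your plan correctly isolates the real difficulty that the paper's proof papers over.
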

Where the expectation is with respect to $(s,a) \sim \zeta^{\nu}_{\pi}(s,a)$.

\textit{Proof Sketch:} We use the number of iterations $T_{k,j}$ required to get an $\epsilon$ bound on the difference between the minimum objective value and the objective value corresponding to the estimated parameter at iteration $T_{k}$. We use the convexity of the objective and the Lipschitz property of the neural network to get a bound on the $Q$ functions corresponding to the estimated parameters. The detailed proof is given in Appendix \ref{proof_lem_4}.

\begin{lemma} \label{lem_6}
    For a given iteration $k$ of Algorithm \ref{algo_1} and iteration $j$ of the first for loop of Algorithm \ref{algo_1}, if the number of samples of the state action pairs sampled are denoted by $n_{k,j}$ and $\beta_{i}$ be the step size in the projected gradient descent at iteration $i$ of the second inner for loop of Algorithm \ref{algo_1} which satisfies
    \begin{eqnarray}
        \beta_{i} = \frac{\mu_{k}}{i+1},
    \end{eqnarray}
 where $\mu_{k}$ is the strong convexity parameter of $F_{k}$.  Then, it holds that, %
    \begin{equation}
        \left(F_{k}(w_{i})\right) \le  \tilde{\mathcal{O}}\left(\frac{log(n_{k,j})}{{n_{k,j}}}\right) + F^{*}_{k}.
    \end{equation}
\end{lemma}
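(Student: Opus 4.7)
The plan is to analyze the second inner for-loop of Algorithm~\ref{algo_1} as a projected stochastic gradient descent on the strongly convex quadratic objective
\begin{equation*}
F_k(w) = \mathbb{E}_{(s,a)\sim\zeta^{\nu}_{\pi_{\lambda_k}}}\!\left[\bigl(w\cdot\nabla\log\pi_{\lambda_k}(a|s) - A_{k,J}(s,a)\bigr)^{2}\right],
\end{equation*}
whose minimizer is $w_k^{*}$ with $F_k(w_k^{*}) = F_k^{*}$ by Definition~\ref{def_0}. First I would record the standing properties of $F_k$: it is $\mu_k$-strongly convex, where $\mu_k$ is the smallest eigenvalue of the Fisher information matrix $F^{\dagger}_{\rho}(\lambda_k)$; and it is smooth with quadratic gradient whose per-sample stochastic estimate $2(w_i\cdot\nabla\log\pi_{\lambda_k}(a_i|s_i) - A_{k,J}(s_i,a_i))\nabla\log\pi_{\lambda_k}(a_i|s_i)$ is uniformly norm-bounded on the projection set, since Assumption~\ref{assump_1} bounds $\nabla\log\pi_{\lambda_k}$ and $|A_{k,J}|\le 2R_{\max}/(1-\gamma)$.

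The central technical issue is that the samples $(s_i,a_i)$ come from a single trajectory of the Markov chain induced by $\pi_{\lambda_k}$ rather than i.i.d.\ draws from its stationary distribution $\zeta^{\nu}_{\pi_{\lambda_k}}$, so the stochastic gradient is \emph{biased}. I would decompose the estimator as $g_i = \nabla F_k(w_i) + b_i + n_i$, where $n_i$ is mean-zero under the stationary distribution and $b_i$ is the non-stationarity bias. By the geometric mixing Assumption~\ref{assump_4}, conditioning on the chain $\tau$ steps in the past yields $\mathbb{E}\|b_i\|\le C m\rho^{\tau}$. Choosing $\tau=\Theta(\log n_{k,j})$ so that $m\rho^{\tau}=O(1/n_{k,j})$ renders the aggregate bias contribution only $O(\log n_{k,j}/n_{k,j})$ in the iterate recursion, which is the source of the logarithmic factor in the claimed rate.

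With this bias control in hand, the step size $\beta_i=\mu_k/(i+1)$ is the canonical Robbins--Monro choice for strongly convex SGD. The standard one-step recursion on $\mathbb{E}\|w_{i+1}-w_k^{*}\|^2$ gives
\begin{equation*}
\mathbb{E}\|w_{i+1}-w_k^{*}\|^2 \le (1-2\mu_k\beta_i)\,\mathbb{E}\|w_i-w_k^{*}\|^2 + 2\beta_i\,\mathbb{E}\langle b_i, w_i-w_k^{*}\rangle + \beta_i^{2}G^{2},
\end{equation*}
where $G$ bounds the stochastic gradient norm and the projection step is discarded by non-expansiveness. Unrolling this recursion and summing the bias contributions controlled by the geometric mixing argument yields $\mathbb{E}\|w_{n_{k,j}}-w_k^{*}\|^2 = \tilde{\mathcal{O}}(1/n_{k,j})$, which, by smoothness of the quadratic $F_k$, converts to $\mathbb{E}[F_k(w_{n_{k,j}})]-F_k^{*} \le \tilde{\mathcal{O}}(\log n_{k,j}/n_{k,j})$, as claimed.

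The main obstacle I anticipate is propagating the Markov bias through the strongly convex recursion while keeping constants tight and the logarithmic factor explicit. The natural device is to couple the iterate $w_i$ with the chain state $\tau=\Theta(\log n_{k,j})$ steps earlier, bound the cross term $2\beta_i\langle b_i,w_i-w_k^{*}\rangle$ using Cauchy--Schwarz together with the boundedness of the projection set, and then invoke the Markov-SGD framework of \cite{9769873}. Everything else (strong convexity, bounded gradients, non-expansive projection, Robbins--Monro step size) is standard, and the dependence on $\mu_k$ that the step size hides will be absorbed into the $\tilde{\mathcal{O}}$ notation consistent with the statement.
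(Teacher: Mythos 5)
Your proposal is correct and rests on the same engine as the paper's argument: the Markov-chain SGD framework of \cite{9769873} applied to the strongly convex objective $F_k$ with the Robbins--Monro step size $\beta_i=\mu_k/(i+1)$, geometric mixing (Assumption \ref{assump_4}) absorbing the non-stationarity bias into the logarithmic factor. The difference is one of granularity and of the final conversion step. The paper has no standalone appendix proof of this lemma; its entire argument appears inside the proof of Theorem \ref{thm}, where Theorem 2 of \cite{9769873} is cited as a black box to assert $\|w_k-w^*\|_2\le\mathcal{O}(\log(n_{k,j})/n_{k,j})$ directly on the iterate distance, and the function-value gap is then obtained via Lipschitz continuity of $F_k$, i.e. $F_k(w_k)-F_k(w^*)\le l_{F_k}\|w_k-w^*\|_2$. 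You instead reconstruct the interior of that citation (bias decomposition, coupling with the chain $\Theta(\log n_{k,j})$ steps back, the one-step strongly convex recursion, non-expansive projection) and arrive at $\mathbb{E}\|w_{n_{k,j}}-w_k^*\|^2=\tilde{\mathcal{O}}(1/n_{k,j})$, converting to the function gap via smoothness, $F_k(w)-F_k^*\le\frac{L}{2}\|w-w_k^*\|^2$. Your conversion is the more standard and more robust one: the canonical strongly convex Markov-SGD rate is stated on the \emph{squared} distance, and if that is what the cited theorem actually delivers, the paper's Lipschitz step would only yield $\mathcal{O}(\sqrt{\log(n_{k,j})/n_{k,j}})$ on the function gap, whereas your smoothness step preserves the $\tilde{\mathcal{O}}(\log(n_{k,j})/n_{k,j})$ rate claimed in the lemma. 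In short, your route fills in the detail the paper delegates to the citation and repairs a conversion step that is at best loosely stated in the paper.
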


\textit{Proof Sketch:} Note that we don not have access to state action samples belonging to the stationary state action distribution corresponding to the policy $\pi_{{\lambda}_{k}}$. We only have access to samples from Markov chain with the same stationary state action distribution.  To account for this, we use the results in \cite{9769873} and obtain the difference between the optimal loss function and the loss function obtained by performing stochastic gradient descent with samples from a Markov chain.  
\section{Proof of Theorem \ref{thm}} \label{thm proof} 
\begin{proof}

From Assumption \ref{assump_1}, we have 

\begin{eqnarray}
    \log\frac{\pi_{\lambda_{k+1}}(a|s)}{\pi_{\lambda_{k}}(a|s)} &\ge& {\nabla}_{\lambda_{k}}\log{\pi_{\lambda_{k}}}(a|s){\cdot} (\lambda^{k+1} - \lambda^{k}) - \frac{\beta}{2}||\lambda^{k+1} - \lambda^{k}||_{2}^{2} \label{proof_1}\\
    &=& {\eta}\log{\pi_{\lambda_{k}}}(a|s){\cdot}w^{k} - {\eta}\frac{\beta}{2}||w^{k}||_{2}^{2}\label{proof_2}
\end{eqnarray}

Thus we have,

\begin{eqnarray}
    \log\frac{\pi_{\lambda_{k+1}}(a|s)}{\pi_{\lambda_{k}}(a|s)} &\ge& {\nabla}_{\lambda_{k}}\log{\pi_{\lambda_{k}}}(a|s){\cdot} (\lambda^{k+1} - \lambda^{k}) - \frac{\beta}{2}||\lambda^{k+1} - \lambda^{k}||_{2}^{2} \label{proof_3}\\
    &=& {\eta}\log{\pi_{\lambda_{k}}}(a|s){\cdot}w^{k} - {\eta}^{2}\frac{\beta}{2}||w^{k}||_{2}^{2}\label{proof_4}
\end{eqnarray}

From the definition of KL divergence and from the performance difference lemma from \citep{kakade2002approximately} we have

\begin{align}
    \mathbb{E}_{s \sim d_{\nu}^{\pi^{*}}} \left(KL(\pi^{*}||{\pi}^{\lambda_{k}}) - \pi^{*}||{\pi}^{\lambda_{k+1}}) \right) =& \mathbb{E}_{s \sim d_{\nu}^{\pi^{*}}}\mathbb{E}_{a \sim \pi^{*}(.|s)} \left[log \frac{\pi^{\lambda_{k+1}}(a|s)}{\pi_{\lambda_{k}}(a|s)}\right] \label{proof_5}
    \\
    \leq & {\eta}\mathbb{E}_{s \sim d_{\nu}^{\pi^{*}}}\mathbb{E}_{a \sim \pi^{*}(.|s)} \left[{\nabla}_{\lambda_{k}}\log{\pi_{\lambda_{k}}}(a|s){\cdot}w^{k}\right] - {\eta}^{2}\frac{\beta}{2}||w^{k}||_{2}^{2}\label{proof_6}
    \\
   =& {\eta}\mathbb{E}_{s \sim d_{\nu}^{\pi^{*}}}\mathbb{E}_{a \sim \pi^{*}(.|s)} \left[Q_{k,J}(s,a)\right] - {\eta}^{2}\frac{\beta}{2}||w^{k}||_{2}^{2} \nonumber
    \\
    &- 
    {\eta}\mathbb{E}_{s \sim d_{\nu}^{\pi^{*}}}\mathbb{E}_{a \sim \pi^{*}(.|s)} \Big[{\nabla}_{\lambda_{k}}\log{\pi_{\lambda_{k}}}(a|s){\cdot}w^{k}  -A_{k,J}(s,a)\Big] \label{proof_7}
    \\
    =& (1-\gamma){\eta}\left(V^{\pi^{*}}(\nu) - V^{k}(\nu)\right) - {\eta}^{2}\frac{\beta}{2}||w^{k}||_{2}^{2} - 
    {\eta}{\cdot}err_{k} \label{proof_8}.
\end{align}

Equation \eqref{proof_6} is obtained from Equation \eqref{proof_5} using the result in Equation \eqref{proof_2}. \eqref{proof_7} is obtained from Equation \eqref{proof_6} using the performance difference lemma form \citep{kakade2002approximately} where $A_{k,J}$ is the advantage function to the corresponding $Q$ function $Q_{k,j}$.

Rearranging, we get 

\begin{eqnarray}
    \left(V^{\pi^{*}}(\nu) - V^{k}(\nu)\right)  
    &\le&  \frac{1}{1-\gamma}\left( \frac{1}{\eta}\mathbb{E}_{s \sim d_{\nu}^{\pi^{*}}} \left(KL(\pi^{*}||{\pi}^{\lambda_{k}}) - \pi^{*}||{\pi}^{\lambda_{k+1}}) \right) + {\eta}^{2}\frac{\beta}{2}{\cdot}{W}^{2} + 
    {\eta}{\cdot}err_{k} \right) \nonumber\\
    && \label{proof_9}
\end{eqnarray}
Summing from $1$ to $K$ and dividing by $K$ we get
\begin{align}
    \frac{1}{K}{\sum_{k=1}^{K}}\left(V^{\pi^{*}}(\nu) - V^{k}(\nu)\right) 
    \le&  \left(\frac{1}{1-\gamma}\right)\frac{1}{K}{\sum_{k=1}^{K}} \left(\mathbb{E}_{s \sim  d_{\nu}^{\pi^{*}}} \left(KL(\pi^{*}||{\pi}^{\lambda_{k}}) - \pi^{*}||{\pi}^{\lambda_{k+1}}) \right) + {\eta}{\cdot}err_{k} \right) \nonumber\\
    +& \left(\frac{1}{1-\gamma}\right){\eta}^{2}\frac{\beta}{2}{\cdot}{W}^{2}  \label{proof_10} \\
    \le&  \frac{1}{{\eta}(1-\gamma)}\frac{1}{K}\mathbb{E}_{s \sim \tilde{d}} \left(KL(\pi^{*}||{\pi}^{\lambda_{0}})\right) + \frac{{\eta}{\beta}{\cdot}{W}^{2}}{2(1-\gamma)}   + \frac{1}{K(1-\gamma)} \sum_{k=1}^{J-1}err_{k} \label{proof_11}
    \\
    \le&  \frac{\log(|\mathcal{A}|)}{K{\eta}(1-\gamma)} + \frac{{\eta}{\beta}{\cdot}{W}^{2}}{2(1-\gamma)}   + \frac{1}{K(1-\gamma)} \sum_{k=1}^{J-1}err_{k} \label{proof_12}
\end{align}
If we set $\eta = \frac{1}{\sqrt{K}}$ in Equation \eqref{proof_12} we get 
\begin{eqnarray}
    \frac{1}{K}{\sum_{k=1}^{K}}\left(V^{\pi^{*}}(\nu) - V^{k}(\nu)\right) 
    &\le&  \frac{1}{\sqrt{K}} \left(\frac{2\log(|\mathcal{A}|) +\beta{\cdot}{W}^{2}}{2(1-\gamma)}\right)   + \frac{1}{K(1-\gamma)} \sum_{k=1}^{J-1}err_{k} \label{proof_13}
\end{eqnarray}
Now consider the term $err_{k}$, we have from Equation \eqref{main_res_2}

\begin{eqnarray}
    err_{k} &=& \mathbb{E}_{s,a}(A^{\pi_{\lambda_{k}}} - w^{k}{\nabla}log(\pi^{{\theta}_{k}}(a|s))) \\
&=& \mathbb{E}_{s,a}(A^{\pi_{\lambda_{k}}} - A_{{k,J}}) + \mathbb{E}_{s,a}(A_{{k,J}} - w^{k}{\nabla}log(\pi^{{\theta}_{k}}(a|s))) \nonumber \\
&\le&  \underbrace{|\mathbb{E}_{s,a}(A^{\pi_{\lambda_{k}}} - A_{{k,J}})|}_{I} +  \underbrace{|\mathbb{E}_{s,a}(A_{{k,J}} - w^{k}{\nabla}log(\pi^{{\theta}_{k}}(a|s)))|}_{II}  \label{proof_13_1}
\end{eqnarray}
where $A_{k,j}$ is the estimate of $A^{\pi_{\lambda_{k}}}$ obtained at the $k^{th}$ iteration of Algorithm \ref{algo_1} and $s \sim d_{\nu}^{\pi^{*}}, a \sim \pi^{*}$. 

We first derive bounds on $I$. From the definition of advantage function we have 

\begin{eqnarray}
    |\mathbb{E}(A^{\pi_{\lambda_{k}}}(s,a) - A_{{k,J}}(s,a))| &=& |\mathbb{E}_{s \sim d_{\nu}^{\pi^{*}}, a \sim \pi^{*}}(Q^{\pi_{\lambda_{k}}}(s,a) - E_{a^{'} \sim \pi^{\lambda_{k}}}Q^{\pi_{\lambda_{k}}}(s,a^{'}) \nonumber\\
    && - Q_{{k,J}}(s,a) + E_{a^{'}\sim \pi^{\lambda_{k}}}Q_{k,J}(s,a^{'}))| \label{proof_13_1_1}\\
    &=& |\mathbb{E}_{s \sim d_{\nu}^{\pi^{*}}, a \sim \pi^{*}}(Q^{\pi_{\lambda_{k}}}(s,a) - E_{a^{'} \sim \pi^{\lambda_{k}}}Q^{\pi_{\lambda_{k}}}(s,a^{'}) \nonumber\\
    && - Q_{{k,J}}(s,a) + E_{a^{'}\sim \pi^{\lambda_{k}}}Q_{k,J}(s,a^{'}))|  \label{proof_13_1_2}\\
    &\le& |\mathbb{E}_{s \sim d_{\nu}^{\pi^{*}}, a \sim \pi^{*}}(Q^{\pi_{\lambda_{k}}}(s,a) -  Q_{{k,J}}(s,a)| \nonumber\\
    &+& |\mathbb{E}_{s \sim d_{\nu}^{\pi^{*}}, a^{'} \sim \pi^{\lambda_{k}}}(Q^{\pi_{\lambda_{k}}}(s,a) -  Q_{{k,J}}(s,a)| \label{proof_13_1_3}
\end{eqnarray}

We write the second term on the right hand side of Equation \eqref{proof_13_1_3} as $\int(|(Q^{\pi_{\lambda_{k}}}(s,a) -  Q_{{k,J}|}(s,a))d(\mu_{k})$ where $\mu_{k}$ is the measure associated with the state action distribution given by ${s \sim d_{\nu}^{\pi^{*}}, a \sim \pi^{\lambda_{k}}}$. Then we have

\begin{eqnarray}
    \int|Q^{\pi_{\lambda_{k}}}(s,a) -  Q_{{k,J}}(s,a)|d(\mu_{k}) \le  \Bigg|\Bigg|\frac{d{\mu_{k}}}{d{\mu^{*}}}\Bigg|\Bigg|_{\infty}\int|Q^{\pi_{\lambda_{k}}}(s,a) -  Q_{{k,J}}(s,a)|d(\mu^{*})  \label{proof_13_1_4}
\end{eqnarray}

where $\mu^{*}$ is the measure associated with the state action distribution given by $s \sim d_{\nu}^{\pi^{*}}, a^{'} \sim {\pi^{*}}$. Using Assumption \ref{assump_6} we have $\Bigg|\Bigg|\frac{d{\mu_{k}}}{d{\mu^{*}}}\Bigg|\Bigg|_{\infty} \le \phi_{\mu^{*},\mu_{k}}$. Thus Equation \eqref{proof_13_1_4} becomes

\begin{eqnarray}
    \int|Q^{\pi_{\lambda_{k}}}(s,a) -  Q_{{k,J}}(s,a)|d(\mu_{k}) \le  (\phi_{\mu_{k},\mu^{*}})\int(|Q^{\pi_{\lambda_{k}}}(s,a) -  Q_{{k,J}}(s,a)|d(\mu^{*})  \label{proof_13_1_5}
\end{eqnarray}

Since $|\int(Q^{\pi_{\lambda_{k}}}(s,a) -  Q_{{k,J}}(s,a)|d(\mu^{*}) = |\mathbb{E}_{s \sim d_{\nu}^{\pi^{*}}, a^{'} \sim \pi^{*}}(Q^{\pi_{\lambda_{k}}}(s,a) -  Q_{{k,J}}(s,a)|$ Equation \eqref{proof_13_1_3} now becomes.

\begin{eqnarray}
    |\mathbb{E}(A^{\pi_{\lambda_{k}}}(s,a) - A_{{k,J}}(s,a))| &\le& (1 +\phi_{\mu_{k},\mu^{*}})|\mathbb{E}_{s \sim d_{\nu}^{\pi^{*}}, a \sim \pi^{*}}(Q^{\pi_{\lambda_{k}}}(s,a) -  Q_{{k,J}}(s,a))| \nonumber\\
    \label{proof_13_1_6}
\end{eqnarray}

Therefore minimizing $A^{\pi_{\lambda_{k}}}(s,a) - A_{{k,J}}(s,a)$ is equivalent to minimizing $Q^{\pi_{\lambda_{k}}}(s,a) -  Q_{{k,J}}(s,a)$.

In order to prove the bound on $|\mathbb{E}_{s \sim d_{\nu}^{\pi^{*}}, a \sim \pi^{*}}(Q^{\pi_{\lambda_{k}}}(s,a) -  Q_{{k,J}}(s,a))|$ we first define some notation, let $Q_{1}, Q_{2}$ be two real valued functions on the state action space. The expression $Q_{1} \ge Q_{2}$ implies $Q_{1}(s,a) \ge Q_{2}(s,a)$ $\forall (s,a) \in \mathcal{S}\times\mathcal{A}$. 

   Let $Q_{k,j}$ denotes our estimate of the action value function at iteration $k$ of Algorithm \ref{algo_1} and iteration $j$ of the first for loop of Algorithm \ref{algo_1} and we denote $Q_{k,J} = Q_{k,j} $ where $J$ is the total number of iterations of the first for loop of Algorithm \ref{algo_1}.   $Q^{\pi_{\lambda_{k}}}$ denotes the action value function induced by the policy $\pi_{\lambda_{k}}$.
 
    Consider $\epsilon_{k,j+1}= T^{\pi_{\lambda_{k}}}Q_{k,j}-Q_{k,j+1}$.

    \begin{eqnarray}
        TQ_{k,j} \geq T^{\pi_{\lambda_{k}}} Q_{k,j} \label{thm_1_1}
    \end{eqnarray}

This follows from the definition of $T^{\pi_{\lambda_{k}}}$ and $T$ in Equation \eqref{ps_3} and \eqref{ps_4}, respectively.

Thus we get,
    \begin{align}
        Q^{\pi_{\lambda_{k}}}-Q_{k,j+1} =& T^{\pi_{\lambda_{k}}}Q^{\pi_{\lambda_{k}}} -  Q_{k,j+1}  \label{thm_1_1_1}\\
                      =& T^{\pi_{\lambda_{k}}}Q^{\pi_{\lambda_{k}}} -  T^{\pi_{\lambda_{k}}}Q_{k,j} + T^{\pi_{\lambda_{k}}}Q_{k,j} - TQ_{k,j} + TQ_{k,j} - Q_{k,j+1}  \label{thm_1_1_2}\\
                      =& r(s,a) + \gamma P^{\pi_{\lambda_{k}}}Q^{\pi_{\lambda_{k}}} - (r(s,a) +  \gamma P^{\pi_{\lambda_{k}}}Q_{k,j})   + (r(s,a) +  \gamma P^{\pi_{\lambda_{k}}}Q_{k,j}) \nonumber
                      \\
                      &-  (r(s,a) +  \gamma P^{*}Q_{k,j}) + \epsilon_{k,j+1}    \nonumber \\  
                      =& \gamma P^{\pi_{\lambda_{k}}}(Q^{\pi_{\lambda_{k}}}-Q_{k,j}) + \gamma P^{\pi_{\lambda_{k}}}Q_{k,j} - \gamma P^{*}Q_{k,j} + \epsilon_{k,j+1} \label{thm_1_1_4}\\  
                      \le& \gamma(P^{\pi_{\lambda_{k}}}(Q^{\pi_{\lambda_{k}}}-Q_{k,j})) + \epsilon_{k,j+1} \label{thm_1_1_5}
    \end{align}

Right hand side of Equation \eqref{thm_1_1_1} is obtained by writing $Q_{k,J} = T^{\pi^{*}}Q^{\pi_{\lambda_{k}}}$. This is because the function $Q^{\pi_{\lambda_{k}}}$ is a stationary point with respect to the operator $T^{\pi^{*}}$. Equation \eqref{thm_1_1_2} is obtained from \eqref{thm_1_1_1} by adding and subtracting $T^{\pi^{*}}Q_{k,j}$. Equation \eqref{thm_1_1_5} is obtained from \eqref{thm_1_1_4} as $P^{\pi^{*}}Q_{k,j} \le P^{*}Q_{k,j}$ and $P^{*}$ is the operator with respect to the greedy policy of $Q_{k,j}$.

By recursion on $k$, we get,

\begin{equation}
    Q^{\pi_{\lambda_{k}}}-Q_{k,J} \leq \sum_{j=0}^{J-1} \gamma^{J-j-1} (P^{\pi_{\lambda_{k}}})^{J-j-1}\epsilon_{k,j} +  \gamma^{J} (P^{\pi_{\lambda_{k}}})^{J}(Q^{\pi_{\lambda_{k}}}-Q_{0}) \label{thm_1_2}
\end{equation}

using $TQ_{k,j} \ge T^{\pi^{*}}Q_{k,j}$ (from definition of $T^{\pi^{*}}$) and $TQ_{k,j} \ge T^{\pi_{\lambda_{k}}}Q_{k,j}$ from definition of operator $T$. 

From this we obtain

\begin{align}
    \mathbb{E}_{s \sim d_{\nu}^{\pi^{*}}, a \sim \pi^{*}}{(Q^{\pi_{\lambda_{k}}}-Q_{k,J})}  
    \leq & {\phi_{k}}\sum_{k=0}^{J-1} \gamma^{J-j-1}\mathbb{E}_{s \sim d_{\nu}^{\pi^{*}}, a \sim \pi^{*}}((P^{\pi_{\lambda_{k}}})^{K-J-1}\epsilon_{k,j}) \nonumber
    \\
    &+  \gamma^{J}\mathbb{E}_{s \sim d_{\nu}^{\pi^{*}}, a \sim \pi^{*}}(P^{\pi_{\lambda_{k}}})^{J}(Q^{\pi_{\lambda_{k}}}-Q_{0}) \label{proof_14} 
\end{align}

Taking absolute value on both sides of Equation \eqref{proof_14} we get.

\begin{align}
    |\mathbb{E}_{s \sim d_{\nu}^{\pi^{*}}, a \sim \pi^{*}}{(Q^{\pi_{\lambda_{k}}}-Q_{k,J})}|  
    \leq & {\phi_{k}}\sum_{k=0}^{J-1} \gamma^{J-j-1}\mathbb{E}_{s \sim d_{\nu}^{\pi^{*}}, a \sim \pi^{*}}((P^{\pi_{\lambda_{k}}})^{J-j-1}|\epsilon_{k,j}|) \nonumber
    \\
    &+  \gamma^{J}\mathbb{E}_{s \sim d_{\nu}^{\pi^{*}}, a \sim \pi^{*}}(P^{\pi_{\lambda_{k}}})^{K}(|Q^{\pi_{\lambda_{k}}}-Q_{0}|) \label{proof_15} 
\end{align}

For a fixed $j$ consider the term $\mathbb{E}_{s \sim d_{\nu}^{\pi^{*}}, a \sim \pi^{*}}((P^{\pi_{\lambda_{k}}})^{J-j-1}|\epsilon_{k,j}|)$ using Assumption \ref{assump_6} we write

\begin{eqnarray}
    |\mathbb{E}_{s \sim d_{\nu}^{\pi^{*}}, a \sim \pi_{j}}((P^{\pi_{\lambda_{k}}})^{J-j-1}|\epsilon_{k,j}|)| &\le& \Bigg|\Bigg|\frac{d{(P^{\pi_{\lambda_{k}}})^{J-j-1}\mu_{j}}}{d{\mu^{'}_{k}}}\Bigg|\Bigg|_{\infty} \left|{\int}\epsilon_{k,j}d{\mu^{'}_{k}}\right| \\ 
   &\le&  (\phi_{\mu^{'}_{k},\mu_{j}})  \mathbb{E}_{(s,a) \sim \zeta^{\nu}_{\pi}(s,a)}(|\epsilon_{k,j}|)| \label{proof_15_1}\\   
\end{eqnarray}

Here $\mu_{j}$ is the measure associated with the state action distribution given by sampling from  $s \sim d_{\nu}^{\pi^{*}}, a^{'} \sim {\pi^{*}}$ and then applying the operator $P^{\pi_{\lambda_{k}}}$ $J-j-1$ times. $\mu_{j}$ is the measure associated with the steady state action distribution given by $(s,a) \sim \zeta^{\nu}_{\pi}(s,a)$. Thus Equation \eqref{proof_15} becomes

\begin{align}
    |\mathbb{E}_{s \sim d_{\nu}^{\pi^{*}}, a \sim \pi^{*}}{(Q^{\pi_{\lambda_{k}}}-Q_{k,J})}|  
    \leq & \sum_{k=0}^{J-1} \gamma^{J-j-1} (\phi_{\mu^{'}_{k},\mu_{j}})  \mathbb{E}_{(s,a) \sim \zeta^{\nu}_{\pi}(s,a)}(|\epsilon_{k,j}|)|  +   \gamma^{J}\left(\frac{R_{max}}{1-\gamma}\right) \label{proof_15_2} 
\end{align}

We get the second term on the right hand side by noting that $(Q^{\pi_{\lambda_{k}}}-Q_{0}) \le \frac{R_{max}}{1-\gamma}$. Now splitting $\epsilon_{k,j}$ as  was done in Equation \eqref{last} we obtain

\begin{align}
    \mathbb{E}_{s \sim d_{\nu}^{\pi^{*}}, a \sim \pi^{*}}{(Q^{\pi_{\lambda_{k}}}-Q_{k,j})} 
    \leq& \sum_{j=0}^{J-1} \gamma^{J-j-1} \big((\phi_{\mu^{'}_{k},\mu_{j}})\mathbb{E}{|\epsilon^{1}_{k,j}|} + (\phi_{\mu^{'}_{k},\mu_{j}})\mathbb{E}{|\epsilon^{2}_{k,j}|} 
    \nonumber
    \\
    &+(\phi_{\mu^{'}_{k},\mu_{j}})\mathbb{E}{|\epsilon^{3}_{k,j}|} + (\phi_{\mu^{'}_{k},\mu_{j}})\mathbb{E}{|\epsilon^{4}_{k,j}|}\big) +  \gamma^{J}\left(\frac{R_{max}}{1-\gamma}\right) \label{proof_16}
\end{align}

Now using Lemmas \ref{proof_lem_1}, \ref{proof_lem_2}, \ref{proof_lem_3}, \ref{proof_lem_4} we have 

\begin{eqnarray}
    \mathbb{E}_{s \sim d_{\nu}^{\pi^{*}}, a \sim \pi^{*}}{(Q^{\pi_{\lambda_{k}}}-Q_{k,j})}  &=&
   \sum_{j=0}^{J-1}  \tilde{\mathcal{O}}\left(\frac{\log(log(n_{k,j}))}{\sqrt{n_{k,j}}}\right)     
                                +  (\sqrt{\epsilon_{approx}} +{\epsilon_{|\tilde{D}|}}) + \tilde{\mathcal{O}} \left(\frac{1}{\sqrt{T_{k,j}}}\right) \nonumber\\
                                &+& \tilde{\mathcal{O}}(\gamma^{J})\label{proof_17}
\end{eqnarray}

From Equation \eqref{proof_13_1_6} we get

\begin{eqnarray}
    \mathbb{E}_{s \sim d_{\nu}^{\pi^{*}}, a \sim \pi^{*}}{(A^{\pi_{\lambda_{k}}}-A_{k,j})}  &=&
   \sum_{j=0}^{J-1}  \tilde{\mathcal{O}}\left(\frac{\log(log(n_{k,j}))}{\sqrt{n_{k,j}}}\right)     
                                +  (\sqrt{\epsilon_{approx}} +{\epsilon_{|\tilde{D}|}}) + \tilde{\mathcal{O}} \left(\frac{1}{\sqrt{T_{k,j}}}\right) \nonumber\\
                                &+& \tilde{\mathcal{O}}(\gamma^{J})\label{proof_17_1}
\end{eqnarray}

We now derive bounds on $II$. From Theorem 2 of \cite{9769873} we have that

\begin{eqnarray}
  ||w_{k} - w^{*}||_{2}  &\le& \mathcal{O} \left(\frac{\log(n_{k,j})}{{n_{k,j}}}\right)  \label{proof_19}
\end{eqnarray}

Now define the function $F_{k}(w) = \mathbb{E}_{s,a \sim \zeta^{\nu}_{\pi}}(A_{{k,J}} - w^{k}{\nabla}log(\pi^{{\theta}_{k}}(a|s)))$. From this definition we obtain

\begin{eqnarray}
  F_{k}(w_{k}) - F_{k}(w^{*})  \le  l_{F_{k}}||w_{k} - w^{*}||_{2}  &\le& \mathcal{O} \left(\frac{\log(n_{k,j})}{{n_{k,j}}}\right)  \label{proof_19_1}
\end{eqnarray}

where $l_{F_{k}}$ is lipschitz constant of $F_{k}(w)$. Thus we obtain

\begin{eqnarray}
  F_{k}(w_{k}) - F_{k}(w^{*})  &\le& \mathcal{O} \left(\frac{\log(n_{k,j})}{{n_{k,j}}}\right)  \label{proof_19_2}
\end{eqnarray}

From Assumption \ref{assump_3_1} we have

\begin{eqnarray}
  F_{k}(w_{k}) - \epsilon_{bias}  &\le& \mathcal{O} \left(\frac{\log(n_{k,j})}{{n_{k,j}}}\right)  \label{proof_19_3}
\end{eqnarray}

which gives us 
\begin{eqnarray}
  F_{k}(w_{k})   &\le& \mathcal{O} \left(\frac{\log(n_{k,j})}{{n_{k,j}}}\right) + \epsilon_{bias} \label{proof_19_4}
\end{eqnarray}

Plugging Equations \eqref{proof_19_4} and  \eqref{proof_17_1} in Equation \eqref{proof_13} we get

\begin{align}
\frac{1}{K}\sum_{k=1}^{K}(V^{*}(\nu)-V^{\pi_{\lambda_{K}}}(\nu)) &\le \min_{k \le K}(V^{*}(\nu)-V^{\pi_{\lambda_{K}}}(\nu)) \\
\leq&    {\mathcal{O}}\left(\frac{1}{\sqrt{K}(1-\gamma)}\right) \!+\!  \frac{1}{K(1-\gamma)}\sum_{k=1}^{K-1}\sum_{j=1}^{J} \mathcal{O}\left(\frac{\log\log(n_{k,j})}{\sqrt{n_{k,j}}}\right) \nonumber\\
& + \sum_{k=1}^{K-1}\sum_{j=1}^{J}{\mathcal{O}}\left(\frac{1}{\sqrt{T_{k,j}}}\right) + \frac{1}{K(1-\gamma)}\sum_{k=1}^{K-1}{\mathcal{O}}(\gamma^{J}) \nonumber\\
& + \sum_{k=1}^{K-1}\sum_{j=1}^{J}\left(\frac{\log(n_{k,j})}{{n_{k,j}}}\right) + \frac{1}{1-\gamma}\left(\epsilon_{bias} + (\sqrt{\epsilon_{approx}}) + {\epsilon_{|\tilde{D}|}}\right)\\
\leq&    {\mathcal{O}}\left(\frac{1}{\sqrt{K}(1-\gamma)}\right) \!+\!  \frac{1}{K(1-\gamma)}\sum_{k=1}^{K-1}\sum_{j=1}^{J} \mathcal{O}\left(\frac{\log\log(n_{k,j})}{\sqrt{n_{k,j}}}\right) \ \nonumber
\\
& + \sum_{k=1}^{K-1}\sum_{j=1}^{J}{\mathcal{O}}\left(\frac{1}{\sqrt{T_{k,j}}}\right) + \frac{1}{K(1-\gamma)}\sum_{k=1}^{K-1}{\mathcal{O}}(\gamma^{J}) \nonumber\\
& + \frac{1}{1-\gamma}\left(\epsilon_{bias} + (\sqrt{\epsilon_{approx}}) + {\epsilon_{|\tilde{D}|}}\right)\label{proof_20_1}
\end{align}

\end{proof}

\section{Proof of Supporting Lemmas} \label{Upper Bounding Bellman Error}

\subsection{Proof Of Lemma \ref{lem_1}} \label{proof_lem_1}

\begin{proof}

Using Assumption \ref{assump_3} and the definition of $Q_{kj_{1}}$ for some iteration $k$ of Algorithm \ref{algo_1} we have  
\begin{equation}
\mathbb{E}(T^{\pi_{\lambda_{k}}}Q_{k,j-1} - Q^{1}_{k,j})^{2}_{\nu} \le \epsilon_{approx}  
\end{equation}

where $\nu  \in \mathcal{P}(\mathcal{S}{\times}{\mathcal{A}})$.

Since $|a|^{2}=a^{2}$ we obtain

\begin{equation}
\mathbb{E}(|T^{\pi_{\lambda_{k}}}Q_{k,j-1} - Q^{1}_{k,j}|)^{2}_{\nu} \le \epsilon_{approx}  
\end{equation}

We have for a random variable $x$, $ Var(x)=\mathbb{E}(x^{2}) - (\mathbb{E}(x))^{2}$ hence $\mathbb{E}(x) = \sqrt{\mathbb{E}(x^{2}) -Var(x)}$, Therefore replacing $x$ with $|T^{\pi_{\lambda_{k}}}Q^{\pi_{\lambda_{k}}} - Q_{k1}|$ we get

using the definition of the variance of a random variable we get  
\begin{equation}
\mathbb{E}(|T^{\pi_{\lambda_{k}}}Q_{k,j-1} - Q^{1}_{k,j}|)_{\nu}=\sqrt{\mathbb{E}(|T^{\pi_{\lambda_{k}}}Q_{k,j-1} - Q^{1}_{k,j}|)^{2}_{\nu} - Var(|T^{\pi_{\lambda_{k}}}Q_{k,j-1} - Q^{1}_{k,j}|)_{\nu}}  
\end{equation}

Therefore we get

\begin{equation}
    \mathbb{E}(T^{\pi_{\lambda_{k}}}Q_{k,j-1} - Q^{1}_{k,j}|)_{\nu} \le \sqrt{\epsilon_{approx}}
\end{equation}

Since $\epsilon_{k_1}=T^{\pi_{\lambda_{k}}}Q^{\pi_{\lambda_{k}}} - Q_{k1}$ we have 
\begin{equation}
    \mathbb{E}(|\epsilon_{k,j_1}|)_{\nu} \le \sqrt{\epsilon_{approx}}
\end{equation}
\end{proof}

\subsection{Proof Of Lemma \ref{lem_2}} \label{proof_lem_2}
\begin{proof}
From Lemma \ref{sup_lem_1}, we have 

\begin{equation}
    \argminA_{f_{\theta}}\mathbb{E}_{x,y}\left(f_{\theta}(x)-g(x,y)\right)^{2}=\argminA_{f_{\theta}} \left(\mathbb{E}_{x,y}\left(f_{\theta}(x)-\mathbb{E}(g(y^{'},x)|x)\right)^{2}\right) \label{lem_3_1}
\end{equation}   

We label $x$ to be the state action pair $(s,a)$, function $f_{\theta}(x)$ to be $Q_{\theta}(s,a)$ and $g(x,y)$ to be the function $r^{'}(s,a) + {\gamma}\sum_{a^{'} \in \mathcal{A}}\pi_{\lambda_{k}}(a^{'}|s,a)Q_{k,j-1}(s^{'},a^{'}) =  r^{'}(s,a) + {\gamma}\mathbb{E}Q_{k,j-1}(s^{'},a^{'}) $, where  $y$ is the two dimensional random variable $(r^{'}(s,a),s^{'})$ and $s \sim d^{\pi_{{\theta}_{k}}}_{\nu}, a \sim \pi_{\lambda_{k}}(.|s)$,$s^{'} \sim P(.|(s,a))$ and $r^{'}(s,a) \sim {R}(.|s,a)$.

Then the loss function in \eqref{sup_lem_1_1} becomes
\begin{equation}
\mathbb{E}_{s \sim d^{\pi_{{\lambda}_{k}}}_{\nu}, a \sim \pi_{\lambda_{k}}(.|s),s^{'} \sim P(s^{'}|s,a), r(s,a) \sim \mathcal{R}(.|s,a)}(Q_{\theta}(s,a)-(r^{'}(s,a)+{\gamma}\mathbb{E}Q_{k,j-1}(s^{'},a^{'})))^{2} \label{lem_3_2}
\end{equation}

Therefore by Lemma \ref{sup_lem_1}, we have that the function $Q_{\theta}(s,a)$ which minimizes Equation \eqref{lem_3_2} it will be minimizing

\begin{equation}
\mathbb{E}_{s \sim d^{\pi_{{\lambda}_{k}}}_{\nu}, a \sim \pi_{\lambda_{k}}}(Q_{\theta}(s,a) -\mathbb{E}_{s^{'} \sim P(s^{'}|s,a),r \sim \mathcal{R}(.|s,a))}(r^{'}(s,a)+{\gamma}\mathbb{E}Q_{k,j-1}(s^{'},a^{'})|s,a))^{2} \label{lem_3_3}
\end{equation}

But we have from Equation  that

\begin{equation}
\mathbb{E}_{s^{'} \sim P(s^{'}|s,a),r \sim {R}(.|s,a))}(r^{'}(s,a)+{\gamma}\mathbb{E}Q_{k,j-1}(s^{'},a^{'})|s,a) = T^{\pi_{\lambda_{k}}}Q_{k,j-1}\label{lem_3_4}
\end{equation}

Combining Equation \eqref{lem_3_2} and \eqref{lem_3_4} we get

\begin{equation}
\argminA_{Q_{\theta}}\mathbb{E}(Q_{\theta}(s,a)-(r(s,a) + {\gamma}\mathbb{E}Q_{k,j-1}(s^{'},a^{'})))^{2} = \argminA_{Q_{\theta}}\mathbb{E}(Q_{\theta}(s,a)-T^{\pi_{\lambda_{k}}}Q_{k,j-1})^{2} \label{lem_3_4_1}
\end{equation}

The left hand side of Equation \eqref{lem_3_4_1} is $Q^{2}_{k,j}$ as defined in Definition \ref{def_2} and the right hand side is  $Q^{1}_{k,j}$ as defined in Definition \ref{def_1}, which gives us 

\begin{equation}
    Q^{2}_{k,j}=Q^{1}_{k,j}
\end{equation}

\end{proof}

\subsection{Proof Of Lemma \ref{lem_3}}\label{proof_lem_3}

\begin{proof}
 
We define $R_{X_{k,j},Q_{k,j-1}}({\theta})$ as

\begin{equation}
    R_{X_{k,j},Q_{k,j-1}}({\theta}) = \frac{1}{|X_{k,j}|} \sum_{(s_{i},a_{i}) \in X_{k,j}}\Bigg( Q_{\theta}(s_{i},a_{i}) - \Bigg(r(s_{i},a_{i})\nonumber\\ 
        + \gamma\mathbb{E}_{a^{'} \sim \pi_{\lambda_{k}} }Q_{k,j-1}(s_{i+1},a^{'}) \Bigg)\Bigg)^{2},
\end{equation}

Here, $X_{k,j}=\{s_{i},a_{i}\}_{i=\{1,\cdots,|X_{k,j}|\}}$, where $s,a$ are sampled from a Markov chain whose stationary distribution is, $s \sim d_{\nu}^{\pi_{\lambda_{k}}}, a \sim \pi_{\lambda_{k}}$. $Q_{\theta}$ is as defined in Equation \eqref{ReLU_1_2} and $Q_{k,j-1}$ is the estimate of the $Q$ function obtained at iteration $k$ of the outer for loop and iteration $j-1$ of the first inner for loop of Algorithm \ref{algo_1}.

We also define the term

\begin{equation}
   L_{Q_{k,j-1}}(Q_{\theta}) = \mathbb{E}(Q_{\theta}(s,a)-(r'(s,a)+\gamma\mathbb{E}_{a^{'} \sim \pi_{\lambda_{k}}}Q_{k,j-1}(s',a'))^{2}  
\end{equation}

where ${s \sim d^{\pi_{{\theta}_{k}}}_{\nu}, a \sim \pi_{\lambda_{k}}(.|s), r'(\cdot|s,a)\sim {R}(\cdot|s,a)}\nonumber\\$

We denote by $\theta^{2}_{k,j}, \theta^{3}_{k,j}$ the parameters of the neural networks $Q^{2}_{k,j}, Q^{3}_{k,j}$ respectively. $Q^{2}_{k,j}, Q^{3}_{k,j}$ are defined in Definition \ref{def_2} and  \ref{def_3} respectively.  

We then obtain,

\begin{eqnarray}
    R_{X_{k,j},Q_{k,j-1}}(\theta^{2}_{k,j}) - R_{X_{k,j},Q_{k,j-1}}(\theta^{3}_{k,j}) &\le&  R_{X_{k,j},Q_{k,j-1}}(\theta^{2}_{k,j}) - R_{X_{k,j},Q_{k,j-1}} 
    (\theta^{3}_{k,j}) \nonumber\\  
                                                        &&     + L_{Q_{k,j-1}}(\theta^{2}_{k,j}) - L_{Q_{k,j-1}}(\theta^{3}_{k,j}) \nonumber\\  
                                                        &&    \label{2_2_2}\\
                                                        &=&   {R_{X_{k,j},Q_{k,j-1}}(\theta^{2}_{k,j})- L_{Q_{k,j-1}}(\theta^{2}_{k,j})} \nonumber\\ 
                                                        &&  - {R_{X_{k,j},Q_{k,j-1}}(\theta^{3}_{k,j}) + L_{Q_{k,j-1}}(\theta^{2}_{k,j})} \nonumber\\
                                                        &&\label{2_2_3}\\
                                                        &\le&   \underbrace{|R_{X_{k,j},Q_{k,j-1}}(\theta^{2}_{k,j})- L_{Q_{k,j-1}}(\theta^{2}_{k,j})|}_{I}  \nonumber\\
                                                        &&  + \underbrace{|R_{X_{k,j},Q_{k,j-1}}(\theta^{3}_{k,j})- L_{Q_{k,j-1}}(\theta^{3}_{k,j})|}_{II} \nonumber\\
                                                        && \label{2_2_4}
\end{eqnarray}

We get the inequality in Equation \eqref{2_2_2} because $L_{Q_{k,j-1}}(\theta^{3}_{k,j}) - L_{Q_{k,j-1}}(\theta^{2}_{k,j}) > 0$ as  $Q^{2}_{k,j}$ is the minimizer of the loss function $ L_{Q_{k,j-1}}(Q_{\theta})$.

Consider Lemma \ref{sup_lem_0}. The loss function  $R_{X_{k,j},Q_{k,j-1}}(\theta^{3}_{k,j})$ can be written as the mean of loss functions of the form $l(a_{\theta}(s_{i},a_{i}),y_{i})$ where $l$ is the square function. $a_{\theta}(s_{i}, a_{i})=Q_{\theta}(s_{i},a_{i})$  and $y_{i}=\Big(r^{'}(s_{i},a_{i}) + \gamma{\mathbb{E}}Q_{k,j-1}(s^{'},a^{'})\Big)$. Thus we have

\begin{eqnarray}
   & \mathbb{E}\sup_{\theta \in \Theta}|R_{X_{k,j},Q_{k,j-1}}(\theta)- L_{Q_{k,j-1}}(\theta)| \le  \\
  & 2{\eta}^{'}\mathbb{E} \left(Rad(\mathcal{A} \circ \{(s_{1},a_{1}),(s_{2},a_{2}),(s_{3},a_{3}),\cdots,(s_{n},a_{n})\})\right)\nonumber
\end{eqnarray}

Note that the expectation is over all $(s_{i},a_{i})$. Where  $n_{k,j} = |X_{k,j}|$, $(\mathcal{A} \circ \{(s_{1},a_{1}),(s_{2},a_{2}),$  $(s_{3},a_{3}),\cdots,(s_{n},a_{n})\} = \{Q_{\theta}(s_{1},a_{1}), Q_{\theta}(s_{2},a_{2}), \cdots, Q_{\theta}(s_{n},a_{n})\}$  and $\eta^{'}$ is the Lipschitz constant for the square function  over the state action space $[0,1]^{d}$. The expectation is with respect to $s \sim d^{\pi_{{\lambda}_{k}}}_{\nu}, a \sim \pi_{\lambda_{k}}(.|s)$   ,${s_{i}^{'} \sim P(s^{'}|s,a)}$  $r_{i} \sim R(.|s_{i},a_{i})_{_{i \in (1,\cdots,n_{k,j})}, }$. 

From proposition 11 of \cite{article}  we have that 

\begin{equation}
    \left(Rad(\mathcal{A} \circ \{(s_{1},a_{1}),(s_{2},a_{2}),(s_{3},a_{3}),\cdots,(s_{n},a_{n})\})\right) \le  C_{k}\frac{\log(\log(n_{k,j}))}{\sqrt{n_{k,j}}}
\end{equation}

Note that proposition 11 of \cite{article} establishes an upper bound on the Rademacher complexity using theorem 4 of \cite{article} with the aim of applying it to the Metropolis Hastings algorithm For our purpose we only use the upper bound on Rademacher complexity established in  proposition 11 of \cite{article}.

We use this result as the state action pairs are drawn not from the stationary state of the policy $\pi_{\lambda_{k}}$ but from a Markov chain with the same steady state distribution.
Thus we have
\begin{eqnarray}
   \mathbb{E}|(R_{X_{k,j},Q_{k,j-1}}(\theta^{2}_{k,j})) - L_{Q_{k,j-1}}(\theta^{2}_{k,j})| \le  C_{k}\frac{\log(\log(n_{k,j}))}{\sqrt{n_{k,j}}} \label{2_2_5}
\end{eqnarray}
The same argument can be applied for $Q^{3}_{k,j}$ to get
\begin{eqnarray}
   \mathbb{E}|(R_{X_{k,j},Q_{k,j-1}}(\theta^{3}_{k,j})) - L_{Q_{k,j-1}}(\theta^{3}_{k,j})| \le  C_{k}\frac{\log(\log(n_{k,j}))}{\sqrt{n_{k,j}}} \label{2_2_5_1}
\end{eqnarray}
Then we have 
\begin{equation}
\mathbb{E}\left(R_{X_{k,j},Q_{k,j-1}}(\theta^{2}_{k,j}) - R_{X_{k,j},Q_{k,j-1}}(\theta^{3}_{k,j})\right) \le C_{k}\frac{\log(\log(n_{k,j}))}{\sqrt{n_{k,j}}}  \label{2_2_5_2}
\end{equation}
Plugging in the definition of $R_{X_{k,j},Q_{k,j-1}}(\theta^{2}_{k,j}), R_{X_{k,j},Q_{k,j-1}}(\theta^{3}_{k,j})$ in equation  \eqref{2_2_5_2} and denoting $ C_{k}\frac{\log(\log(n_{k,j}))}{\sqrt{n_{k,j}}}$ as  $\epsilon$ we get

\begin{align}
     \frac{1}{n_{k,j}} \sum_{i=1}^{n_{k,j}} \Big(\mathbb{E}(Q^{2}_{k,j}(s_{i},a_{i})-(r(s_{i},a_{i}) + \mathbb{E}_{a^{'} \sim \pi^{\lambda_{k}}}Q^{2}_{k,j}(s_{i+1},a^{'})))^{2} \nonumber\\
    - \mathbb{E}(Q^{3}_{k,j}(s_{i},a_{i})-(r(s_{i},a_{i}) + \mathbb{E}_{a^{'} \sim \pi^{\lambda_{k}}}Q^{3}_{k,j}(s_{i+1},a^{'})))^{2} \Big) \le \epsilon\label{2_2_5_3_1}
\end{align}

Now for a fixed $i$ consider the term $\alpha_{i}$ defined as.

\begin{eqnarray}
   \mathbb{E}_{s_{i+1} \sim P(.|s_{i},a_{i})}(Q^{2}_{k,j}(s_{i},a_{i})-(r(s_{i},a_{i}) + \mathbb{E}_{a^{'} \sim \pi^{\lambda_{k}}}Q^{2}_{k,j}(s_{i+1},a^{'})))^{2}&& \nonumber\\
    -  \mathbb{E}_{s_{i+1} \sim P(.|s_{i},a_{i})}(Q^{3}_{k,j}(s_{i},a_{i})-(r(s_{i},a_{i}) + \mathbb{E}_{a^{'} \sim \pi^{\lambda_{k}}}Q^{3}_{k,j}(s_{i+1},a^{'})))^{2}    \label{2_2_5_3_2}
\end{eqnarray}

where $s_{i},a_{i}$ are drawn from the state action distribution  at the $i^{th}$ step of the Markov chain induced by following the policy $\pi_{\lambda_{k}}$.

Now for a fixed $i$ consider the term $\beta_{i}$ defined as.

\begin{eqnarray}
    \mathbb{E}_{s_{i+1} \sim P(.|s_{i},a_{i})}(Q^{2}_{k,j}(s_{i},a_{i})-(r(s_{i},a_{i}) + \mathbb{E}_{a^{'} \sim \pi^{\lambda_{k}}}Q^{2}_{k,j}(s_{i+1},a^{'})))^{2} \nonumber\\
    -  \mathbb{E}_{s_{i+1} \sim P(.|s_{i},a_{i})}(Q^{3}_{k,j}(s_{i},a_{i})-(r(s_{i},a_{i}) + \mathbb{E}_{a^{'} \sim \pi^{\lambda_{k}}}Q^{3}_{k,j}(s_{i+1},a^{'})))^{2}    \label{2_2_5_3_3}
\end{eqnarray}

where $s_{i},a_{i}$ are drawn from the steady state action distribution  with $s_{i} \sim d_{\nu}^{\pi_{\lambda_{k}}}$ and $a_{i} \sim \pi_{\lambda_{k}}$. Note here that $\alpha_{i}$ and $\beta_{i}$ are the same function with only the state action pairs being drawn from different distributions.

Using these definitions we obtain

\begin{eqnarray}
  |\mathbb{E}(\alpha_{i}) -  \mathbb{E}(\beta_{i})|  &\le& \sup_{(s_{i},a_{i})}|2.\max(\alpha_{i},\beta_{i})|({\kappa}_{i})   \label{2_2_5_3_4}\\
   &\le& \left(4\frac{R}{1-\gamma}\right)^{2}m{\rho}^{i} \label{2_2_5_3_5}
\end{eqnarray}

We obtain Equation \eqref{2_2_5_3_4} by using the identity $|{\int}fd{\mu} - {\int}fd{\nu}|  \le  |\max_{\mathcal{S}\times\mathcal{A}}(f)|\sup_{\mathcal{S}\times\mathcal{A}}{\int}(d{\mu}-d{\nu})| \le  |\max_{\mathcal{S}\times\mathcal{A}}(f)|{d_{TV}}(\mu,\nu)|$, where $\mu$ and $\nu$ are two $\sigma$ finite state action probability measures and $f$ is a bounded measurable function.  We have used  $\kappa_{i}$ to represent the total variation distance between the state action measures of the steady state action distribution denoted by $s_{i} \sim d_{\nu}^{\pi_{\lambda_{k}}},a_{i} \sim \pi_{\lambda_{k}}$ and the state action distribution at the  $i^{th}$ step of the Markov chain induced by following the policy $\pi^{\lambda_{k}}$. The expectation is with respect to $(s_{i},a_{i})$. We obtain Equation \eqref{2_2_5_3_5} from  Equation  \eqref{2_2_5_3_4} by using  Assumption \ref{assump_4} and the fact that $\alpha_{i}$ and $\beta_{i}$ are upper bounded by  $\left(4\frac{R}{1-\gamma}\right)^{2}$

From equation \eqref{2_2_5_3_5} we get 

\begin{eqnarray}
   \mathbb{E}(\alpha_{i})  &\ge& \mathbb{E}(\beta_{i}) - 4\left(\frac{R}{1-\gamma}\right)^{2}m{\rho}^{i} \label{2_2_5_3_6}
\end{eqnarray}

We get Equation \eqref{2_2_5_3_6} from Equation \eqref{2_2_5_3_5} using the fact that $|a-b| \le c$ implies that  $ \left(-c \ge   (a-b) \le c \right)$ which in turn implies $ a \ge b-c $.

Using Equation \eqref{2_2_5_3_6} in equation \eqref{2_2_5_3_3} we get
\begin{eqnarray}
     &&\frac{1}{n_{k,j}} \sum_{i=1}^{n_{k,j}} \Big(\mathbb{E}(Q^{2}_{k,j}(s_{i},a_{i})-(r(s_{i},a_{i}) + \mathbb{E}_{a^{'} \sim \pi^{\lambda_{k}}}Q^{2}_{k,j}(s_{i+1},a^{'})))^{2} \nonumber\\
   && - \mathbb{E}(Q^{3}_{k,j}(s_{i},a_{i})-(r(s_{i},a_{i}) + \mathbb{E}_{a^{'} \sim \pi^{\lambda_{k}}}Q^{3}_{k,j}(s_{i+1},a^{'})))^{2} \Big) \nonumber\\
   &\le& \epsilon  + \frac{1}{n_{k,j}} \sum_{i=1}^{n_{k,j}}4\left(\frac{R}{1-\gamma}\right)^{2}m{\rho}^{i} \nonumber\\
    &\le& \epsilon  + \frac{1}{n_{k,j}}4\left(\frac{R}{1-\gamma}\right)^{2}m\frac{1}{1-\rho} \nonumber\\
    \label{2_2_5_3_7}    
\end{eqnarray}

In Equation \eqref{2_2_5_3_7}  $(s_{i},a_{i})$ are now drawn from $s_{i} \sim d_{\nu}^{\pi_{\lambda_{k}}}$ and $a_{i} \sim \pi_{\lambda_{k}}$ for al $i$.

We ignore the second term on the right hand side as it is $\tilde{\mathcal{O}}\left(\frac{1}{n_{k,j}}\right)$ as  compared to the first term which is $\tilde{\mathcal{O}} \left(\frac{\log\log(n_{k,j})}{\sqrt{n_{k,j}}}\right)$. Additionally the expectation in Equation \eqref{2_2_5_3_7} is wiht respect to ${s_{i} \sim d^{\pi_{{\theta}_{k}}}_{\nu}, a_{i} \sim \pi_{\lambda_{k}}(.|s), r'(\cdot|s,a)\sim {R}(\cdot|s,a)}, s_{i+1} \sim P(.|s_{i},a_{i})\\$
 
Since now we have $s_{i} \sim d_{\nu}^{\pi_{\lambda_{k}}},a_{i} \sim \pi_{\lambda_{k}}$ for all $i$,  Equation \eqref{2_2_5_3_7} is equivalent to,

\begin{eqnarray}
\mathbb{E}\underbrace{(Q^{2}_{k,j}(s,a)-Q^{3}_{k,j}(s,a))}_{A1}\underbrace{(Q^{2}_{k,j}(s,a)+Q^{3}_{k,j}(s,a) - 2(r(s,a)) + \gamma \max_{a \in \mathcal{A}}Q_{k,j-1}(s^{'},a))}_{A2} &\le& \epsilon \nonumber\\
    && \label{2_2_5_4}
\end{eqnarray}

Where the expectation is now over $s \sim d_{\nu}^{\pi_{\lambda_{k}}},a \sim \pi_{\lambda_{k}}$, $r(s,a)\sim R(.|s,a)$ and $s^{'}\sim P(.|s,a)$.  
We re-write Equation \eqref{2_2_5_4} as 
\begin{align} \int\underbrace{(Q^{2}_{k,j}(s,a)-Q^{3}_{k,j}(s,a))}_{A1}\times&
    \nonumber
    \\  &\hspace{-1cm}\times \underbrace{(Q^{2}_{k,j}(s,a)+Q^{3}_{k,j}(s,a) - 2(r(s,a)) + \gamma \max_{a \in \mathcal{A}}Q_{k,j-1}(s^{'},a))}_{A2}\times&
    \nonumber
    \\
    &\times d{\mu_{1}}(s,a)d{\mu_{2}}(r)d{\mu_{3}}(s^{'}) \le \epsilon .
    \label{2_2_5_5}
\end{align}
Where $\mu_{1}$ is the state action distribution $s \sim d_{\nu}^{\pi_{\lambda_{k}}},a \sim \pi_{\lambda_{k}}$, $\mu_{2}$, $\mu_{3}$ are the measures with respect to  $(s,a)$, $r^{'}$ and $s^{'}$ respectively

Now for the integral in Equation \eqref{2_2_5_5} we split the integral into four different integrals. Each integral is over the set of $(s,a), r^{'}, s^{'}$ corresponding to the 4 different combinations of signs of $A1, A2$. 
\begin{align}
    &\int_{\{(s,a), r^{'}, s^{'}\}:A1\ge0,A2\ge0}(A1)(A2)d{\mu_{1}}(s,a)d{\mu_{2}}(r)d{\nu_{3}}(s^{'})
    \nonumber
    \\
    &+ \int_{\{(s,a), r^{'}, s^{'}\}:A1<0,A2<0}(A1)(A2)d{\mu_{1}}(s,a)d{\mu_{2}}(r)d{\mu_{3}}(s^{'})   \nonumber\\
   &+  \int_{\{(s,a), r^{'}, s^{'}\}:A1\ge0,A2<0}(A1)(A2)d{\mu_{1}}(s,a)d{\mu_{2}}(r)d{\nu_{3}}(s^{'}) 
   \nonumber
   \\
   &+  \int_{\{(s,a), r^{'}, s^{'}\}:A1<0,A2\ge0}(A1)(A2)d{\mu_{1}}(s,a)d{\mu_{2}}(r)d{\mu_{3}}(s^{'})   \le \epsilon 
    \label{2_2_5_6}
\end{align}
Now note that the first 2 terms are non-negative and the last two terms are non-positive. We then write the first two terms as 
\begin{eqnarray}
&&    \int_{\{(s,a), r^{'}, s^{'}\}:A1\ge0,A2\ge0}(A1)(A2)d(s,a)d{\mu_{1}}(s,a)d{\mu_{2}}(r)d{\mu_{3}}(s^{'}) \nonumber\\
&=& C_{{k,j}_{1}}\int|Q^{2}_{k,j}-Q^{3}_{k,j}|d{\mu_{1}} \nonumber\\
    &=&  C_{{k,j}_{1}}\mathbb{E}(|Q^{2}_{k,j}-Q^{3}_{k,j}|)_{\mu_{1}} \nonumber\\
    &&\label{2_2_5_7}\\
   && \int_{\{(s,a), r^{'}, s^{'}\}:A1<0,A2<0}(A1)(A2)d(s,a)d{\mu_{1}}(s,a)d{\mu_{2}}(r)d{\mu_{3}}(s^{'}) \nonumber\\&=& C_{{k,j}_{2}}\int|Q^{2}_{k,j}-Q^{3}_{k,j}|d{\nu} \nonumber\\
    &=&  C_{{k,j}_{2}}\mathbb{E}(|Q^{2}_{k,j}-Q^{3}_{k,j}|)_{\mu_{1}}  \nonumber\\
    && \label{2_2_5_8}
\end{eqnarray}
We write the last two terms as 
\begin{eqnarray}
     \int_{\{(s,a), r^{'}, s^{'}\}:A1\ge0,A2<0}(A1)(A2)d{\mu_{1}}(s,a)d{\mu_{2}}(r)d{\mu_{3}}(s^{'}) =C_{{k,j}_{3}}{\epsilon} \label{2_2_5_9}\\
    \int_{\{(s,a), r^{'}, s^{'}\}:A1<0,A2\ge0}(A1)(A2)d{\mu_{1}}(s,a)d{\mu_{2}}(r)d{\mu_{3}}(s^{'}) = C_{{k,j}_{4}}{\epsilon} \label{2_2_5_10}
\end{eqnarray}

Here $C_{{k,j}_{1}}, C_{{k,j}_{2}}, C_{{k,j}_{4}}$ and $C_{{k,j}_{4}}$ are positive constants. Plugging Equations \eqref{2_2_5_7}, \eqref{2_2_5_8}, \eqref{2_2_5_9}, \eqref{2_2_5_10} into Equation \eqref{2_2_5_5}.  

\begin{eqnarray}
    (C_{{k,j}_{1}}+C_{{k,j}_{2}})\mathbb{E}(|Q^{2}_{k,j}-Q^{3}_{k,j}|)_{\mu_{1}} -(C_{{k,j}_{3}}+C_{{k,j}_{4}})\epsilon &\le& \epsilon \label{2_2_5_11}\\
\end{eqnarray}

which implies 

\begin{eqnarray}
    \mathbb{E}(|Q^{2}_{k,j}-Q^{3}_{k,j}|)_{\mu_{1}} &\le& \left(\frac{1+C_{{k,j}_{3}}+C_{{k,j}_{4}}}{C_{{k,j}_{1}}+C_{{k,j}_{2}}}\right)\epsilon \label{2_2_5_12}\\
\end{eqnarray}

Thus we have 

\begin{eqnarray}
    \mathbb{E}(|Q^{2}_{k,j}-Q^{3}_{k,j}|)_{\mu_{1}} &\le& \left(\frac{1+C_{{k,j}_{3}}+C_{{k,j}_{4}}}{C_{{k,j}_{1}}+C_{{k,j}_{2}}}\right)C_{k}\frac{\log(\log)(n_{k,j})}{\sqrt{n_{k,j}}} \label{2_2_5_13}\\
\end{eqnarray}

which implies

\begin{eqnarray}
    \mathbb{E}(|Q^{2}_{k,j}-Q^{3}_{k,j}|)_{\mu_{1}} &\le& \tilde{\mathcal{O}}\left(\frac{\log(\log)(n_{k,j})}{\sqrt{n_{k,j}}}\right) \label{2_2_5_14}\\
\end{eqnarray}

\end{proof}

\subsection{Proof Of Lemma 4} \label{proof_lem_4}

\begin{proof}
For a given iteration $k$  of Algorithm \ref{algo_1} and iteration $j$ of the first inner for loop, the optimization problem to be solved in Algorithm \ref{algo_2} is the following 

\begin{equation}
   \mathcal{L}(\theta) = \frac{1}{n_{k,j}} \sum_{i=1}^{n_{k,j}} \left( Q_{\theta}(s_{i},a_{i}) - \left(r(s_{i},a_{i}) + \gamma\max_{a^{'} \in \mathcal{A}}\gamma Q_{k,j-1}(s^{'},a^{'}) \right) \right)^{2} \label{lem_4_1}
\end{equation}

Here, $Q_{k,j-1}$ is the estimate of the $Q$ function from the iteration $j-1$ and the state action pairs $(s_{i},a_{i})_{i=\{1,\cdots,n\}}$ have been sampled from a distribution over the state action pairs denoted by $\nu$. Since $\min_{\theta}\mathcal{L}(\theta)$ is a non convex optimization problem we instead solve the equivalent convex problem given by

\begin{eqnarray}
      u_{k,j}^{*} &=& \argminA_{u}g_{k,j}(u) =\argminA_{u}|| \sum_{D_{i} \in \tilde{D}}D_{i}X_{k,j}u_{i}-y_{k} ||^{2}_{2}   \label{lem_4_2} \\
      &&\textit{subject to}   |u|_{1} \le \frac{R_{\max}}{1-\gamma} \label{lem_4_2_1}
\end{eqnarray}

Here, $X_{k,j} \in \mathbb{R}^{n_{k,j} \times d}$ is the matrix of sampled state action pairs at iteration $k$, $y_{k} \in \mathbb{R}^{n_{k,j} \times 1}$ is the vector of target values at iteration $k$. $\tilde{D}$ is the set of diagonal matrices obtained from line \ref{a2_l1} of Algorithm \ref{algo_2} and $u \in \mathbb{R}^{|\tilde{D}d| \times 1}$ (Note that we are treating $u$ as a vector here for notational convenience instead of a matrix as was done in Section \ref{Proposed Algorithm}).


The constraint in Equation \eqref{lem_4_2_1} ensures that the all the co-ordinates of the vector $\sum_{D_{i} \in \tilde{D}}D_{i}X_{k,j}u_{i}$  are upper bounded by $\frac{R_{max}}{1-\gamma}$ (since all elements of $X_{k,j}$ are between $0$ and $1$). This ensures that the corresponding neural network represented by Equation \eqref{ReLU_1_2} is also upper bounded by   $\frac{R_{max}}{1-\gamma}$. We use the a projected gradient descent to solve the constrained convex optimization problem which can be written as. 

\begin{eqnarray}
      u_{k,j}^{*} &=& \argminA_{u : |u|_{1} \le \frac{R_{\max}}{1-\gamma}}g_{k,j}(u) =\argminA_{u : |u|_{1} \le \frac{R_{\max}}{1-\gamma}}|| \sum_{D_{i} \in \tilde{D}}D_{i}X_{k,j}u_{i}-y_{k} ||^{2}_{2} \label{lem_4_2_2} 
\end{eqnarray}

From Ang, Andersen(2017). “Continuous Optimization” [Notes]. https://angms.science/doc/CVX  we have that if the step size $\alpha =\frac{||u^{*}_{k,j}||_{2}}{L_{k,j}\sqrt{T_{k,j}+1}}$, after  $T_{k,j}$ iterations of the projected gradient descent algorithm we obtain

\begin{equation}
(g_{k,j}(u_{T_{k,j}})-g_{k,j}(u^{*})) \le  L_{k,j} \frac{||u_{k,j}^{*}||_{2}}{\sqrt{T_{k,j}+1}}  \label{lem_4_2_3}
\end{equation}

Where $L_{k,j}$ is the lipschitz constant of $g_{k,j}(u)$ and $u_{T_{k,j}}$ is the parameter estimate at step $T_{k,j}$. 

Therefore if the number of iteration of the projected gradient descent algorithm $T_{k,j}$  and the step-size $\alpha$ satisfy

\begin{eqnarray}
T_{k,j} &\ge& L_{k,j}^{2}||u_{k,j}^{*}||^{2}_{2}\epsilon^{-2} - 1, \label{lem_4_4}\\
\alpha &=& \frac{||u^{*}_{k}||_{2}}{L_{k,j}\sqrt{T_{k,j}+1}},
\end{eqnarray}

we have 

\begin{equation}
(g_{k,j}(u_{T_{k,j}})-g_{k,j}(u^{*})) \le  \epsilon  \label{lem_4_5}
\end{equation}

Let $(v^{*}_{i},w^{*}_{i})_{i \in (1,\cdots,|\tilde{D}|)}$, $(v^{T_{k,j}}_{i},w^{T_{k,j}}_{i})_{i \in (1,\cdots,|\tilde{D}|)}$ be defined as 
\begin{eqnarray}
 (v^{*}_{i},w^{*}_{i})_{i \in (1,\cdots,|\tilde{D}|)} = \psi_{i}^{'}(u^{*}_{i})_{i \in (1,\cdots,|\tilde{D}|)} \label{lem_4_7}\\
 (v^{T_{k,j}}_{i},w^{T_{k,j}}_{i})_{i \in (1,\cdots,|\tilde{D}|)} = \psi_{i}^{'}(u^{T_{k,j}}_{i})_{i \in (1,\cdots,|\tilde{D}|)} \label{lem_4_8}
\end{eqnarray}

where $\psi^{'}$ is defined in Equation \eqref{ReLU_3_1}.

Further, we define  $\theta_{|\tilde{D}|}^{*}$ and $\theta^{T_{k,j}}$ as 
\begin{eqnarray}
\theta_{|\tilde{D}|}^{*} = \psi(v^{*}_{i},w^{*}_{i})_{i \in (1,\cdots,|\tilde{D}|)}   \label{lem_4_9}\\
\theta^{T_{k,j}} = \psi(v^{T_{k,j}}_{i},w^{T_{k,j}}_{i})_{i \in (1,\cdots,|\tilde{D}|)}   \label{lem_4_10}
\end{eqnarray}

 where  $\psi$ is defined in Equation \eqref{ReLU_2_1},  $\theta_{|\tilde{D}|}^{*} = \argminA_{\theta}\mathcal{L}_{|\tilde{D}|}(\theta)$ for $\mathcal{L}_{|\tilde{D}|}(\theta)$ defined in Appendix \ref{cones_apdx}.

Since  $(g(u_{T_{k,j}})-g(u^{*})) \le \epsilon$, then by Lemma \ref{sup_lem_3}, we have 

\begin{equation}
     \mathcal{L}_{|\tilde{D}|}(\theta^{T_{k,j}}) -\mathcal{L}_{|\tilde{D}|}(\theta_{|\tilde{D}|}^{*}) \le \epsilon  \label{lem_4_11}
\end{equation}

Note that $\mathcal{L}_{|\tilde{D}|}(\theta^{T_{k,j}}) -\mathcal{L}_{|\tilde{D}|}(\theta_{|\tilde{D}|}^{*})$ is a constant value. Thus we can always find constant $C^{'}_{k,j}$ such that

\begin{equation}
     C_{k}^{'}|\theta^{T_{k,j}}-\theta_{|\tilde{D}|}^{*}|_{1}   \le  \mathcal{L}_{|\tilde{D}|}(\theta^{T_{k,j}}) -\mathcal{L}_{|\tilde{D}|}(\theta_{|\tilde{D}|}^{*})  \label{lem_4_14}
\end{equation}

\begin{equation}
     |\theta^{T_{k,j}}-\theta_{|\tilde{D}|}^{*}|_{1}   \le  \frac{\mathcal{L}(\theta^{T_{k,j}}) -\mathcal{L}(\theta^{*})}{C_{k}^{'}}  \label{lem_4_15}\\
\end{equation}

Therefore if we have 

\begin{eqnarray}
T_{k,j} &\ge& L_{k,j}^{2}||u_{k,j}^{*}||^{2}_{2}\epsilon^{-2} - 1, \label{lem_4_16}\\
\alpha_{k,j} &=& \frac{||u^{*}_{k}||_{2}}{L_{k,j}\sqrt{T_{k,j}+1}},
\end{eqnarray}

then we have 

\begin{equation}
     |\theta^{T_{k,j}}-\theta^{*}|_{1}   \le \frac{\epsilon}{C_{k}^{'}}  \label{lem_4_17}\\
\end{equation}

which according to Equation \eqref{lem_4_15} implies that

\begin{equation}
      C_{k}^{'}|\theta^{T_{k,j}}-\theta_{|\tilde{D}|}^{*}|_{1}   \le  \mathcal{L}_{|\tilde{D}|}(\theta^{T_{k,j}}) -\mathcal{L}_{|\tilde{D}|}(\theta_{|\tilde{D}|}^{*})   \le \epsilon \label{lem_4_17_1}\\
\end{equation}

Dividing Equation \eqref{lem_4_17_1} by $C_{k}^{'}$ we get 

\begin{equation}
      |\theta^{T_{k,j}}-\theta_{|\tilde{D}|}^{*}|_{1}   \le  \frac{\mathcal{L}_{|\tilde{D}|}(\theta^{T_{k,j}}) -\mathcal{L}_{|\tilde{D}|}(\theta_{|\tilde{D}|}^{*})}{C_{k}^{'}}   \le \frac{\epsilon}{C_{k}^{'}} \label{lem_4_17_2}\\
\end{equation}

Which implies 

\begin{equation}
      |\theta^{T_{k,j}}-\theta_{|\tilde{D}|}^{*}|_{1}  \le \frac{\epsilon}{C_{k}^{'}} \label{lem_4_17_3}\\
\end{equation}

Assuming $\epsilon$ is small enough such that $\frac{\epsilon}{C_{k}^{'}} < 1$ from lemma \ref{sup_lem_5}, this implies that there exists an $L_{k,j}>0$ such that    

\begin{eqnarray}
     |Q_{\theta^{T_{k,j}}}(s,a)-Q_{\theta_{|\tilde{D}|}^{*}}(s,a)| &\le&  L_{k,j}|\theta^{T_{k,j}}-\theta_{|\tilde{D}|}^{*}|_{1}   \label{lem_4_19}\\
                                         &\le&  \frac{L_{k,j}\epsilon}{C_{k}^{'}} \label{lem_4_20}
\end{eqnarray}
for all $(s,a) \in \mathcal{S}\times\mathcal{A}$. Equation \eqref{lem_4_20} implies that if

\begin{eqnarray}
T_{k,j}  &\ge& L_{k,j}^{2}||u_{k,j}^{*}||^{2}_{2}\epsilon^{-2} - 1,\\
\alpha_{k,j} &=& \frac{||u^{*}_{k}||_{2}}{L_{k,j}\sqrt{T_{k,j}+1}},
\end{eqnarray}

then we have

\begin{eqnarray}
     \mathbb{E}(|Q_{\theta^{T_{k,j}}}(s,a)-Q_{\theta_{|\tilde{D}|}^{*}}(s,a)|) &\le& \frac{L_{k,j}\epsilon}{C_{k}^{'}} \label{lem_4_21}
\end{eqnarray}

By definition in section \ref{thm proof} $Q_{k,j}$ is our estimate of the $Q$ function at the $k^{th}$ iteration of Algorithm $1$ and thus we have  $Q_{\theta^{T_{k,j}}}=Q_{k,j}$ which implies that

\begin{eqnarray}
     \mathbb{E}(|Q_{k,j}(s,a)-Q_{\theta_{\tilde{D}}^{*}}(s,a)|) &\le& \frac{L_{k,j}\epsilon}{C_{k}^{'}} \label{lem_4_22}
\end{eqnarray}

 If we replace $\epsilon$ by $\frac{C^{'}_{k,j}\epsilon}{L_{k,j}}$ in Equation \eqref{lem_4_21}, we get that if 

\begin{eqnarray}
T_{k,j} &\ge& \left({\frac{C^{'}_{k,j}\epsilon}{L_{k,j}}}\right)^{-2}L_{k,j}^{2}||u_{k,j}^{*}||^{2}_{2} - 1, \label{lem_4_23}\\
\alpha_{k,j} &=& \frac{||u^{*}_{k}||_{2}}{L_{k,j}\sqrt{T_{k,j}+1}},
\end{eqnarray}

we have

\begin{eqnarray}
     \mathbb{E}(|Q_{k,j}(s,a)-Q_{\theta_{\tilde{D}}^{*}}(s,a)|) &\le& \epsilon \label{lem_4_24}
\end{eqnarray}

From Assumption \ref{assump_2}, we have that 

\begin{eqnarray}
     \mathbb{E}(|Q_{\theta^{*}}(s,a)-Q_{\theta_{\tilde{D}}^{*}}(s,a)|)&\le& \epsilon_{|\tilde{D}|} \label{lem_4_25}
\end{eqnarray}

where $\theta^{*} = \argminA_{\theta \in \Theta}\mathcal{L}(\theta)$ and by definition of $Q^{3}_{k,j}$ in Definition \ref{def_6}, we have that  $Q^{3}_{k,j}=Q_{\theta^{*}}$. Therefore if we have 

\begin{eqnarray}
T_{k,j} &\ge& \left({\frac{C^{'}_{k,j}\epsilon}{L_{k,j}}}\right)^{-2}L_{k,j}^{2}||u_{k,j}^{*}||^{2}_{2} - 1, \label{lem_4_26}\\
\alpha_{k,j} &=& \frac{||u^{*}_{k}||_{2}}{L_{k,j}\sqrt{T_{k,j}+1}},
\end{eqnarray}

we have 

\begin{eqnarray}
     \mathbb{E}(|Q_{k,j}(s,a)-Q^{3}_{k,j}(s,a)|)_{\nu} &\le&  \mathbb{E}(|Q_{k,j}(s,a)-Q_{\theta_{\tilde{D}}^{*}}(s,a)|)   + \mathbb{E}(|Q^{3}_{k,j}(s,a)-Q_{\theta_{\tilde{D}}^{*}}(s,a)|)   \nonumber\\
                                                        \label{lem_4_27}\\
                                                 &\le&  \epsilon + \epsilon_{|\tilde{D}|} \label{lem_4_28}
\end{eqnarray}

This implies
\begin{eqnarray}
     \mathbb{E}(|Q_{k,j}(s,a)-Q^{3}_{k,j}(s,a)|) &\le&  \tilde{\mathcal{O}}\left(\frac{1}{\sqrt{T_{k,j}}}\right) + \epsilon_{|\tilde{D}|} \label{lem_4_29}
\end{eqnarray}

\end{proof}

\end{document}